\pgfplotsset{compat=1.18}
\theoremstyle{plain}
\newtheorem{theorem}{Theorem}
\newtheorem{lemma}{Lemma}
\newtheorem{prop}{Proposition}
\newtheorem{corollary}{Corollary}
\theoremstyle{definition}
\newtheorem{definition}{Definition}
\newtheorem{conjecture}{Conjecture}
\theoremstyle{remark}
\newtheorem{remark}{Remark}
\newcommand{\mathpdf}[2]{\texorpdfstring{$#1$}{#2}}
\title{Sprecher Networks: \\ A Parameter-Efficient Kolmogorov-Arnold Architecture}
\author{
  Christian Hägg\thanks{Department of Mathematics, Stockholm University, Stockholm, Sweden. Email: \texttt{hagg@math.su.se}} \and
  Kathlén Kohn\thanks{Department of Mathematics, KTH Royal Institute of Technology, Stockholm, Sweden. Email: \texttt{kathlen@kth.se}} \and
  Giovanni Luca Marchetti\thanks{Department of Mathematics, KTH Royal Institute of Technology, Stockholm, Sweden. Email: \texttt{glma@kth.se}} \and
  Boris Shapiro\thanks{Department of Mathematics, Stockholm University, Stockholm, Sweden. Email: \texttt{shapiro@math.su.se}}
}
\date{\today}
\begin{document}

\maketitle

\begin{abstract}
We introduce \emph{Sprecher Networks} (SNs), a family of trainable architectures derived from David Sprecher's 1965 constructive form of the Kolmogorov--Arnold representation. Each SN block implements a ``sum of shifted univariate functions'' using only two shared learnable splines per block, a monotone inner spline $\phi$ and a general outer spline $\Phi$, together with a learnable shift parameter $\eta$ and a mixing vector $\lambda$ shared across all output dimensions. Stacking these blocks yields deep, compositional models; for vector-valued outputs we append an additional non-summed output block.

We also propose an optional lateral mixing operator enabling intra-block communication between output channels with only $O(d_{\mathrm{out}})$ additional parameters. Owing to the vector (not matrix) mixing weights and spline sharing, SNs scale linearly in width, approximately $O\!\left(\sum_{\ell}(d_{\ell-1}+d_{\ell}+G)\right)$ parameters for $G$ spline knots, versus $O\!\left(\sum_{\ell} d_{\ell-1}d_{\ell}\right)$ for dense MLPs and $O\!\left(G\sum_{\ell} d_{\ell-1}d_{\ell}\right)$ for edge-spline KANs. This linear width-scaling is particularly attractive for extremely wide, shallow models, where low depth can translate into low inference latency. Finally, we describe a sequential forward implementation that avoids materializing the $d_{\mathrm{in}}\times d_{\mathrm{out}}$ shifted-input tensor, reducing peak forward-intermediate memory from quadratic to linear in layer width (treating batch size as constant), which is relevant in memory-constrained settings such as on-device/edge inference; we demonstrate deployability via fixed-point real-time digit classification on a resource-constrained embedded device with only 4\,MB RAM. We provide empirical demonstrations on supervised regression, Fashion-MNIST classification (including stable training at 25 hidden layers with residual connections and normalization), and a Poisson PINN, together with controlled comparisons to MLP and KAN baselines.
\end{abstract}

\section{Introduction and historical background}
Approximation of continuous functions by sums of univariate functions has been a recurring theme in mathematical analysis and neural networks. The Kolmogorov--Arnold Representation Theorem \cite{kolmogorov, arnold} established that any multivariate continuous function $f : [0,1]^n \to \mathbb{R}$ can be represented as a finite composition of continuous functions of a single variable and the addition operation. Specifically, Kolmogorov (1957) showed that such functions can be represented as a finite sum involving univariate functions applied to sums of other univariate functions of the inputs.

\vspace{1mm}
\textbf{David Sprecher's 1965 construction.} In his 1965 landmark paper \cite{sprecher1965}, David Sprecher provided a constructive proof and a specific formula realizing the Kolmogorov--Arnold representation. He showed that any continuous function $f:[0,1]^n \to \mathbb{R}$ could be represented as:
\begin{equation}\label{eq:sprecher_original}
f(\mathbf{x})=\sum_{q=0}^{2n}\Phi\Biggl(\,\sum_{p=1}^{n}\lambda_p\,\phi\bigl(x_p+\eta\,q\bigr)+q\Biggr)
\end{equation}
for a single \emph{monotone} inner function $\phi$, a continuous outer function $\Phi$, a constant shift parameter $\eta > 0$, and constants $\lambda_p$. This construction simplified the representation by using only one inner function $\phi$, relying on shifts of the input coordinates ($x_p + \eta q$) and an outer summation index shift ($+q$) to achieve universality. The key insight of \emph{shifting input coordinates} and summing evaluations under inner and outer univariate maps is central to Sprecher's specific result.

\vspace{1mm}
\textbf{From a shallow theorem to a deep architecture.} Sprecher's 1965 result is remarkable because it guarantees universal approximation with a single hidden layer (i.e., a shallow network). This mirrors the history of Multi-Layer Perceptrons, where the Universal Approximation Theorem also guaranteed the sufficiency of a single hidden layer. However, the entire deep learning revolution was built on the empirical discovery that composing multiple layers, while not theoretically necessary for universality, provides vast practical benefits in terms of efficiency and learnability.

This history motivates our central research question: can the components of Sprecher's shallow, highly-structured formula be used as a new kind of building block in a \emph{deep}, compositional architecture? We propose to investigate this by composing what we term \emph{Sprecher blocks}, where the vector output of one block becomes the input to the next. It is crucial to emphasize that this deep, compositional structure is our own architectural proposal, inspired by the paradigm of deep learning, and is not part of Sprecher's original construction or proof of universality. The goal of this paper is not to generalize Sprecher's theorem, but to empirically evaluate whether this theorem-inspired design is a viable and efficient alternative to existing deep learning models when extended into a deep framework.

\vspace{1mm}
\textbf{Modern context.} Understanding how novel architectures relate to established ones is crucial. Standard Multi-Layer Perceptrons (MLPs) \cite{haykin1994neural} employ fixed nonlinear activation functions at nodes and learnable linear weights on edges, justified by the Universal Approximation Theorem \cite{cybenko1989approximation, hornik1989multilayer}. Extensions include networks with \emph{learnable activations on nodes}, sometimes called Adaptive-MLPs or Learnable Activation Networks (LANs) \cite{goyal2019learning, zhang2022neural, liu2024kan}, which retain linear edge weights but make the node non-linearity trainable. Recent work has revitalized interest in leveraging Kolmogorov-Arnold representations for modern deep learning. Notably, Kolmogorov-Arnold Networks (KANs) \cite{liu2024kan} were introduced, proposing an architecture with learnable activation functions (splines) placed on the \emph{edges} of the network graph, replacing traditional linear weights and fixed node activations. This enables KANs to encode structure that MLPs must discover implicitly, leading to benefits such as better scalability, and improved performance on physics-based tasks. Due to their success, KANs have been extended in various forms, and adapted to a variety of tasks -- see \cite{somvanshi2025survey} for a survey. To name a few, alternative learned activations have been explored \cite{li2024kolmogorov, bozorgasl2405wav}, and several architectural extensions have been developed, e.g. convolutional \cite{bodner2024convolutional}, recurrent \cite{genet2024temporal}, and attention-based \cite{yang2024kolmogorov}. Sprecher Networks (SNs), as we detail below, propose a distinct approach, derived directly from Sprecher's 1965 formula. SNs employ function blocks containing shared learnable splines ($\phi, \Phi$), learnable mixing weights ($\lambda$), explicit structural shifts ($\eta, q$), and optionally, lateral mixing connections for intra-block communication. This structure offers a different alternative within the landscape of function approximation networks. Code is available at \url{https://github.com/Zelaron/Sprecher-Network}.

\subsection{Scalability comparison with related architectures}\label{sec:scalability}

Before detailing the SN architecture, we present a key empirical result that motivates our design choices. Recent works have proposed Kolmogorov-Arnold-inspired architectures claiming ``Sprecher-inspired'' efficiency, including GS-KAN \cite{eliasson2025gskan} and SaKAN \cite{sakan2025}. While both invoke Sprecher's 1965 construction as theoretical motivation (and reduce the number of distinct learned univariate functions), they retain dense $O(d_{\mathrm{in}} \times d_{\mathrm{out}})$ weight matrices: GS-KAN uses per-edge learnable weights $\lambda_{p,q}$ connecting all input-output pairs, and SaKAN includes dense residual weight matrices $u_{ij}$ for linear residuals. In contrast, SNs use only \emph{vector} weights $\lambda_i$ (shared across outputs), achieving genuinely $O(d)$ parameter scaling in layer width.

\paragraph{Memory scalability benchmark.}
To demonstrate this difference empirically, we conduct a memory scalability stress test (here \emph{MLP} denotes a stack of dense linear layers with bias): starting at width 512 with depth 3 on 64-dimensional input, we double the width until only one architecture survives without running out of memory (OOM). For each width, we run a single training step (forward, mean-squared error, backward, Adam update) and record the peak \emph{additional} memory used (relative to a cleaned baseline), taking the maximum of accelerator-allocated memory and process resident set size (RSS). To isolate connectivity scaling rather than spline resolution, we use a small fixed univariate-function parameterization (i.e., we replace the usual spline parameterizations used elsewhere in the paper for this stress test): the KAN baseline uses 5 learnable coefficients per edge, while our GS-KAN and SaKAN baselines (and the SN) use a 1-parameter piecewise-linear nonlinearity (PReLU; $x\mapsto \max(0,x) + a\,\min(0,x)$ with learnable slope $a$) for each required univariate function. For SN, we additionally use the memory-efficient evaluation strategy described in Section~\ref{sec:memory_efficient}, which iterates over $q$ (optionally in small chunks) to avoid materializing the full $(B\times d_{\mathrm{in}}\times d_{\mathrm{out}})$ shifted-input tensor. Table~\ref{tab:scalability} shows results on a machine with 8GB of system RAM (exact OOM breakpoints are hardware- and implementation-dependent).

\begin{table}[ht]
\centering
\small
\caption{Scalability benchmark: parameter count and peak \emph{additional memory} (MB) during a single Adam training step (forward+backward+update) as layer width increases. Configuration: \texttt{batch\_size=32}, \texttt{input\_dim=64}, \texttt{depth=3} hidden layers, \texttt{output\_dim=1} (with an explicit final output layer; for SN, this is a final Sprecher block with $d_{\mathrm{out}}=1$); Apple M-series GPU (Metal/MPS), float32. At width 16384, MLP and SaKAN run out of memory; KAN and GS-KAN were skipped after earlier OOMs; SN uses 49,228 parameters and 7.4~MB peak additional memory.}
\label{tab:scalability}
\begin{tabular}{r l r r l}
\toprule
Width & Model & Parameters & Peak $\Delta$Mem (MB) & Status \\
\midrule
512   & MLP       & 559,105       & 8.8     & OK \\
512   & KAN       & 2,787,840     & 42.7    & OK \\
512   & GS-KAN    & 559,109       & 138.3   & OK \\
512   & SaKAN     & 559,109       & 9.0     & OK \\
512   & SN        & 1,612         & 0.5     & OK \\
\midrule
1024  & MLP       & 2,166,785     & 33.6    & OK \\
1024  & KAN       & 10,818,560    & 184.1   & OK \\
1024  & GS-KAN    & 2,166,789     & 536.7   & OK \\
1024  & SaKAN     & 2,166,789     & 34.0    & OK \\
1024  & SN        & 3,148         & 1.1     & OK \\
\midrule
2048  & MLP       & 8,527,873     & 131.1   & OK \\
2048  & KAN       & 42,608,640    & 650.6   & OK \\
2048  & GS-KAN    & 8,527,877     & 2,113.3 & OK \\
2048  & SaKAN     & 8,527,877     & 132.1   & OK \\
2048  & SN        & 6,220         & 2.0     & OK \\
\midrule
4096  & MLP       & 33,832,961    & 518.2   & OK \\
4096  & KAN       & 169,103,360   & 2,581.6 & OK \\
4096  & GS-KAN    & 33,832,965    & ---     & OOM \\
4096  & SaKAN     & 33,832,965    & 520.1   & OK \\
4096  & SN        & 12,364        & 4.2     & OK \\
\midrule
8192  & MLP       & 134,774,785   & 2,060.4 & OK \\
8192  & KAN       & 673,751,040   & ---     & OOM \\
8192  & GS-KAN    & 134,774,789   & ---     & SKIP(after OOM) \\
8192  & SaKAN     & 134,774,789   & 2,064.3 & OK \\
8192  & SN        & 24,652        & 8.3     & OK \\
\midrule
16384 & MLP       & 537,985,025   & ---     & OOM \\
16384 & KAN       & 2,689,679,360 & ---     & SKIP(after OOM) \\
16384 & GS-KAN    & 537,985,029   & ---     & SKIP(after OOM) \\
16384 & SaKAN     & 537,985,029   & ---     & OOM \\
\textbf{16384} & \textbf{SN} & \textbf{49,228} & \textbf{7.4} & \textbf{OK} \\
\bottomrule
\end{tabular}
\end{table}

At width 4096, GS-KAN runs out of memory and is therefore not re-run at larger widths (reported as \texttt{SKIP(after OOM)}). In a straightforward fully vectorized evaluation, GS-KAN can OOM early despite MLP-like parameter counts because evaluating the shifted activations $\psi(x_p+\epsilon_q)$ for all $(p,q)$ at once materializes a tensor of shape $B \times N_{\mathrm{in}} \times N_{\mathrm{out}}$ (here $B=32$). At width 8192, standard KAN runs out of memory and is likewise skipped at width 16384. At width 16384, both MLP and SaKAN run out of memory, leaving \textbf{SN as the only architecture that completes the step} (49,228 parameters, 7.4~MB peak additional memory). For reference, the computed parameter counts at this scale are 537,985,025 (MLP), 537,985,029 (SaKAN/GS-KAN), and 2,689,679,360 (KAN).

\paragraph{Capacity utilization.}
A natural question is whether the extreme parameter efficiency comes at the cost of learning capacity. To verify that the wide SN actually \emph{uses} its capacity rather than merely allocating it, we train the width-16384 SN (using the same lightweight configuration as in Table~\ref{tab:scalability}, including the memory-efficient sequential evaluation from Section~\ref{sec:memory_efficient}) on a 64-dimensional regression task adapted from \cite{liu2024kan}: $f(\mathbf{x}) = \exp\bigl(\frac{1}{64}\sum_{i=1}^{64} \sin^2(\frac{\pi x_i}{2})\bigr)$ for $\mathbf{x} \in [0,1]^{64}$. We optimize mean-squared error on a fixed batch of 32 random samples for 400 epochs with Adam (learning rate $0.001$). Figure~\ref{fig:scalability_training} shows the best-so-far \emph{training} loss (the minimum loss observed up to each epoch), which is robust to occasional loss spikes in this high-dimensional setting. The best loss decreases from $3.3867e+00$ to $6.7672e-02$, indicating that the model remains trainable and can substantially reduce loss even at extreme parameter efficiency.

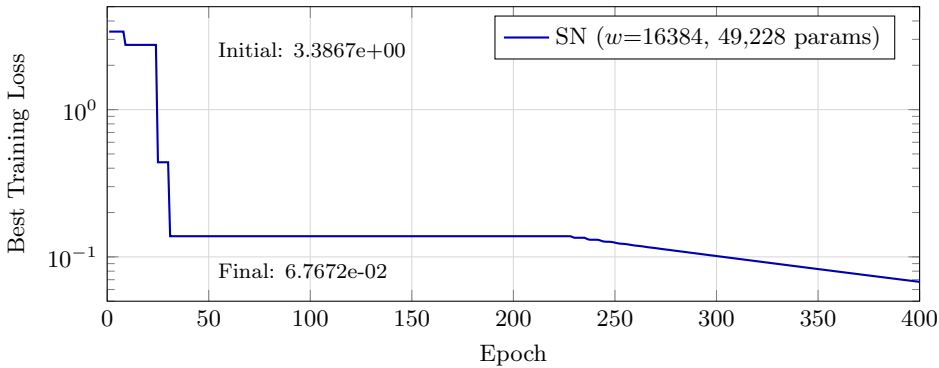
\begin{figure}[ht]
\centering
\begin{tikzpicture}
\begin{axis}[
    width=0.75\textwidth,
    height=5.5cm,
    xlabel={Epoch},
    ylabel={Best Training Loss},
    ymode=log,
    xmin=0, xmax=400,
    ymin=0.05, ymax=5,
    grid=major,
    grid style={gray!30},
    legend pos=north east,
    legend style={font=\small},
    tick label style={font=\small},
    label style={font=\small},
]
\addplot[
    color=blue!70!black,
    thick,
    mark=none,
] coordinates {
    (1, 3.3867e+00) (8, 3.3867e+00) (9, 2.7513e+00) (24, 2.7513e+00) 
    (25, 4.3903e-01) (30, 4.3903e-01) (31, 1.3813e-01) (228, 1.3813e-01) 
    (229, 1.3655e-01) (230, 1.3484e-01) (235, 1.3484e-01) (236, 1.3291e-01) 
    (237, 1.3118e-01) (238, 1.3071e-01) (241, 1.3071e-01) (242, 1.3033e-01) 
    (243, 1.2888e-01) (244, 1.2760e-01) (245, 1.2687e-01) (246, 1.2667e-01) 
    (247, 1.2665e-01) (248, 1.2643e-01) (249, 1.2582e-01) (250, 1.2492e-01) 
    (251, 1.2400e-01) (252, 1.2330e-01) (253, 1.2287e-01) (254, 1.2260e-01) 
    (255, 1.2229e-01) (256, 1.2180e-01) (257, 1.2115e-01) (258, 1.2046e-01) 
    (259, 1.1983e-01) (260, 1.1934e-01) (261, 1.1896e-01) (262, 1.1857e-01) 
    (263, 1.1812e-01) (264, 1.1758e-01) (265, 1.1700e-01) (266, 1.1645e-01) 
    (267, 1.1596e-01) (268, 1.1552e-01) (269, 1.1510e-01) (270, 1.1466e-01) 
    (271, 1.1417e-01) (272, 1.1366e-01) (273, 1.1316e-01) (274, 1.1268e-01) 
    (275, 1.1223e-01) (276, 1.1180e-01) (277, 1.1136e-01) (278, 1.1090e-01) 
    (279, 1.1042e-01) (280, 1.0995e-01) (281, 1.0950e-01) (282, 1.0905e-01) 
    (283, 1.0862e-01) (284, 1.0818e-01) (285, 1.0774e-01) (286, 1.0729e-01) 
    (287, 1.0684e-01) (288, 1.0640e-01) (289, 1.0597e-01) (290, 1.0554e-01) 
    (291, 1.0511e-01) (292, 1.0468e-01) (293, 1.0425e-01) (294, 1.0382e-01) 
    (295, 1.0339e-01) (296, 1.0297e-01) (297, 1.0255e-01) (298, 1.0214e-01) 
    (299, 1.0172e-01) (300, 1.0131e-01) (301, 1.0089e-01) (302, 1.0048e-01) 
    (303, 1.0007e-01) (304, 9.9660e-02) (305, 9.9255e-02) (306, 9.8850e-02) 
    (307, 9.8446e-02) (308, 9.8043e-02) (309, 9.7642e-02) (310, 9.7244e-02) 
    (311, 9.6849e-02) (312, 9.6454e-02) (313, 9.6061e-02) (314, 9.5670e-02) 
    (315, 9.5280e-02) (316, 9.4891e-02) (317, 9.4505e-02) (318, 9.4121e-02) 
    (319, 9.3739e-02) (320, 9.3358e-02) (321, 9.2978e-02) (322, 9.2600e-02) 
    (323, 9.2224e-02) (324, 9.1849e-02) (325, 9.1477e-02) (326, 9.1106e-02) 
    (327, 9.0736e-02) (328, 9.0368e-02) (329, 9.0002e-02) (330, 8.9637e-02) 
    (331, 8.9274e-02) (332, 8.8912e-02) (333, 8.8552e-02) (334, 8.8193e-02) 
    (335, 8.7836e-02) (336, 8.7480e-02) (337, 8.7125e-02) (338, 8.6773e-02) 
    (339, 8.6422e-02) (340, 8.6072e-02) (341, 8.5723e-02) (342, 8.5377e-02) 
    (343, 8.5031e-02) (344, 8.4688e-02) (345, 8.4346e-02) (346, 8.4005e-02) 
    (347, 8.3666e-02) (348, 8.3329e-02) (349, 8.2993e-02) (350, 8.2658e-02) 
    (351, 8.2325e-02) (352, 8.1993e-02) (353, 8.1663e-02) (354, 8.1335e-02) 
    (355, 8.1008e-02) (356, 8.0682e-02) (357, 8.0357e-02) (358, 8.0034e-02) 
    (359, 7.9713e-02) (360, 7.9393e-02) (361, 7.9074e-02) (362, 7.8757e-02) 
    (363, 7.8441e-02) (364, 7.8126e-02) (365, 7.7813e-02) (366, 7.7501e-02) 
    (367, 7.7191e-02) (368, 7.6881e-02) (369, 7.6574e-02) (370, 7.6267e-02) 
    (371, 7.5962e-02) (372, 7.5658e-02) (373, 7.5355e-02) (374, 7.5054e-02) 
    (375, 7.4753e-02) (376, 7.4455e-02) (377, 7.4157e-02) (378, 7.3861e-02) 
    (379, 7.3566e-02) (380, 7.3273e-02) (381, 7.2980e-02) (382, 7.2689e-02) 
    (383, 7.2400e-02) (384, 7.2112e-02) (385, 7.1825e-02) (386, 7.1540e-02) 
    (387, 7.1256e-02) (388, 7.0973e-02) (389, 7.0691e-02) (390, 7.0410e-02) 
    (391, 7.0131e-02) (392, 6.9853e-02) (393, 6.9576e-02) (394, 6.9301e-02) 
    (395, 6.9026e-02) (396, 6.8753e-02) (397, 6.8481e-02) (398, 6.8210e-02) 
    (399, 6.7940e-02) (400, 6.7672e-02)
};
\addlegendentry{SN ($w{=}16384$, 49,228 params)}

\node[anchor=west, font=\footnotesize] at (axis cs:50,0.08) {Final: 6.7672e-02};
\node[anchor=west, font=\footnotesize] at (axis cs:50,2.5) {Initial: 3.3867e+00};

\end{axis}
\end{tikzpicture}
\caption{Training curve for SN at width 16384 (the only model that completes the memory benchmark at this scale) on a 64-dimensional regression task. The best-so-far training loss decreases from 3.3867e+00 to 6.7672e-02 over 400 epochs, indicating that the model remains trainable and can substantially reduce loss despite its extreme parameter efficiency.}
\label{fig:scalability_training}
\end{figure}

\paragraph{Why competitors fail to scale.}
The results in Table~\ref{tab:scalability} reflect two distinct scaling bottlenecks: (i) \emph{quadratic parameter growth} from dense $N_{\mathrm{in}}\times N_{\mathrm{out}}$ matrices (and their optimizer state), and (ii) \emph{quadratic edge-activation working memory} when edgewise nonlinearities are evaluated for all $(p,q)$ at once. GS-KAN \cite{eliasson2025gskan} parameterizes edge functions as $\lambda_{p,q} \cdot \psi(x_p + \epsilon_q)$, where $\lambda_{p,q}$ is a learnable weight for \emph{each} input-output pair $(p,q)$---this retains a dense $d_{\mathrm{in}}\times d_{\mathrm{out}}$ matrix and yields $O(d_{\mathrm{in}} \times d_{\mathrm{out}})$ parameters per layer despite sharing the basis function $\psi$. In a straightforward fully vectorized evaluation, evaluating $\psi(x_p+\epsilon_q)$ for all $(p,q)$ at once materializes a $B \times N_{\mathrm{in}} \times N_{\mathrm{out}}$ activation tensor (here $B=32$), explaining the early OOM of GS-KAN at width 4096 even though its parameter count nearly matches an MLP at the same width. This activation-memory bottleneck is implementation-dependent and can be alleviated with sequential/chunked evaluation, but the dense $\lambda_{p,q}$ matrix still imposes $O(N_{\mathrm{in}} \times N_{\mathrm{out}})$ parameter (and optimizer-state) scaling. SaKAN \cite{sakan2025} similarly reduces the number of distinct learned univariate functions but retains a dense linear residual matrix $u_{ij} \in \mathbb{R}^{n_{\mathrm{out}} \times n_{\mathrm{in}}}$ in each layer (see their Eq.~8), so its width-scaling remains dominated by $O(N_{\mathrm{in}} \times N_{\mathrm{out}})$ parameters (even when techniques such as gradient-free spline evaluation reduce spline-gradient memory). Standard KANs \cite{liu2024kan} place learned functions on every edge, giving $O(G\,N_{\mathrm{in}}\times N_{\mathrm{out}})$ parameters.

In contrast, SNs follow Sprecher's original construction more faithfully: the mixing weights $\lambda_i$ are \emph{shared across all output dimensions}, collapsing the weight tensor from a full matrix to a single vector and yielding only $O(N_{\mathrm{in}})$ mixing-weight parameters per block (plus the shared spline parameters, independent of $N_{\mathrm{out}}$; optional lateral mixing adds at most linear $O(N_{\mathrm{out}})$ parameters, and cyclic/node residual connections add at most linear $O(\max\{N_{\mathrm{in}},N_{\mathrm{out}}\})$ parameters, whereas a dense linear residual projection would instead contribute $O(N_{\mathrm{in}}N_{\mathrm{out}})$ parameters). Combined with the memory-efficient sequential evaluation described in Section~\ref{sec:memory_efficient} (and, when needed, checkpointing when training), this yields linear parameter scaling and small working-memory growth in practice. This linear scaling is particularly advantageous for \emph{shallow but very wide} networks: while MLPs and KANs quickly exhaust memory as layer width increases (Table~\ref{tab:scalability}), SNs can accommodate extremely wide layers with minimal overhead. Such architectures may benefit low-latency settings where fewer sequential blocks reduce depth-induced latency and the per-block computation is highly parallelizable, though the arithmetic cost of a block still scales as $O(d_{\mathrm{in}}d_{\mathrm{out}})$. They are also natural when the target function is well-approximated by a single compositional step with many output channels. While this weight sharing imposes a structural constraint, Sprecher's theorem guarantees it suffices for universal approximation in single-layer networks, and our experiments suggest it remains effective in deep compositions over the depths explored in Section~\ref{sec:experiments}.

\medskip
\noindent Having established the scalability advantages of SNs, we now detail the architecture (Section~\ref{sec:motivation}), provide theoretical analysis (Section~\ref{sec:universality}), and present additional empirical demonstrations (Section~\ref{sec:experiments}).

\section{Motivation and overview of Sprecher Networks}\label{sec:motivation}
While MLPs are the workhorse of deep learning, architectures inspired by Kolmogorov--Arnold superposition (KAS) representations offer potential benefits, particularly in interpretability and potentially parameter efficiency for certain function classes. KANs explore one direction by placing learnable functions on edges. Our \emph{Sprecher Networks} (SNs) explore a different direction, aiming to directly implement Sprecher's constructive formula within a trainable framework and extend it to deeper architectures.

SNs are built upon the following principles, directly reflecting Sprecher's formula:
\begin{itemize}
    \item Each functional block (mapping between layers) is organized around a shared \emph{monotone} spline $\phi(\cdot)$ and a shared \emph{general} spline $\Phi(\cdot)$, both learnable.
    \item Each block incorporates a learnable scalar shift $\eta$ applied to inputs based on the output index $q$.
    \item Each block includes learnable mixing weights $\lambda_{i}$ (a vector, not a matrix), shared across all output dimensions.
    \item The structure explicitly includes the additive shift $\alpha q$ inside the outer spline $\Phi$, where $\alpha$ is a scaling factor (typically $\alpha = 1$) that maintains consistency with Sprecher's formulation.
    \item Optionally, blocks can include lateral mixing connections that allow output dimensions to exchange information before the outer spline transformation, enhancing expressivity with minimal parameter overhead.
\end{itemize}
Our architecture generalizes this classical single-layer shift-and-sum construction to a multi-layer network by composing these functional units, which we term \emph{Sprecher blocks}. The mapping from one hidden layer representation to the next is realized by such a block. Unlike MLPs with fixed node activations, LANs with learnable node activations, or KANs with learnable edge activations, SNs concentrate their learnable non-linearity into the two shared splines per block, applied in a specific structure involving shifts and learnable linear weights. This imposes a strong inductive bias, trading the flexibility of independent weights/splines for extreme parameter sharing. Diversity in the transformation arises from the mixing weights ($\lambda$), the index-dependent shifts ($q$), and when enabled, the lateral mixing connections.

Concretely, each Sprecher block computes a shared-spline shift-and-sum map parameterized by $(\phi^{(\ell)},\Phi^{(\ell)},\lambda^{(\ell)},\eta^{(\ell)})$ and, when enabled, lateral-mixing parameters $(\tau^{(\ell)},\omega^{(\ell)})$ (see Figure~\ref{fig:block_flow} and the formal operator definition below).
Lateral mixing (if enabled) modifies the pre-activation $s_q^{(\ell)}$ (defined below) \emph{before} $\Phi^{(\ell)}$, whereas residual connections (if enabled) are applied \emph{after} $\Phi^{(\ell)}$ and do not introduce lateral coupling across $q$.
For scalar outputs (summation-head variant), the outputs of the final Sprecher block are aggregated (via summation); for vector outputs, no final summation is applied.

In Sprecher's original work, one layer (block) with $d_{\mathrm{out}} = 2n+1$ outputs (where $n=d_{\mathrm{in}}$) was sufficient for universality. Our approach stacks $L$ Sprecher blocks to create a deep network progression:
$$ d_0 \to d_1 \to \cdots \to d_{L-1} \to d_L, $$
where $d_0=d_{\mathrm{in}}$ is the input dimension, and $d_L$ is the dimension of the final hidden representation before potential aggregation or final mapping. This multi-block composition provides a deeper analog of the KAS construction, aiming for potentially enhanced expressive power or efficiency for complex compositional functions. Although the overall SN family is universal already for $L=1$ (Theorem~\ref{thm:ua_single_layer}), we do not yet have a universality characterization under fixed depth/width constraints for $L>1$ blocks (nor for our vector-output extension); we explore this empirically (see Section~\ref{sec:universality}).

\begin{definition}[Network notation]
Throughout this paper, we denote Sprecher Network architectures using arrow notation of the form $d_{\mathrm{in}}\to[d_1,d_2,\ldots,d_L]\to d_{\mathrm{out}}$, where $d_{\mathrm{in}}$ is the input dimension, $[d_1,d_2,\ldots,d_L]$ represents the hidden layer dimensions (widths), and $d_{\mathrm{out}}$ is the final output dimension of the network. For scalar output ($d_{\mathrm{out}}=1$), we typically aggregate the final block's outputs by summation; alternatively, scalar output can be obtained via the vector-output formulation by including an additional non-summed output block mapping from $d_L$ to $1$ (we state explicitly when this variant is used). For vector output ($d_{\mathrm{out}}>1$), an additional non-summed block maps from $d_L$ to $d_{\mathrm{out}}$. For example, $2\to[5,3,8]\to1$ describes a network with 2-dimensional input, three hidden layers of widths 5, 3, and 8 respectively, and a scalar output (implying the final block's outputs of dimension 8 are summed). $2\to[5,3]\to4$ describes a network with 2-dimensional input, two hidden layers of widths 5 and 3, and a 4-dimensional vector output (implying an additional output block maps from dimension 3 to 4 without summation). When input or output dimensions are clear from context, we may use the abbreviated notation $[d_1,d_2,\ldots,d_L]$ to focus on the hidden layer structure.
\end{definition}

\section{Core architectural details}
In our architecture, the fundamental building unit is the \emph{Sprecher block}. The network is composed of a sequence of Sprecher blocks, each performing a shift-and-sum transformation inspired by Sprecher's original construction.

\subsection{Sprecher block structure}
A Sprecher block transforms an input vector $\mathbf{x} \in \mathbb{R}^{d_{\mathrm{in}}}$ to an output vector $\mathbf{h} \in \mathbb{R}^{d_{\mathrm{out}}}$. The data flow through a single block is illustrated in Figure~\ref{fig:block_flow}. This transformation is implemented using the following shared, learnable components specific to that block:
\begin{itemize}
    \item \textbf{Monotone inner spline $\phi$:} a shared non-decreasing univariate spline (applied coordinate-wise to shifted inputs).
    \item \textbf{Outer spline $\Phi$:} a shared univariate spline applied after summation (no monotonicity constraint).
    \item \textbf{Mixing vector $\lambda\in\mathbb{R}^{d_{\mathrm{in}}}$:} shared across all $d_{\mathrm{out}}$ outputs (vector weights, not a matrix).
    \item \textbf{Shift $\eta\in\mathbb{R}$:} controls the structured input shift $x_i\mapsto x_i+\eta q$.
    \item \textbf{Optional lateral mixing:} $O(d_{\mathrm{out}})$ parameters that couple neighboring channels before applying $\Phi$.
\end{itemize}
For details on spline parameterizations, enforcing monotonicity of $\phi$, and (optional) adaptive spline domains, see Sections~\ref{sec:implementation} and~\ref{sec:theoretical_domains}.

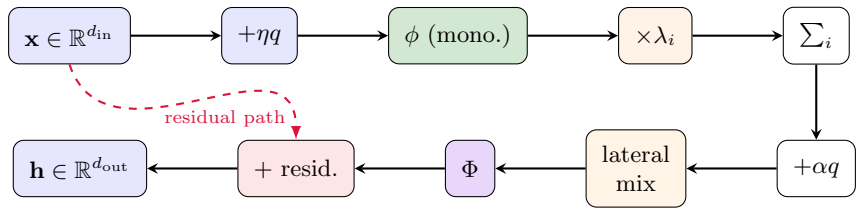
\begin{figure}[ht]
    \centering
    \begin{tikzpicture}[
        >=latex, 
        node distance=1.0cm and 1.2cm, 
        font=\small
    ]
        \definecolor{sprecherblue}{RGB}{70,130,180}
        \definecolor{sprecherred}{RGB}{220,20,60}
        \definecolor{sprechergreen}{RGB}{34,139,34}
        \definecolor{sprecherorange}{RGB}{255,140,0}
        \definecolor{sprecherpurple}{RGB}{147,51,234}

        \tikzstyle{op}=[
            draw, 
            rounded corners, 
            inner sep=6pt, 
            align=center, 
            minimum height=0.75cm
        ]
        \tikzstyle{flowarrow}=[->, thick, >=stealth, rounded corners]

        \node[op, fill=blue!10] (input) {$\mathbf{x} \in \mathbb{R}^{d_{\mathrm{in}}}$};
        \node[op, fill=blue!10, right=of input] (shift) {$+\eta q$};
        \node[op, fill=sprechergreen!20, right=of shift] (phi) {$\phi$ (mono.)};
        \node[op, fill=orange!10, right=of phi] (lambda) {$\times \lambda_i$};
        \node[op, right=of lambda] (sum) {$\sum_i$};

        \node[op, below=of sum] (plusq) {$+ \alpha q$};
        \node[op, fill=sprecherorange!10, left=of plusq] (mix) {lateral\\mix};
        \node[op, fill=sprecherpurple!20, left=of mix] (Phi) {$\Phi$};
        \node[op, fill=red!10, left=of Phi] (res) {+ resid.};
        \node[op, fill=blue!10, left=of res] (output) {$\mathbf{h} \in \mathbb{R}^{d_{\mathrm{out}}}$};

        \draw[flowarrow] (input) -- (shift);
        \draw[flowarrow] (shift) -- (phi);
        \draw[flowarrow] (phi) -- (lambda);
        \draw[flowarrow] (lambda) -- (sum);
        \draw[flowarrow] (sum) -- (plusq);
        \draw[flowarrow] (plusq) -- (mix);
        \draw[flowarrow] (mix) -- (Phi);
        \draw[flowarrow] (Phi) -- (res);
        \draw[flowarrow] (res) -- (output);

        \draw[->, dashed, sprecherred, thick] (input.south) 
            .. controls +(0.5,-1.5) and +(0,1.5) .. 
            node[pos=0.5, right, xshift=-16pt, yshift=-6pt, font=\scriptsize, text=sprecherred] {residual path}
            (res.north);

    \end{tikzpicture}
    \caption{Data flow through a single Sprecher block. Each input $x_i$ is shifted by $\eta q$ (where $q$ indexes outputs), passed through the shared monotonic spline $\phi$, weighted by $\lambda_i$, and summed. The pre-activation $s_q = \sum_i \lambda_i \phi(x_i + \eta q) + \alpha q$ undergoes optional lateral mixing before being transformed by the shared general spline $\Phi$. Residual connections (dashed) provide direct gradient paths.}
    \label{fig:block_flow}
\end{figure}

Concretely, we first define a single Sprecher block as an operator. Let $B^{(\ell)}: \mathbb{R}^{d_{\ell-1}} \to \mathbb{R}^{d_\ell}$ denote the $\ell$-th Sprecher block with parameters $\phi^{(\ell)}, \Phi^{(\ell)}, \eta^{(\ell)}, \lambda^{(\ell)}$, a fixed spacing constant $\alpha$, and optionally $\tau^{(\ell)}, \omega^{(\ell)}$. Given an input vector $\mathbf{x} = (x_1, \dots, x_{d_{\mathrm{in}}}) \in \mathbb{R}^{d_{\mathrm{in}}}$, the block computes its output vector $\mathbf{h} \in \mathbb{R}^{d_{\mathrm{out}}}$ component-wise as:
\begin{equation}\label{eq:SN}
[B^{(\ell)}(\mathbf{x})]_q = \Phi^{(\ell)}\Biggl(\,s_q^{(\ell)} + \tau^{(\ell)} \sum_{j \in \mathcal{N}(q)} \omega_{q,j}^{(\ell)} s_j^{(\ell)}\Biggr),
\end{equation}

where $s_q^{(\ell)} = \sum_{i=1}^{d_{\mathrm{in}}} \lambda_i^{(\ell)} \phi^{(\ell)}(x_i + \eta^{(\ell)} q) + \alpha q$ represents the pre-mixing activation, $\alpha$ is the fixed channel-spacing constant introduced in Section~\ref{sec:motivation}, and $\mathcal{N}(q)$ denotes the neighborhood structure:
\begin{itemize}
    \item \textbf{No mixing:} $\mathcal{N}(q) = \emptyset$ (reduces to original formulation)
    \item \textbf{Cyclic:} $\mathcal{N}(q) = \{(q+1) \bmod d_{\mathrm{out}}\}$ with a single weight $\omega_{q,j}^{(\ell)} = \omega_q^{(\ell)}$
    \item \textbf{Bidirectional:} $\mathcal{N}(q) = \{(q-1) \bmod d_{\mathrm{out}}, (q+1) \bmod d_{\mathrm{out}}\}$ with separate weights $\omega_{q,(q-1)\bmod d_{\mathrm{out}}}^{(\ell)}$ and $\omega_{q,(q+1)\bmod d_{\mathrm{out}}}^{(\ell)}$
\end{itemize}

While $\alpha = 1$ maintains theoretical fidelity, alternative values may be explored to improve optimization dynamics in deeper networks. Note that $q$ serves dual roles here: it indexes output channels, and it also acts as the fixed one-dimensional coordinate used by the additive shift terms. Although we write $q = 0, \ldots, d_{\mathrm{out}}-1$ for notational convenience, in practice one may instead use any fixed grid of $d_{\mathrm{out}}$ coordinates (e.g., a uniform grid on $[-1,1]$) to keep shifts well-scaled. (Note: we use 0-based indexing for output channels $q$ to align with the additive shift $+\alpha q$; raw inputs use 1-based indices $i=1,\ldots,d_{\mathrm{in}}$ following Sprecher's original convention, while intermediate layer outputs $\mathbf{h}^{(\ell)}_r$ inherit 0-based indexing from their role as output channels of the preceding block.)

In a network with multiple layers, each Sprecher block (indexed by $\ell=1, \dots, L$ or $L+1$) uses its own independent set of shared parameters $(\phi^{(\ell)}, \Phi^{(\ell)}, \eta^{(\ell)}, \lambda^{(\ell)})$ and optionally $(\tau^{(\ell)}, \omega^{(\ell)})$. The block operation implements a specific form of transformation: each input coordinate $x_i$ is first shifted by an amount depending on the output index $q$ and the shared shift parameter $\eta^{(\ell)}$, then passed through the shared monotonic spline $\phi^{(\ell)}$. The results are linearly combined using the learnable mixing weights $\lambda^{(\ell)}_{i}$, shifted again by the output index scaled by $\alpha$. When lateral mixing is enabled, these pre-activation values undergo weighted mixing with neighboring outputs. Finally, the result is passed through the shared general spline $\Phi^{(\ell)}$. Figure~\ref{fig:spline_sharing} illustrates this spline-sharing structure within a single block, highlighting the key architectural difference from KANs. Stacking these blocks creates a deep, compositional representation.

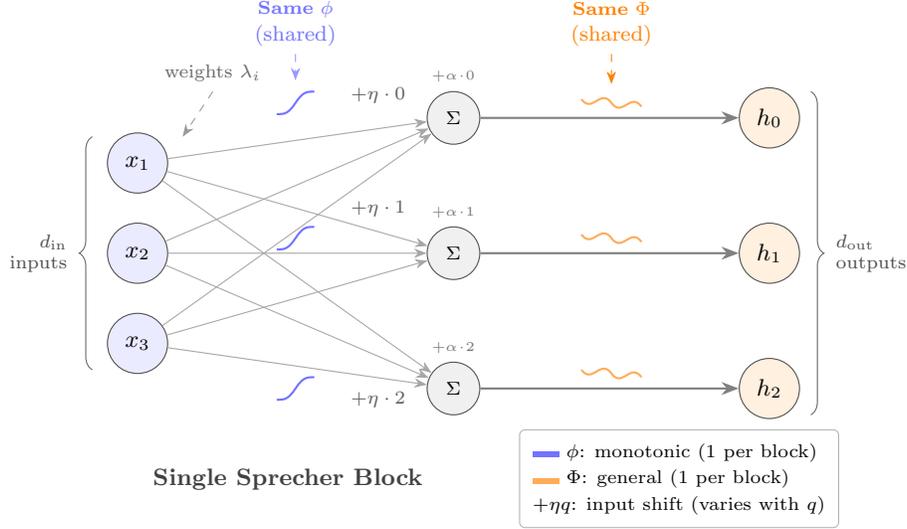
\begin{figure}[ht]
\centering
\begin{tikzpicture}[
    >=Stealth,
    node distance=0.6cm,
    inputnode/.style={circle, draw=black!70, fill=blue!8, minimum size=8mm, inner sep=1pt, font=\small},
    outputnode/.style={circle, draw=black!70, fill=orange!12, minimum size=8mm, inner sep=1pt, font=\small},
    sumnode/.style={circle, draw=black!70, fill=gray!12, minimum size=7mm, inner sep=0pt, font=\scriptsize},
    phicolor/.style={draw=blue!55, thick},
    Phicolor/.style={draw=orange!65, thick},
]

\node[inputnode] (x1) at (0, 1.8) {$x_1$};
\node[inputnode] (x2) at (0, 0.6) {$x_2$};
\node[inputnode] (x3) at (0, -0.6) {$x_3$};

\node[sumnode] (s0) at (4.2, 2.4) {$\Sigma$};
\node[sumnode] (s1) at (4.2, 0.6) {$\Sigma$};
\node[sumnode] (s2) at (4.2, -1.2) {$\Sigma$};

\node[outputnode] (h0) at (8.4, 2.4) {$h_0$};
\node[outputnode] (h1) at (8.4, 0.6) {$h_1$};
\node[outputnode] (h2) at (8.4, -1.2) {$h_2$};

\draw[->, black!35] (x1) -- (s0);
\draw[->, black!35] (x2) -- (s0);
\draw[->, black!35] (x3) -- (s0);
\draw[->, black!35] (x1) -- (s1);
\draw[->, black!35] (x2) -- (s1);
\draw[->, black!35] (x3) -- (s1);
\draw[->, black!35] (x1) -- (s2);
\draw[->, black!35] (x2) -- (s2);
\draw[->, black!35] (x3) -- (s2);

\draw[->, black!50, thick] (s0) -- (h0);
\draw[->, black!50, thick] (s1) -- (h1);
\draw[->, black!50, thick] (s2) -- (h2);

\draw[phicolor] (1.85, 2.45) 
    .. controls (1.95, 2.45) and (2.0, 2.45) .. (2.1, 2.6)
    .. controls (2.2, 2.75) and (2.25, 2.75) .. (2.35, 2.75);
\draw[phicolor] (1.85, 0.65) 
    .. controls (1.95, 0.65) and (2.0, 0.65) .. (2.1, 0.8)
    .. controls (2.2, 0.95) and (2.25, 0.95) .. (2.35, 0.95);
\draw[phicolor] (1.85, -1.35) 
    .. controls (1.95, -1.35) and (2.0, -1.35) .. (2.1, -1.2)
    .. controls (2.2, -1.05) and (2.25, -1.05) .. (2.35, -1.05);

\draw[Phicolor] (5.9, 2.65) 
    .. controls (6.05, 2.45) and (6.15, 2.8) .. (6.3, 2.6)
    .. controls (6.45, 2.4) and (6.55, 2.75) .. (6.7, 2.55);
\draw[Phicolor] (5.9, 0.85) 
    .. controls (6.05, 0.65) and (6.15, 1.0) .. (6.3, 0.8)
    .. controls (6.45, 0.6) and (6.55, 0.95) .. (6.7, 0.75);
\draw[Phicolor] (5.9, -0.95) 
    .. controls (6.05, -1.15) and (6.15, -0.8) .. (6.3, -1.0)
    .. controls (6.45, -1.2) and (6.55, -0.85) .. (6.7, -1.05);

\node[font=\scriptsize, text=black!65] at (3.2, 2.7) {$+\eta \cdot 0$};
\node[font=\scriptsize, text=black!65] at (3.2, 1.2) {$+\eta \cdot 1$};
\node[font=\scriptsize, text=black!65] at (3.2, -1.35) {$+\eta \cdot 2$};

\node[font=\tiny, text=black!55] at (4.2, 2.95) {$+\alpha\!\cdot\!0$};
\node[font=\tiny, text=black!55] at (4.2, 1.15) {$+\alpha\!\cdot\!1$};
\node[font=\tiny, text=black!55] at (4.2, -0.65) {$+\alpha\!\cdot\!2$};

\node[font=\scriptsize, text=blue!55, align=center] at (2.1, 3.85) {\textbf{Same} $\phi$};
\node[font=\footnotesize, text=blue!55] at (2.1, 3.5) {(shared)};
\draw[->, blue!40, dashed] (2.1, 3.25) -- (2.1, 2.9);

\node[font=\scriptsize, text=orange, align=center] at (6.3, 3.85) {\textbf{Same} $\Phi$};
\node[font=\footnotesize, text=orange] at (6.3, 3.5) {(shared)};
\draw[->, orange, dashed] (6.3, 3.25) -- (6.3, 2.85);

\node[font=\scriptsize, text=black!60] at (1.0, 3.0) {weights $\lambda_i$};
\draw[->, black!40, dashed] (1.0, 2.75) -- (0.6, 2.15);

\draw[decorate, decoration={brace, amplitude=5pt}, black!60] 
    (-0.6, -0.95) -- (-0.6, 2.15) node[midway, left=6pt, font=\scriptsize, text=black!70, align=right] {$d_{\mathrm{in}}$\\inputs};

\draw[decorate, decoration={brace, amplitude=5pt, mirror}, black!60]
    (8.95, -1.55) -- (8.95, 2.75) node[midway, right=6pt, font=\scriptsize, text=black!70, align=left] {$d_{\mathrm{out}}$\\outputs};

\node[font=\small\bfseries, text=black!75] at (2.0, -2.4) {Single Sprecher Block};

\node[draw=black!30, rounded corners=2pt, fill=white, inner sep=5pt, font=\scriptsize, align=left] 
    at (7.2, -2.4) {
    \textcolor{blue!55}{\rule{10pt}{2pt}} $\phi$: monotonic (1 per block)\\[2pt]
    \textcolor{orange!65}{\rule{10pt}{2pt}} $\Phi$: general (1 per block)\\[2pt]
    $+\eta q$: input shift (varies with $q$)
};

\end{tikzpicture}
\caption{Internal structure of a Sprecher block showing spline sharing. Unlike KANs where each edge has a unique learnable spline, a Sprecher block uses only \textbf{two shared splines}: one monotonic $\phi$ applied to all shifted inputs, and one general $\Phi$ applied to all weighted sums. Conceptually the computation is still dense (each output depends on every input through the weighted sum), but no parameters are tied to individual edges: all nonlinearity is shared via the two splines and the shared mixing vector $\lambda$. The mixing weights $\lambda_i$ are shared across all output dimensions. Diversity across outputs arises from the index-dependent shifts $+\eta q$ (applied before $\phi$) and $+\alpha q$ (added to each sum). This extreme parameter sharing yields $O(d_{\mathrm{in}} + G)$ base parameters per block (plus $O(d_{\mathrm{out}})$ when lateral mixing is enabled) versus $O(d_{\mathrm{in}} \cdot d_{\mathrm{out}} \cdot G)$ for KANs.}
\label{fig:spline_sharing}
\end{figure}

\begin{remark}[Computational considerations]
While the block operation as written suggests computing all $d_{\mathrm{out}}$ outputs simultaneously, implementations may compute them sequentially to reduce memory usage from $O(B \cdot d_{\mathrm{in}} \cdot d_{\mathrm{out}})$ to $O(B \cdot \max(d_{\mathrm{in}}, d_{\mathrm{out}}))$ where $B$ is the batch size. This sequential computation produces mathematically identical results to the parallel formulation. This is particularly valuable when exploring architectures with wide layers, where memory constraints often limit feasible network configurations before parameter count becomes prohibitive. Section~\ref{sec:memory_efficient} details this sequential computation strategy.
\end{remark}

\subsection{Optional enhancements}\label{sec:optional_enhancements}
Several optional components can enhance the basic Sprecher block:

\begin{itemize}
    \item \textbf{Residual connections (node-wise / cyclic):} When enabled, residual connections are implemented using a dimension-adaptive modulo-based cyclic assignment that maintains architectural coherence. We refer to this $O(\max(d_{\mathrm{in}}, d_{\mathrm{out}}))$-parameter construction as a \emph{node-wise} residual (in contrast to a dense linear residual projection). Like convolutional layers that achieve parameter efficiency through local connectivity patterns, cyclic residuals use periodic patterns to reduce parameters from $O(d_{\mathrm{in}} \times d_{\mathrm{out}})$ to $O(\max(d_{\mathrm{in}}, d_{\mathrm{out}}))$. This method adapts based on dimensional relationships between adjacent layers:
    \begin{itemize}
        \item \textbf{Identity} ($d_{\mathrm{in}} = d_{\mathrm{out}}$): Uses a single learnable weight $w_{\text{res}}$
        \item \textbf{Broadcasting} ($d_{\mathrm{in}} < d_{\mathrm{out}}$): Each output dimension cyclically selects an input via $1+(q \bmod d_{\mathrm{in}})$ and applies a learnable scale
        \item \textbf{Pooling} ($d_{\mathrm{in}} > d_{\mathrm{out}}$): Input dimensions are cyclically assigned to outputs via $(i-1) \bmod d_{\mathrm{out}}$ with learnable weights
    \end{itemize}
    Specifically, the block output with residual connection becomes:
    $$[B^{(\ell)}_{\text{res}}(\mathbf{x})]_q = [B^{(\ell)}(\mathbf{x})]_q + \begin{cases}
    w_{\text{res}} \cdot x_{q+1} & \text{if } d_{\mathrm{in}} = d_{\mathrm{out}}\\
    w_q^{\text{bcast}} \cdot x_{1+(q \bmod d_{\mathrm{in}})} & \text{if } d_{\mathrm{in}} < d_{\mathrm{out}}\\
    \sum_{i: (i-1) \bmod d_{\mathrm{out}} = q} w_i^{\text{pool}} \cdot x_i & \text{if } d_{\mathrm{in}} > d_{\mathrm{out}}
    \end{cases}$$
    where $w^{\text{bcast}} \in \mathbb{R}^{d_{\mathrm{out}}}$ and $w^{\text{pool}} \in \mathbb{R}^{d_{\mathrm{in}}}$ are learnable weight vectors. Empirically, these constrained connections often perform comparably to full projection residuals while using substantially fewer residual parameters.
    
    \item \textbf{Lateral mixing connections (distinct from cyclic residuals):} Inspired by the success of attention mechanisms and lateral connections in vision models, we introduce an optional intra-block communication mechanism. Before applying the outer spline $\Phi^{(\ell)}$, each output dimension can incorporate weighted contributions from neighboring outputs:
    $$\tilde{s}_q^{(\ell)} = s_q^{(\ell)} + \tau^{(\ell)} \cdot \begin{cases}
    \omega_q^{(\ell)} \cdot s_{(q+1) \bmod d_{\mathrm{out}}}^{(\ell)} & \text{(cyclic)} \\
    \omega_{q,\text{fwd}}^{(\ell)} \cdot s_{(q+1) \bmod d_{\mathrm{out}}}^{(\ell)} + \omega_{q,\text{bwd}}^{(\ell)} \cdot s_{(q-1) \bmod d_{\mathrm{out}}}^{(\ell)} & \text{(bidirectional)}
    \end{cases}$$
    This mechanism allows the network to learn correlations between output dimensions while maintaining the parameter efficiency of the architecture, adding only $O(d_{\mathrm{out}})$ parameters per block. Empirically, we find this particularly beneficial for vector-valued outputs and deeper networks.
    
    \item \textbf{Normalization:} Batch normalization can be applied to block outputs to improve training stability, particularly in deeper networks. See Section \ref{sec:normalization} for details.
    
    \item \textbf{Output affine head:} The network may include a learnable affine head applied to the final network output: $f_{\text{final}}(\mathbf{x}) = s_{\text{out}}\, f(\mathbf{x}) + b_{\text{out}}$, where $s_{\text{out}}, b_{\text{out}}\in\mathbb{R}$ (2 parameters) are shared across all output dimensions. For supervised regression tasks we initialize $b_{\text{out}}$ to the mean of the training targets (averaged over all outputs when $m>1$) and set $s_{\text{out}}=0.1$; for classification and PINN experiments we keep the default initialization $s_{\text{out}}=1$ and $b_{\text{out}}=0$.

\end{itemize}

\begin{remark}[Lateral mixing as structured attention]
The lateral mixing mechanism can be viewed as a highly constrained form of self-attention where each output dimension ``attends'' only to its immediate neighbors in a cyclic topology (see Figure~\ref{fig:lateral_mixing}). Unlike full attention which requires $O(d_{\mathrm{out}}^2)$ parameters, our approach maintains linear scaling while still enabling cross-dimensional information flow. This design choice reflects our philosophy of extreme parameter sharing: just as the Sprecher structure shares splines across all connections within a block, lateral mixing shares the communication pattern across all samples while learning only the mixing weights.
\end{remark}

\begin{figure}[ht]
    \centering
    \definecolor{sprecherblue}{RGB}{70,130,180}
    \definecolor{sprechergreen}{RGB}{34,139,34}
    \begin{tikzpicture}[scale=1.1]
        \begin{scope}[xshift=0cm]
            \node[circle, draw, fill=sprecherblue!30, minimum size=12mm, font=\small] (n0) at (0:1.8) {$s_0$};
            \node[circle, draw, fill=sprecherblue!30, minimum size=12mm, font=\small] (n1) at (72:1.8) {$s_1$};
            \node[circle, draw, fill=sprecherblue!30, minimum size=12mm, font=\small] (n2) at (144:1.8) {$s_2$};
            \node[circle, draw, fill=sprecherblue!30, minimum size=12mm, font=\small] (n3) at (216:1.8) {$s_3$};
            \node[circle, draw, fill=sprecherblue!30, minimum size=12mm, font=\small] (n4) at (288:1.8) {$s_4$};
            \draw[->, thick, sprechergreen, bend left=18] (n1) to (n0);
            \node[font=\scriptsize, text=sprechergreen] at (36:2) {$\omega_0$};
            \draw[->, thick, sprechergreen, bend left=18] (n2) to (n1);
            \node[font=\scriptsize, text=sprechergreen] at (108:2) {$\omega_1$};
            \draw[->, thick, sprechergreen, bend left=18] (n3) to (n2);
            \node[font=\scriptsize, text=sprechergreen] at (180:2) {$\omega_2$};
            \draw[->, thick, sprechergreen, bend left=18] (n4) to (n3);
            \node[font=\scriptsize, text=sprechergreen] at (252:2) {$\omega_3$};
            \draw[->, thick, sprechergreen, bend left=18] (n0) to (n4);
            \node[font=\scriptsize, text=sprechergreen] at (324:2) {$\omega_4$};
            \node[font=\small\bfseries] at (0,-3.0) {Cyclic lateral mixing};
            \node[font=\footnotesize, align=center] at (0,-3.6) {$\tilde{s}_q = s_q + \tau \omega_q s_{(q+1) \bmod d_{\mathrm{out}}}$};
        \end{scope}
        
        \begin{scope}[xshift=6.5cm, yshift=-0.5cm]
            \node[draw, rounded corners, fill=gray!5, inner sep=10pt, align=left, font=\small] {
                \textbf{Pre-activation:}\\[3pt]
                $s_q = \sum_{i=1}^{d_{\mathrm{in}}} \lambda_i \phi(x_i + \eta q) + \alpha q$\\[8pt]
                \textbf{After lateral mixing:}\\[3pt]
                $\tilde{s}_q = s_q + \tau \sum_{j \in \mathcal{N}(q)} \omega_{q,j} s_j$\\[8pt]
                \textbf{Output:}\\[3pt]
                $h_q = \Phi(\tilde{s}_q)$
            };
        \end{scope}
    \end{tikzpicture}
    \caption{Lateral mixing enables cross-dimensional communication before the outer spline $\Phi$.
    In the cyclic variant (shown), each pre-activation $s_q$ receives a scaled contribution from its neighbor $s_{(q+1) \bmod d_{\mathrm{out}}}$, parameterized by scale $\tau$ and per-output weights $\omega_q$.
    This adds only $O(d_{\mathrm{out}})$ parameters while breaking the symmetries inherent in the shared-weight structure.
    The bidirectional variant additionally includes contributions from $s_{(q-1) \bmod d_{\mathrm{out}}}$.}
    \label{fig:lateral_mixing}
\end{figure}

\begin{remark}[Lateral mixing resolves the shared-weight optimization plateau]
Wide single-layer Sprecher Networks exhibit a particularly severe optimization challenge beyond the typical difficulties of shallow networks. Due to the shared weight vector $\lambda$ (rather than a weight matrix), all output dimensions process identical linear combinations of inputs, 
differing only through the deterministic index-dependent shifts $x_i\mapsto x_i+\eta q$ (inside $\phi$) and the additive term $+\alpha q$ (in the pre-activation), leading to nearly identical outputs early in training. This symmetry can cause outputs to collapse into similar representations, reducing effective capacity and slowing convergence. 

Cyclic residual connections help by injecting input information directly into each output, 
but in very wide layers this may still be insufficient to break output symmetry and avoid plateaus.
As an illustrative example, a $2\to[120]\to1$ network trained on the 2D regression benchmark (\textsc{Toy-2D-Complex}) $f(x_1,x_2)=\exp(\sin(11x_1))+3x_2+4\sin(8x_2)$ over $[0,1]^2$ (with a fixed $32\times 32$ training grid) reaches an MSE of $8.55$ after $40{,}000$ epochs with cyclic residuals (broadcast weights; $120$ residual parameters). Replacing the cyclic residual with a dense projection residual ($2\times120=240$ residual parameters) provides no improvement (MSE: $8.55$), while adding cyclic lateral mixing (a scalar $\tau$ plus $120$ mixing weights) adds only $121$ parameters and drops the MSE to $2.12$. This demonstrates that the problem is not simply the general difficulty of optimizing wide shallow networks (which affects MLPs too), but 
specifically the need to break symmetries created by the shared-weight constraint. The lateral mixing mechanism provides the minimal cross-dimensional communication needed to differentiate the outputs beyond mere shifting, enabling successful optimization where traditional approaches fail.
\end{remark}

\begin{remark}[Cyclic residuals as dimensional folding]
The cyclic assignment pattern can be understood geometrically as a form of dimensional folding. When $d_{\mathrm{in}} > d_{\mathrm{out}}$, the modulo operation $(i-1) \bmod d_{\mathrm{out}}$ effectively ``wraps'' the higher-dimensional input space around the lower-dimensional output space, analogous to wrapping a string around a cylinder. This creates a regular pattern where input dimensions are distributed uniformly across outputs, ensuring complete coverage and direct gradient paths. The cyclic pattern maximally separates consecutive inputs; for instance, with $d_{\mathrm{in}}=7, d_{\mathrm{out}}=3$, consecutive inputs $\{1,2,3,4,5,6,7\}$ map to outputs $\{0,1,2,0,1,2,0\}$ respectively, preventing the local clustering that might occur with contiguous block assignments.
\end{remark}

\begin{table}[ht]
\centering
\caption{Cyclic residuals vs.\ linear residual projections on synthetic regression tasks. We report the best MSE on a fixed 1{,}024-point training set (measured in evaluation mode) achieved within the stated number of epochs. Toy-2D-Complex refers to the scalar 2D regression benchmark defined above; Toy-2D (vector) refers to the vector-valued 2D benchmark in Fig.~\ref{fig:twovarsprechervector_revised}; Toy-$4\to5$ is a $[0,1]^4\to\mathbb{R}^5$ regression task with outputs $f_1=\sin(2\pi x_1)\cos(\pi x_2)$, $f_2=\exp(-2(x_1^2+x_2^2))$, $f_3=x_3^3-x_4^2+\tfrac{1}{2}\sin(5x_3)$, $f_4=\sigma(3(x_1+x_2-x_3+x_4))$, $f_5=\tfrac{1}{2}\sin(4\pi x_1 x_4)+\tfrac{1}{2}\cos(3\pi x_2 x_3)$ (where $\sigma$ denotes sigmoid). ``Linear'' uses a learned scalar when $d_{\mathrm{in}}=d_{\mathrm{out}}$ and a learned matrix $W\in\mathbb{R}^{d_{\mathrm{in}}\times d_{\mathrm{out}}}$ otherwise; ``cyclic'' uses the pooling/broadcast construction when $d_{\mathrm{in}}\neq d_{\mathrm{out}}$ (and a learned scalar when $d_{\mathrm{in}}=d_{\mathrm{out}}$). The last columns count residual-path parameters only. For tasks with $d_{\mathrm{out}}>1$, MSE is computed as the mean of $(\hat y-y)^2$ over all output dimensions and samples.}
\label{tab:cyclic-residual-ablation}
\small
\resizebox{\textwidth}{!}{%
\begin{tabular}{@{}llrrrrr@{}}
\toprule
Task & Architecture & Epochs & MSE (linear) & MSE (cyclic) & \#resid params (lin./cyc.) & Ratio \\
\midrule
Toy-2D-Complex & $2\rightarrow[10,10,10]\rightarrow1$ & 10{,}000 & $4.33\times10^{-2}$ & $\mathbf{1.79\times10^{-3}}$ & $22/12$ & $1.8\times$ \\
Toy-2D-Complex & $2\rightarrow[10,11,12,13,14,15,16,17]\rightarrow1$ & 50{,}000 & $\mathbf{9.77\times10^{-6}}$ & $3.88\times10^{-5}$ & $1322/108$ & $12.2\times$ \\
Toy-2D-Complex & $2\rightarrow[10,11,10,11,10,11,10,11,10,11,10,11,10,11,10,11]\rightarrow1$ & 50{,}000 & $\mathbf{1.52\times10^{-5}}$ & $3.89\times10^{-5}$ & $1670/175$ & $9.5\times$ \\
Toy-2D (vector) & $2\rightarrow[50,50,50]\rightarrow2$ & 10{,}000 & $7.50\times10^{-6}$ & $\mathbf{2.46\times10^{-6}}$ & $202/102$ & $2.0\times$ \\
Toy-2D (vector) & $2\rightarrow[10,11,12,13,14,15,16,17]\rightarrow2$ & 5{,}000 & $\mathbf{5.47\times10^{-3}}$ & $9.09\times10^{-2}$ & $1356/125$ & $10.8\times$ \\
Toy-$4\to5$ & $4\rightarrow[30,40]\rightarrow5$ & 10{,}000 & $\mathbf{8.01\times10^{-3}}$ & $1.60\times10^{-2}$ & $1520/110$ & $13.8\times$ \\
Toy-$4\to5$ & $4\rightarrow[10,11,10,11,10,11,10,11,10,11,10,11,10,11,10,11]\rightarrow5$ & 10{,}000 & $\mathbf{3.24\times10^{-4}}$ & $5.51\times10^{-3}$ & $1745/186$ & $9.4\times$ \\
\bottomrule
\end{tabular}%
}
\end{table}

\begin{table}[t]
\centering
\caption{Ablation on the 2D vector-valued benchmark defined in Fig.~\ref{fig:twovarsprechervector_revised} (``Toy-2D-Vector''). Mean test RMSE $\pm$ std over 10 seeds (5{,}000 epochs per run). Lower is better. Here, ``+ domain tracking'' updates spline domains without coefficient resampling, whereas ``+ resampling'' enables resampling.}

\label{tab:toy2d_addons_ablation}
\small
\begin{tabular}{lc}
\toprule
Setting & Test RMSE \\
\midrule
Base SN (no residuals/mixing/domain tracking) & $0.325 \pm 0.015$ \\
+ cyclic residuals & $0.226 \pm 0.065$ \\
+ bidirectional mixing & $0.211 \pm 0.055$ \\
+ domain tracking & $0.046 \pm 0.050$ \\
+ resampling (full SN) & $\mathbf{0.029 \pm 0.020}$ \\
\bottomrule
\end{tabular}
\end{table}

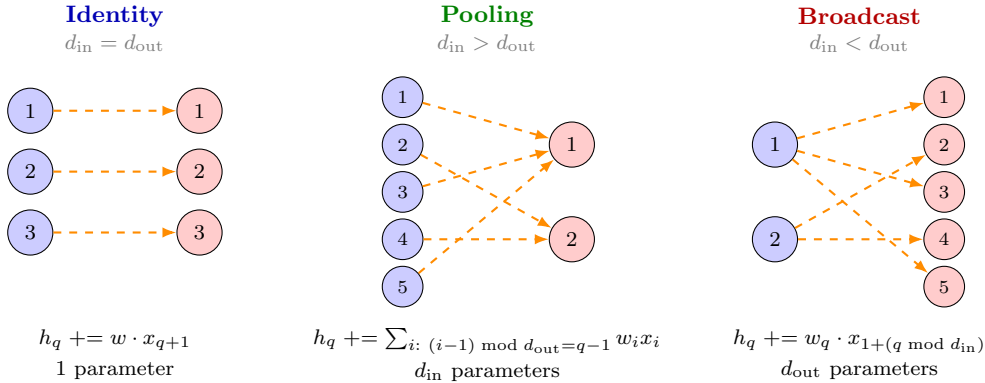
\begin{figure}[ht]
    \centering
    \definecolor{sprecherorange}{RGB}{255,140,0}
    \begin{tikzpicture}[scale=0.9, >={Latex[scale=0.8]}]
        \begin{scope}[xshift=0cm]
            \node[font=\small\bfseries, color=blue!70!black] at (1.25,0.5) {Identity};
            \node[font=\footnotesize, color=gray] at (1.25,0.1) {$d_{\mathrm{in}} = d_{\mathrm{out}}$};
            
            \foreach \i in {1,2,3} {
                \node[circle, draw, fill=blue!20, minimum size=6mm, font=\footnotesize] (x\i) at (0, -\i*0.9) {\i};
                \node[circle, draw, fill=red!20, minimum size=6mm, font=\footnotesize] (y\i) at (2.5, -\i*0.9) {\i};
                \draw[->, dashed, sprecherorange, thick] (x\i) -- (y\i);
            }
            
            \node[font=\footnotesize, align=center] at (1.25,-4.5) {$h_q \mathrel{+}= w \cdot x_{q+1}$\\[2pt]\footnotesize 1 parameter};
        \end{scope}
        
        \begin{scope}[xshift=5.5cm]
            \node[font=\small\bfseries, color=green!50!black] at (1.25,0.5) {Pooling};
            \node[font=\footnotesize, color=gray] at (1.25,0.1) {$d_{\mathrm{in}} > d_{\mathrm{out}}$};
            
            \foreach \i in {1,...,5} {
                \node[circle, draw, fill=blue!20, minimum size=5mm, font=\scriptsize] (px\i) at (0, -\i*0.7) {\i};
            }
            \node[circle, draw, fill=red!20, minimum size=6mm, font=\footnotesize] (py1) at (2.5, -1.4) {1};
            \node[circle, draw, fill=red!20, minimum size=6mm, font=\footnotesize] (py2) at (2.5, -2.8) {2};

            \draw[->, dashed, sprecherorange, thick] (px1) -- (py1);
            \draw[->, dashed, sprecherorange, thick] (px2) -- (py2);
            \draw[->, dashed, sprecherorange, thick] (px3) -- (py1);
            \draw[->, dashed, sprecherorange, thick] (px4) -- (py2);
            \draw[->, dashed, sprecherorange, thick] (px5) -- (py1);

            \node[font=\footnotesize, align=center] at (1.25,-4.5) {$h_q \mathrel{+}= \sum_{i:\ (i-1)\bmod d_{\mathrm{out}} = q-1} w_i x_i$\\[2pt]\footnotesize $d_{\mathrm{in}}$ parameters};
        \end{scope}
        
        \begin{scope}[xshift=11cm]
            \node[font=\small\bfseries, color=red!70!black] at (1.25,0.5) {Broadcast};
            \node[font=\footnotesize, color=gray] at (1.25,0.1) {$d_{\mathrm{in}} < d_{\mathrm{out}}$};
            
            \node[circle, draw, fill=blue!20, minimum size=6mm, font=\footnotesize] (bx1) at (0, -1.4) {1};
            \node[circle, draw, fill=blue!20, minimum size=6mm, font=\footnotesize] (bx2) at (0, -2.8) {2};

            \foreach \j in {1,...,5} {
                \node[circle, draw, fill=red!20, minimum size=5mm, font=\scriptsize] (by\j) at (2.5, -\j*0.7) {\j};
            }
            
            \draw[->, dashed, sprecherorange, thick] (bx1) -- (by1);
            \draw[->, dashed, sprecherorange, thick] (bx2) -- (by2);
            \draw[->, dashed, sprecherorange, thick] (bx1) -- (by3);
            \draw[->, dashed, sprecherorange, thick] (bx2) -- (by4);
            \draw[->, dashed, sprecherorange, thick] (bx1) -- (by5);

            \node[font=\footnotesize, align=center] at (1.25,-4.5) {$h_q \mathrel{+}= w_q \cdot x_{1+(q \bmod d_{\mathrm{in}})}$\\[2pt]\footnotesize $d_{\mathrm{out}}$ parameters};
        \end{scope}
    \end{tikzpicture}
    \caption{Dimension-adaptive cyclic residual connections (indices shown 1-based for readability). \textbf{Left:} When dimensions match, a single scalar weight applies element-wise. \textbf{Center:} When $d_{\mathrm{in}} > d_{\mathrm{out}}$ (pooling), multiple inputs are cyclically assigned to each output via modular indexing. \textbf{Right:} When $d_{\mathrm{in}} < d_{\mathrm{out}}$ (broadcast), inputs are cyclically reused across outputs. All cases maintain $O(\max(d_{\mathrm{in}}, d_{\mathrm{out}}))$ parameters, preserving the linear scaling of the architecture.}
    \label{fig:cyclic_residuals}
\end{figure}

\begin{remark}[Structured sparsity and architectural coherence]
The cyclic residual design can be viewed as implementing a structured sparse projection where each output connects to approximately $d_{\mathrm{in}}/d_{\mathrm{out}}$ inputs (pooling case) or each input connects to approximately $d_{\mathrm{out}}/d_{\mathrm{in}}$ outputs (broadcasting case). This represents another form of weight sharing that complements the main Sprecher architecture: while Sprecher blocks share splines across output dimensions with diversity through shifts, residual connections share structure across connections with diversity through cyclic assignment. The unexpected effectiveness of this severe constraint suggests that for gradient flow and information propagation, the specific connection pattern may matter less than ensuring uniform coverage and balanced paths, a hypothesis that warrants further theoretical investigation.
\end{remark}

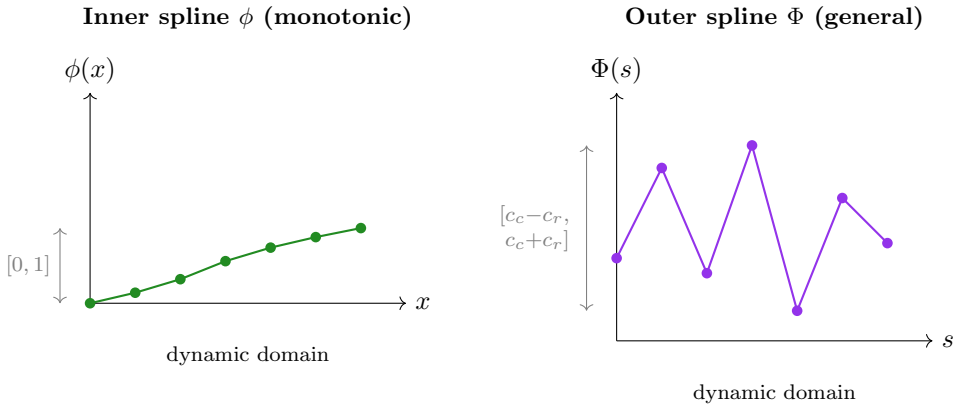
\begin{figure}[ht]
    \centering
    \definecolor{sprechergreen}{RGB}{34,139,34}
    \definecolor{sprecherpurple}{RGB}{147,51,234}
    \begin{tikzpicture}
        \begin{scope}[xshift=0cm]
            \node[font=\small\bfseries] at (2.1, 3.8) {Inner spline $\phi$ (monotonic)};

            \draw[->] (0,0) -- (4.2,0) node[right] {$x$};
            \draw[->] (0,0) -- (0,2.8) node[above] {$\phi(x)$};
            
            \draw[thick, sprechergreen] 
                (0,0) -- (0.6,0.14) -- (1.2,0.32) -- (1.8,0.56) -- 
                (2.4,0.74) -- (3.0,0.88) -- (3.6,1.0);
            
            \foreach \x/\y in {0/0, 0.6/0.14, 1.2/0.32, 1.8/0.56, 2.4/0.74, 3.0/0.88, 3.6/1.0} {
                \fill[sprechergreen] (\x,\y) circle (2pt);
            }
            
            \draw[<->, gray] (-0.4,0) -- (-0.4,1.0);
            \node[left, gray, font=\footnotesize] at (-0.4,0.5) {$[0,1]$};
            
            \node[font=\footnotesize] at (2.1,-0.7) {dynamic domain};
        \end{scope}
        
        \begin{scope}[xshift=7cm]
            \node[font=\small\bfseries] at (2.1, 3.8) {Outer spline $\Phi$ (general)};

            \draw[->] (0,-0.5) -- (4.2,-0.5) node[right] {$s$};
            \draw[->] (0,-0.5) -- (0,2.8) node[above] {$\Phi(s)$};
            
            \draw[thick, sprecherpurple] 
                (0,0.6) -- (0.6,1.8) -- (1.2,0.4) -- (1.8,2.1) -- 
                (2.4,-0.1) -- (3.0,1.4) -- (3.6,0.8);
            
            \foreach \x/\y in {0/0.6, 0.6/1.8, 1.2/0.4, 1.8/2.1, 2.4/-0.1, 3.0/1.4, 3.6/0.8} {
                \fill[sprecherpurple] (\x,\y) circle (2pt);
            }
            
            \draw[<->, gray] (-0.4,-0.1) -- (-0.4,2.1);
            \node[left, gray, font=\footnotesize, align=right] at (-0.5,1.0) {$[c_c{-}c_r,$\\$c_c{+}c_r]$};
            
            \node[font=\footnotesize] at (2.1,-1.2) {dynamic domain};
        \end{scope}
    \end{tikzpicture}
    \caption{The dual spline system in Sprecher Networks. Left: The inner spline $\phi$ is monotonic (non-decreasing) with fixed codomain $[0, 1]$, parameterized via cumulative sums of softplus-transformed increments; it is strictly increasing on its spline domain and uses constant extension outside that domain ($0$ for inputs below the leftmost knot, $1$ above the rightmost knot). Right: The outer spline $\Phi$ is a general (non-monotonic) univariate spline (piecewise-linear or cubic PCHIP) with optional learnable codomain parameters $(c_c, c_r)$ defining center and radius. It uses linear extrapolation outside its domain. Both domains can be updated during training as $\lambda$ and $\eta$ evolve; unless explicitly stated otherwise, we perform dynamic domain updates during training (the PINN setting in Sec.~\ref{sec:pinn_poisson} is an exception: spline domains are kept fixed (no domain updates), as described there).}
    \label{fig:dual_splines}
\end{figure}

\subsection{Normalization considerations}\label{sec:normalization}

The unique structure of Sprecher Networks requires careful consideration when incorporating normalization techniques. While standard neural networks typically normalize individual neuron activations, the shared-spline architecture of SNs suggests applying normalization only at block boundaries: we normalize the block output vector $\mathbf{h}^{(\ell)} \in \mathbb{R}^{d_\ell}$ (featurewise, as in standard BatchNorm) rather than normalizing the internal scalar pre-activations within a block.

Batch normalization can be placed either \emph{before} or \emph{after} a Sprecher block. In both placements it acts on the vector features $\mathbf{h}$, not on the internal scalars $s^{(\ell)}_q$, so the additive $+\alpha q$ and any lateral mixing remain entirely inside $\Phi^{(\ell)}$.
\begin{itemize}
    \item \textbf{After block (default).} Apply BN to the block output $\mathbf{h}^{(\ell)} \in \mathbb{R}^{d_\ell}$.
    \item \textbf{Before block.} Apply BN to the input $\mathbf{h}^{(\ell-1)} \in \mathbb{R}^{d_{\ell-1}}$ before it enters the block.
\end{itemize}

The transformation takes the familiar form:
$$\tilde{\mathbf{h}} = \gamma \odot \frac{\mathbf{h} - \mu}{\sqrt{\sigma^2 + \epsilon}} + \beta$$
where $\gamma, \beta \in \mathbb{R}^{d}$ are learnable affine parameters, and the statistics $\mu$ and $\sigma^2$ are computed per-dimension across the batch.

This approach maintains the parameter efficiency central to SNs: each normalization layer adds only $2d$ parameters, preserving the $O(LN)$ scaling. Moreover, by treating the block output as a unified representation rather than normalizing individual components, the method respects the architectural philosophy that all output dimensions arise from shared transformations through $\phi^{(\ell)}$ and $\Phi^{(\ell)}$.

We often find it beneficial to skip normalization for the first block (default setting), allowing the network to directly process the input features while still stabilizing deeper layers. This also preserves the canonical assumption that raw inputs lie in $[0,1]^n$; if normalization is applied before the first block, bound propagation simply starts from the correspondingly transformed input interval.

When computing theoretical domains (Section~\ref{sec:theoretical_domains}), the bounds must account for normalization, since subsequent $\phi^{(\ell)}$ splines receive \emph{post-normalization} activations:
\begin{itemize}
    \item \textbf{Batch-statistics mode (training, and also the ``batch-stat'' evaluation protocol used in several of our experiments):} Conservative bounds assume standardization to approximately $[-4, 4]$ (covering ~99.99\% of a standard normal distribution); this is a heuristic bound, with affine transformation applied afterward.
    \item \textbf{Running-statistics mode (standard BatchNorm evaluation):} Uses the stored per-channel running statistics (and affine parameters, if enabled) to compute tighter bounds.
\end{itemize}
This means spline-domain updates track the post-normalization ranges, so normalization is explicitly accounted for when setting subsequent spline domains.

In theoretical domain propagation, the interval passed to the next block is always taken \emph{after} applying the normalization map (whether normalization is placed before or after a block), so subsequent $\phi$ splines do not ``miss'' values due to normalization.

Our empirical findings suggest that the combination of normalization with residual connections and lateral mixing proves particularly effective for deeper SNs, enabling successful training of networks with many blocks while maintaining the characteristic parameter efficiency of the architecture.

\subsection{Layer composition and final mapping}
Let $L$ be the number of hidden layers specified by the architecture $[d_1, \dots, d_L]$. In our framework, a ``hidden layer'' corresponds to the vector output of a Sprecher block. The mapping from the representation at layer $\ell-1$ to layer $\ell$ is implemented by the $\ell$-th Sprecher block.

The $\ell$-th block map is given in Eq.~\eqref{eq:SN} (introduced above). Throughout, we write $\tilde{\mathbf{h}}^{(\ell)}$ for the pre-normalization output of block $\ell$ (after $\Phi^{(\ell)}$ and any residual term), and $\mathbf{h}^{(\ell)}$ for the vector passed to the next block after any optional normalization (otherwise $\mathbf{h}^{(\ell)}=\tilde{\mathbf{h}}^{(\ell)}$). More explicitly, the (unmixed) pre-activation for output $q$ is
$$s_q^{(\ell)} := \sum_{i=1}^{d_{\ell-1}} \lambda^{(\ell)}_{i}\,\phi^{(\ell)}\Bigl(\mathbf{h}^{(\ell-1)}_i+\eta^{(\ell)}\,q\Bigr) + \alpha q,$$
and the lateral mixing adds a weighted combination of neighboring $s_j^{(\ell)}$ values before applying $\Phi^{(\ell)}$. The term $R^{(\ell)}(\mathbf{h}^{(\ell-1)})_q$ denotes an (optional) residual connection (e.g., the node-wise/cyclic residual of Fig.~\ref{fig:cyclic_residuals} or a standard learned linear projection); set $R^{(\ell)}\equiv 0$ when residuals are disabled. Recall that $\tilde{\mathbf{h}}^{(\ell)}$ denotes the pre-normalization block output and $\mathbf{h}^{(\ell)}$ the post-normalization vector (Section~\ref{sec:normalization}).

\begin{remark}[On the nature of composition]
Note that in this $L>1$ composition (Eq. \ref{eq:SN}), the argument to the inner spline $\phi^{(\ell)}$ is $\mathbf{h}^{(\ell-1)}_i$, the output of the previous layer, not the original input coordinate $x_i$. This is the fundamental departure from Sprecher's construction and is the defining feature of our proposed deep architecture. The motivation for this compositional structure comes not from Sprecher's work, but from the empirical success of the deep learning paradigm. Each layer processes the complex, transformed output of the layer before it, enabling the network to learn hierarchical representations.
\end{remark}

The composition of these blocks and the final output generation depend on the desired final output dimension $m=d_{\mathrm{out}}$:

\paragraph{(a) Scalar output ($m=1$):}
One can construct a scalar-output SN using exactly $L$ Sprecher blocks. The output of the final block, $\mathbf{h}^{(L)} \in \mathbb{R}^{d_L}$, is aggregated by summation to yield the scalar output:
$$ f(\mathbf{x}) = \sum_{q=0}^{d_L-1} \mathbf{h}^{(L)}_q. $$
If we define the operator for the $\ell$-th block (including residuals when enabled) as $T^{(\ell)}: \mathbb{R}^{d_{\ell-1}} \to \mathbb{R}^{d_\ell}$, where
\begin{equation}\label{eq:SN_full}
\Bigl(T^{(\ell)}(\mathbf{z})\Bigr)_q = \Phi^{(\ell)}\Biggl(\,\sum_{i} \lambda^{(\ell)}_{i}\,\phi^{(\ell)}\Bigl(z_i+\eta^{(\ell)}\,q\Bigr) + \alpha q + \tau^{(\ell)} \sum_{j \in \mathcal{N}(q)} \omega_{q,j}^{(\ell)} s_j^{(\ell)}\Biggr) + R^{(\ell)}(\mathbf{z})_q,
\end{equation}
then the overall function is
$$ f(\mathbf{x}) = \sum_{q=0}^{d_L-1} \Bigl(T^{(L)} \circ T^{(L-1)} \circ \cdots \circ T^{(1)}\Bigr)(\mathbf{x})_q. $$
(The summation index $i$ ranges over the input coordinates to block $\ell$: for the first block, $i=1,\ldots,d_0$ following Sprecher's 1-based convention for raw inputs; for subsequent blocks, $i$ ranges over the preceding block's outputs using 0-based indexing as noted in Section~\ref{sec:motivation}.)
This summed scalar-output form uses $L$ blocks and $2L$ shared spline functions in total (one pair $(\phi^{(\ell)}, \Phi^{(\ell)})$ per block). Unless explicitly stated otherwise, our scalar-output experiments use this summed form. In Section~\ref{sec:scalability} we instead use the $m=1$ vector-valued construction below, appending an additional non-summed output block with $d^{(L+1)}=1$, to align depth counting with the MLP baselines.

\paragraph{(b) Vector-valued output ($m>1$):}
When the target function $f$ maps to $\mathbb{R}^m$ with $m>1$, the network first constructs the $L$ hidden layers as above, yielding a final hidden representation $\mathbf{h}^{(L)} \in \mathbb{R}^{d_L}$. A final output block (block $L+1$) is then appended to map this representation $\mathbf{h}^{(L)}$ to the final output space $\mathbb{R}^m$. This $(L+1)$-th block operates \emph{without} a final summation over its output index. It computes the final output vector $\mathbf{y} \in \mathbb{R}^m$ as:
\begin{equation*}
\begin{aligned}
 y_q &= \Bigl(T^{(L+1)}(\mathbf{h}^{(L)})\Bigr)_q  \\
 &= \Phi^{(L+1)}\Biggl(\,\sum_{r=0}^{d_L-1} \lambda^{(L+1)}_{r}\,\phi^{(L+1)}\Bigl(\mathbf{h}^{(L)}_r+\eta^{(L+1)}\,q\Bigr)+\alpha q + \tau^{(L+1)} \sum_{j \in \mathcal{N}(q)} \omega_{q,j}^{(L+1)} s_j^{(L+1)}\Biggr) + R^{(L+1)}(\mathbf{h}^{(L)})_q, 
\end{aligned}
\end{equation*}
for $q=0,\dots,m-1$. The network output function is then:
\begin{equation}\label{eq:defsn}
f(\mathbf{x}) = \mathbf{y} = \Bigl(T^{(L+1)} \circ T^{(L)} \circ \cdots \circ T^{(1)}\Bigr)(\mathbf{x}) \in \mathbb{R}^m. 
\end{equation}
In this configuration, the network uses $L+1$ blocks and involves $2(L+1)$ shared spline functions. The extra block serves as a trainable output mapping layer, transforming the final hidden representation $\mathbf{h}^{(L)}$ into the desired $m$-dimensional output vector.

\medskip
\noindent\textbf{Summary.} For an architecture with $L$ hidden layers, the sum-aggregated scalar-output SN uses $L$ blocks and $2L$ shared splines. A vector-output SN (with $m>1$) uses $L+1$ blocks and $2(L+1)$ shared splines. This structure provides a natural extension of Sprecher's original scalar formula to the vector-valued setting. In our experiments we apply the learned affine head described in Section~\ref{sec:optional_enhancements} to the final output.
\medskip

We illustrate the vector-output case ($m>1$) for a network architecture $d_0\to[d_1,d_2,d_3]\to m$ (i.e., $L=3$ hidden layers). Let $X^{(0)}$ be the input $\mathbf{x}$.
\begin{center}
\begin{tikzpicture}[
    node distance=0.8cm and 2.2cm, 
    block/.style={font=\small}, 
    dim/.style={font=\footnotesize} 
  ]
    \node (X0)                                      {$X^{(0)}$};
    \node (L0) [below=0.07cm of X0, dim]             {$(\in \mathbb{R}^{d_0})$};
    \node (X1) [right=of X0]                        {$X^{(1)}$};
    \node (L1) [below=0.07cm of X1, dim]             {$(\in \mathbb{R}^{d_1})$};
    \node (X2) [right=of X1]                        {$X^{(2)}$};
    \node (L2) [below=0.07cm of X2, dim]             {$(\in \mathbb{R}^{d_2})$};
    \node (X3) [right=of X2]                        {$X^{(3)}$};
    \node (L3) [right=0.07cm of X3, dim]             {$(\in \mathbb{R}^{d_3})$};
    \node (OB) [below=of X3]                        {Output block 4 $(T^{(4)})$};
    \node (Y)  [below=of OB]                        {$\mathbf{y} \in \mathbb{R}^m$ (Network output)};

    \draw[-{Stealth[length=2mm]}] (X0) -- node[above, block] {Block 1} (X1);
    \draw[-{Stealth[length=2mm]}] (X1) -- node[above, block] {Block 2} (X2);
    \draw[-{Stealth[length=2mm]}] (X2) -- node[above, block] {Block 3} (X3);
    \draw[-{Stealth[length=2mm]}] (X3) -- (OB); 
    \draw[-{Stealth[length=2mm]}] (OB) -- (Y);  
\end{tikzpicture}
\end{center}
Here, $X^{(\ell)} = \mathbf{h}^{(\ell)}$ denotes the output vector of the $\ell$-th Sprecher block (we use both notations interchangeably for clarity in different contexts). Each block $T^{(\ell)}$ internally uses its own pair of shared splines $(\phi^{(\ell)}, \Phi^{(\ell)})$, mixing weights $\lambda^{(\ell)}$, shift $\eta^{(\ell)}$, and optionally lateral mixing parameters $(\tau^{(\ell)}, \omega^{(\ell)})$. The final output block $T^{(4)}$ maps the representation $X^{(3)}$ to the final $m$-dimensional output $\mathbf{y}$ without subsequent summation.

\subsection{Illustrative expansions (sum-aggregated scalar output)}
To further clarify the compositional structure for the sum-aggregated scalar-output case ($m=1$), we write out the full expansions for networks with $L=1, 2, 3$ hidden layers. For conciseness, we omit optional add-ons such as cyclic residual terms and normalization; when enabled, these appear additively (e.g., each block output includes an extra $R^{(\ell)}(\cdot)_q$ term as in \eqref{eq:SN_full}).

\subsubsection{Single hidden layer (\mathpdf{L=1}{L=1})}\label{sec:single_layer}
For a network with architecture $d_{\mathrm{in}}\to[d_1]\to1$ (i.e., $d_0=d_{\mathrm{in}}$), the network computes:
$$f(\mathbf{x}) = \sum_{q=0}^{d_1-1} \mathbf{h}^{(1)}_q = \sum_{q=0}^{d_1-1} \Phi^{(1)}\Biggl(\sum_{i=1}^{d_0} \lambda^{(1)}_{i}\,\phi^{(1)}\Bigl(x_i+\eta^{(1)}\,q\Bigr)+\alpha q + \tau^{(1)} \sum_{j \in \mathcal{N}(q)} \omega_{q,j}^{(1)} s_j^{(1)}\Biggr).$$
When lateral mixing is disabled ($\tau^{(1)} = 0$ or $\mathcal{N}(q) = \emptyset$), this reduces to Sprecher's 1965 construction if we choose $d_1=2d_0+1$, identify $\phi^{(1)}=\phi$, $\Phi^{(1)}=\Phi$, $\lambda^{(1)}_{i} = \lambda_i$, and set $\alpha = 1$.

\subsubsection{Two hidden layers (\mathpdf{L=2}{L=2})}
Let the architecture be $d_0\to[d_1, d_2]\to1$. The intermediate output $\mathbf{h}^{(1)} \in \mathbb{R}^{d_1}$ is computed as:
$$\mathbf{h}^{(1)}_r=\Phi^{(1)}\Bigl(\sum_{i=1}^{d_0}\lambda^{(1)}_{i}\,\phi^{(1)}\Bigl(x_i+\eta^{(1)}\,r\Bigr)+\alpha r + \tau^{(1)} \sum_{j \in \mathcal{N}(r)} \omega_{r,j}^{(1)} s_j^{(1)}\Bigr),\quad r=0,\dots,d_1-1.$$
The second block computes $\mathbf{h}^{(2)} \in \mathbb{R}^{d_2}$ using $\mathbf{h}^{(1)}$ as input:
$$\mathbf{h}^{(2)}_q=\Phi^{(2)}\Bigl(\sum_{r=0}^{d_1-1}\lambda^{(2)}_{r}\,\phi^{(2)}\Bigl(\mathbf{h}^{(1)}_r+\eta^{(2)}\,q\Bigr)+\alpha q + \tau^{(2)} \sum_{j \in \mathcal{N}(q)} \omega_{q,j}^{(2)} s_j^{(2)}\Bigr),\quad q=0,\dots,d_2-1.$$
The final network output is the sum over the components of $\mathbf{h}^{(2)}$: $f(\mathbf{x})=\sum_{q=0}^{d_2-1}\mathbf{h}^{(2)}_q$.

For the base architecture without optional enhancements, the fully expanded form reveals the nested compositional structure:
\begin{equation}\label{eq:two-layer-expanded}
f(\mathbf{x})=\sum_{q=0}^{d_2-1}\Phi^{(2)}\Biggl(\sum_{r=0}^{d_1-1}\lambda^{(2)}_{r}\,\phi^{(2)}\Biggl(\underbrace{\Phi^{(1)}\Biggl(\sum_{i=1}^{d_0}\lambda^{(1)}_{i}\,\phi^{(1)}\Bigl(x_i+\eta^{(1)}\,r\Bigr)+\alpha r\Biggr)}_{\text{Output of first block}}+\eta^{(2)}\,q\Biggr)+\alpha q\Biggr).
\end{equation}
This nested structure, where the output of $\Phi^{(1)}$ becomes the input to $\phi^{(2)}$, fundamentally departs from Sprecher's original construction and represents our empirically-motivated deep architecture.

\begin{remark}[Lateral mixing in expanded form]
When lateral mixing is enabled, the expanded forms become more complex. For instance, in the two-layer case, each $s_r^{(1)}$ term undergoes mixing before the $\Phi^{(1)}$ transformation, and similarly for $s_q^{(2)}$ before $\Phi^{(2)}$. We omit the full expansion for clarity, but note that lateral mixing represents an additional source of inter-dimensional coupling beyond the shifts, potentially enhancing the network's ability to capture correlations between output dimensions.
\end{remark}

\subsubsection{Three hidden layers (\mathpdf{L=3}{L=3})}
Let the architecture be $d_0\to [d_1, d_2, d_3]\to1$. The recursive definition involves:
$$\begin{aligned}
\mathbf{h}^{(1)}_r &= \Phi^{(1)}\Bigl(\sum_{i=1}^{d_0}\lambda^{(1)}_{i}\,\phi^{(1)}\Bigl(x_i+\eta^{(1)}\,r\Bigr)+\alpha r + \tau^{(1)} \sum_{j \in \mathcal{N}(r)} \omega_{r,j}^{(1)} s_j^{(1)}\Bigr),\quad r=0,\dots,d_1-1,\\[1mm]
\mathbf{h}^{(2)}_p &= \Phi^{(2)}\Bigl(\sum_{r=0}^{d_1-1}\lambda^{(2)}_{r}\,\phi^{(2)}\Bigl(\mathbf{h}^{(1)}_r+\eta^{(2)}\,p\Bigr)+\alpha p + \tau^{(2)} \sum_{j \in \mathcal{N}(p)} \omega_{p,j}^{(2)} s_j^{(2)}\Bigr),\quad p=0,\dots,d_2-1,\\[1mm]
\mathbf{h}^{(3)}_q &= \Phi^{(3)}\Bigl(\sum_{p=0}^{d_2-1}\lambda^{(3)}_{p}\,\phi^{(3)}\Bigl(\mathbf{h}^{(2)}_p+\eta^{(3)}\,q\Bigr)+\alpha q + \tau^{(3)} \sum_{j \in \mathcal{N}(q)} \omega_{q,j}^{(3)} s_j^{(3)}\Bigr),\quad q=0,\dots,d_3-1.
\end{aligned}$$
The network output is $f(\mathbf{x})=\sum_{q=0}^{d_3-1}\mathbf{h}^{(3)}_q$. 

\begin{remark}
These expansions highlight the compositional nature where the output of one Sprecher block, which is a vector of transformed values, serves as the input to the next. Each transformation layer involves its own pair of shared splines, learnable parameters, and optionally lateral mixing connections.
\end{remark}

\begin{remark}[Necessity of internal shifts]\label{rem:shiftneeded}
It is tempting to simplify the nested structures, for instance by removing the inner shift terms like $\eta^{(2)}q$ inside $\phi^{(2)}$. One might hypothesize that the outer splines $\Phi^{(\ell)}$ could absorb this shifting effect (yielding a single composite spline per Sprecher block). However, experiments suggest that these internal shifts $\eta^{(\ell)}q$ applied to the inputs of the $\phi^{(\ell)}$ splines are crucial for the effective functioning of deeper Sprecher Networks. Removing them significantly degrades performance. The precise theoretical reason for their necessity in the multi-layer case, beyond their presence in Sprecher's original single-layer formula, warrants further investigation.
\end{remark}

\section{Comparison with related architectures}
To position Sprecher Networks accurately, we compare their core architectural features with Multi-Layer Perceptrons (MLPs), networks with learnable node activations (LANs/Adaptive-MLPs), and Kolmogorov-Arnold Networks (KANs).

\begin{table}[ht]
\centering
\caption{Architectural comparison of neural network families.}
\label{tab:arch_comparison}
\small 
\begin{tabular}{@{}lllll@{}}
\toprule
Feature                   & MLP                 & LAN / Adaptive-MLP & KAN                   & Sprecher Network (SN) \\ \midrule
\textbf{Learnable}        & Linear Weights      & Linear Weights     & Edge Splines          & Block Splines ($\phi, \Phi$) \\
\textbf{Components}       & (on edges)          & + Node Activations &                       & + Mixing Weights ($\lambda$) \\
                          &                     &                    &                       & + Shift Parameter ($\eta$) \\
                          &                     &                    &                       & + Lateral Mixing ($\tau, \omega$) \\
\textbf{Fixed components}   & Node Activations    & ---                & Node Summation        & Node Summation (implicit) \\
                          &                     &                    &                       & + Fixed Shifts ($+\alpha q$) \\
\textbf{Location of}      & Nodes               & Nodes              & Edges                 & Blocks \\
\textbf{Non-linearity}    & (Fixed)             & (Learnable)        & (Learnable)           & (Shared, Learnable) \\
\textbf{Node operation}     & Apply $\sigma(\cdot)$ & Apply $\sigma_{\text{learn}}(\cdot)$ & $\sum (\text{inputs})$ & Implicit in Block Formula \\
\textbf{Parameter sharing}  & None (typically)    & Activations (optional) & None (typically)      & Splines ($\phi, \Phi$) per block \\
\textbf{Intra-layer mixing} & None & None & None & Lateral (cyclic/bidir.) \\
                           &      &      &      & $O(N)$ params \\
\textbf{Residual design}    & Matrix projection   & Matrix projection  & None (typically)      & Cyclic \\
                          & $O(N^2)$ per skip   & $O(N^2)$ per skip  &                       & $O(N)$ per skip \\
\textbf{Theoretical basis}  & UAT                 & UAT                & KAT (inspired)        & Sprecher (direct) \\
\textbf{Param scaling}    & $O(L N^2)$          & $O(L N^2 + L N G)$ & $O(L N^2 G)$          & $O(L N + L G)$ \\
                          &                     & (Approx.)          &                       & (Approx.) \\
\textbf{Memory complexity} & $O(L N^2)$          & $O(L N^2 + L N G)$ & $O(L N^2 G)$          & $O(L N^2)$ parallel \\
                          &                     &                    &                       & $\mathbf{O(L N + L G)}$ sequential \\
\bottomrule
\end{tabular}
\vspace{2mm}
\parbox{\textwidth}{\footnotesize \textit{Notes:} $L$=depth, $N$=average width, $G$=spline grid size/complexity. UAT=Universal Approx.\ Theorem, KAT=Kolmogorov-Arnold Theorem. LAN details often follow KAN Appendix B \cite{liu2024kan}. Forward working memory refers to peak \emph{forward-pass} intermediate storage plus parameters (excluding optimizer state), treating batch size as a constant; during training, reverse-mode autodiff may store additional activations unless checkpointing is used. The ``sequential'' mode for SNs processes output dimensions iteratively rather than in parallel, maintaining mathematical equivalence while reducing memory usage. All architectures may optionally include normalization layers; SNs apply normalization to entire block outputs rather than individual activations. For SNs, the $O(LN+LG)$ parameter scaling assumes the cyclic/node residual; using a full linear residual projection adds an additional $O(\sum_{\ell} d_{\ell-1}d_{\ell})$ parameters.
\textit{The parameter scaling notation uses $N$ to denote a typical or average layer width for simplicity, following \cite{liu2024kan}. For architectures with varying widths $d_\ell$, the $LN^2$ terms should be understood as $\sum_{\ell} d_{\ell-1}d_{\ell}$ (MLP, LAN), the $LN^2G$ term for KAN as $(\sum_{\ell} d_{\ell-1}d_{\ell})G$, and the $LN$ term for SN as $\sum_{\ell} (d_{\ell-1}+d_{\ell})$ (since SN uses weight vectors, not matrices), where the sum is over the relevant blocks/layers, for precise counts.}}
\end{table}

Table \ref{tab:arch_comparison} summarizes the key distinctions between these architectures. MLPs learn edge weights with fixed node activations, LANs add learnable node activations to this structure, KANs move learnability entirely to edge splines while eliminating linear weights, and SNs concentrate learnability in shared block-level splines, block-level shifts, mixing weights, and optionally lateral mixing connections. The critical difference for SNs is their use of weight vectors rather than matrices, which fundamentally reduces the dependence on width from quadratic to linear. This architectural choice can be understood through an analogy with convolutional neural networks: just as CNNs achieve parameter efficiency and improved generalization by sharing weights across spatial locations, SNs share weights across output dimensions within each block. In CNNs, spatial shifts provide the necessary diversity despite weight sharing; in SNs, the shifts $\eta q$ and the additive term $+\alpha q$, along with lateral mixing when enabled, play this diversifying role. This perspective reframes our architectural constraint not as a limitation, but as a principled form of weight sharing motivated by Sprecher's theorem, suggesting that SNs might be viewed as a ``convolutional'' approach to function approximation networks. While this weight sharing is theoretically justified for single-layer networks, its effectiveness in deep compositions remains an empirical finding that warrants further theoretical investigation. This combination of choices leads to SNs' distinctive parameter scaling of $O(L N + L G)$ compared to KANs' $O(L N^2 G)$ and, for fixed spline resolution $G$, parameter-memory scaling of $O(LN)$.\footnote{The $O(LN)$ term includes contributions from input weights ($\lambda$), normalization parameters when used, lateral mixing parameters, and (when used) \emph{cyclic} residual connections. Cyclic residuals contribute at most $\max(d_{\mathrm{in}}, d_{\mathrm{out}})$ parameters per block; a learned linear residual projection instead adds $d_{\mathrm{in}}d_{\mathrm{out}}$ parameters when $d_{\mathrm{in}}\neq d_{\mathrm{out}}$ (and only a single scalar gate when $d_{\mathrm{in}}=d_{\mathrm{out}}$), so it is quadratic in width only on dimension-changing blocks.}

Here, we provide a precise comparison between LANs and SNs. While the following proposition shows that SNs can be expressed as special cases of LANs with specific structural constraints, it is important to note that Sprecher's construction guarantees that this constrained form retains full expressivity in the single-layer case. This suggests that the extreme parameter sharing and structural constraints in SNs may serve as a beneficial inductive bias rather than a limitation.

\begin{definition}
A LAN is an MLP with learnable activation. More precisely, the model is defined as: 
$$
f(\mathbf{x}) = A^{(L)} \circ \sigma^{(L-1)} \circ A^{(L-1)} \circ \sigma^{(L-2)} \circ \cdots \circ \sigma^{(1)} \circ A^{(1)} (\mathbf{x}),
$$
where $A^{(k)} \colon \mathbb{R}^{d_{k-1}} \rightarrow \mathbb{R}^{d_k}$ is an affine map, and $\sigma^{(k)} \colon \mathbb{R} \rightarrow \mathbb{R}  $ is the activation function (applied coordinate-wise). In an MLP, the trainable parameters are the weights $W^{(k)}$ and biases $b^{(k)}$ of $A^{(k)}(\mathbf{x}) = W^{(k)}\mathbf{x} + b^{(k)}$ for $k = 1, \ldots, L$. In a LAN, $\sigma$ contains additional trainable parameters, e.g., the coefficients of a spline.
\end{definition}

\begin{remark}[Note on weight structure]
The following proposition uses matrix weights $\lambda_{i,q}^{(\ell)}$ to establish the connection between SNs and LANs. However, the architecture we propose and analyze throughout this paper uses vector weights $\lambda_i^{(\ell)}$ (following Sprecher's original formulation), where the same weights are shared across all output dimensions. This vector version represents an even more constrained special case of the LAN formulation shown below. The practical SN architecture with vector weights corresponds to the constraint that these matrix weights are constant across the output index, i.e., $\lambda_{i,q}^{(\ell)} = \lambda_i^{(\ell)}$ for all $q$. This constraint is fundamental to maintaining the ``true Sprecher'' structure and achieving the characteristic $O(LN)$ parameter scaling. Additionally, when lateral mixing is enabled, the structure becomes more complex, requiring additional terms in the LAN representation to capture the intra-block communication.
\end{remark}

\begin{prop} \label{prop:SNareLAN}
    A matrix-weighted variant of a Sprecher Network (i.e., with per-output mixing weights $\lambda^{(\ell)}_{i,q}$ in each block) and with lateral mixing disabled is a LAN, where:
    \begin{itemize}
        \item in odd layers $k = 2\ell - 1$, the weight matrix $W^{(k)} \in \mathbb{R}^{d_\ell d_{\ell-1} \times d_{\ell-1}}$ is fixed to  $[I | \cdots | I]^\top$, where $I$ is the $d_{\ell-1} \times d_{\ell-1}$ identity matrix, the bias vector has only one learnable parameter $\eta^{(\ell)}$ and is structured as $b^{(k)} = \eta^{(\ell)}(0,\ldots,0, 1, \ldots, 1, \ldots, d_{\ell}-1, \ldots, d_{\ell}-1)^\top \in \mathbb{R}^{d_\ell d_{\ell-1}}$, and the activation is $\sigma^{(k)} = \phi^{(\ell)}$, 
        \item in even layers $k = 2\ell $, the learnable weight matrix $W^{(k)} \in \mathbb{R}^{d_\ell \times d_\ell d_{\ell-1}}$ is structured as
        \begin{align*}
            \begin{bmatrix}
                \lambda_{1,0}^{(\ell)} & \cdots & \lambda_{d_{\ell-1},0}^{(\ell)} & 0 &&& \cdots &&&0 \\
               0  &\cdots & 0 & \lambda_{1,1}^{(\ell)} & \cdots & \lambda_{d_{\ell-1},1}^{(\ell)} & 0 && \cdots & 0 \\
               &&&&&& \ddots \\
               0 &&&\cdots &&&0&\lambda_{1,d_\ell-1}^{(\ell)} & \cdots & \lambda_{d_{\ell-1},d_\ell-1}^{(\ell)}
            \end{bmatrix},
        \end{align*} 
        the bias is fixed to $b^{(k)} = \alpha(0,\ldots, d_\ell-1)^\top \in \mathbb{R}^{d_\ell}$, and the activation is $\sigma^{(k)} = \Phi^{(\ell)}$. 
    \end{itemize}
\end{prop}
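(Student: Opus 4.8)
The plan is a direct verification: I will exhibit, block by block, that one matrix-weighted Sprecher block factors exactly as an affine--activation--affine--activation sandwich with the stated $W^{(k)}, b^{(k)}, \sigma^{(k)}$, so that composing blocks reproduces the LAN composition. The only genuine content is bookkeeping with the ``lifted'' intermediate index, so I first fix notation: in the odd layer $k = 2\ell-1$ I index the $d_\ell d_{\ell-1}$-dimensional output by pairs $(q,i)$ with $q \in \{0,\dots,d_\ell-1\}$ and $i \in \{1,\dots,d_{\ell-1}\}$, ordered lexicographically in $q$ then $i$. This is precisely the ordering under which $[I|\cdots|I]^\top$ (a vertical stack of $d_\ell$ copies of the $d_{\ell-1}\times d_{\ell-1}$ identity) and the bias $b^{(k)}$ take their stated forms, since for fixed $q$ the bias holds the constant value $\eta^{(\ell)} q$ across the entire block of $d_{\ell-1}$ coordinates.

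First I would verify the odd (lifting) layer. Writing $\mathbf{z}=\mathbf{h}^{(\ell-1)}$ for the block input, the stacked-identity weight gives $[W^{(k)}\mathbf{z}]_{(q,i)} = z_i$, and adding the bias yields $[A^{(k)}(\mathbf{z})]_{(q,i)} = z_i + \eta^{(\ell)} q$. Applying $\sigma^{(k)}=\phi^{(\ell)}$ coordinatewise produces $\phi^{(\ell)}(z_i + \eta^{(\ell)} q)$ in entry $(q,i)$, exactly the quantity summed inside each Sprecher output. Next I would verify the even (contracting) layer. The block-diagonal matrix $W^{(2\ell)}$ has its $q$-th row supported on the coordinate block indexed by the same $q$, with entries $\lambda^{(\ell)}_{i,q}$; multiplying it against the vector of $\phi$-values therefore gives $\sum_{i=1}^{d_{\ell-1}} \lambda^{(\ell)}_{i,q}\,\phi^{(\ell)}(z_i + \eta^{(\ell)} q)$ in coordinate $q$, the bias $b^{(2\ell)}=\alpha(0,\dots,d_\ell-1)^\top$ adds the fixed $+\alpha q$, and $\sigma^{(2\ell)}=\Phi^{(\ell)}$ reproduces $[B^{(\ell)}(\mathbf{z})]_q$ as in Eq.~\eqref{eq:SN} with $\mathcal{N}(q)=\emptyset$. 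Hence $\sigma^{(2\ell)} \circ A^{(2\ell)} \circ \sigma^{(2\ell-1)} \circ A^{(2\ell-1)} = B^{(\ell)}$, and composing over $\ell=1,\dots,L$ gives the claimed LAN realization.

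I expect the main obstacle to be cosmetic rather than conceptual, namely reconciling the endpoint conventions of the two definitions. The LAN composition as written terminates in an affine map $A^{(L)}$, whereas a Sprecher block terminates in the activation $\Phi^{(\ell)}$, so the two patterns are offset by one map. I would resolve this by folding the scalar-output summation $\mathbf{h}^{(L)} \mapsto \sum_q \mathbf{h}^{(L)}_q$ (itself affine, given by an all-ones row) into a trailing affine layer, which restores the ``ends in $A$'' pattern; for the vector-output variant one appends a trivial identity affine map, or simply reads the final block's $\Phi^{(L+1)}$ as the last activation with an identity $A$ afterward.

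Finally, I would record the specialization relevant to the rest of the paper: the practical vector-weighted SN is obtained under the constraint $\lambda^{(\ell)}_{i,q}=\lambda^{(\ell)}_i$ for all $q$, which forces the nonzero bands of $W^{(2\ell)}$ to repeat identical entries across $q$. Thus the vector-weighted architecture is a further-constrained LAN, and the proposition exhibits it as a special case with heavily structured (partly fixed, partly tied) weight matrices, clarifying that the parameter efficiency of SNs comes from freezing and sharing entries of an otherwise ordinary LAN rather than from leaving the LAN framework.
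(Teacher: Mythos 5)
Your proof is correct and is essentially the paper's own argument written out in full: the paper proves the proposition by direct inspection of Eq.~\eqref{eq:SN} with $\tau^{(\ell)}=0$ and $R^{(\ell)}\equiv 0$, which is exactly your lift-then-contract factorization of each block into the two stated affine--activation layers (the paper's subsequent remark describes the same expanded intermediate space). Your handling of the endpoint convention---folding the scalar-output summation or an identity map into a trailing affine layer---is a detail the paper leaves implicit, but it does not change the approach.
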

\begin{proof}
    Follows immediately by inspecting \eqref{eq:SN} with $\tau^{(\ell)} = 0$ and $R^{(\ell)}\equiv 0$ (no lateral mixing or residual path).
\end{proof}

\begin{remark}[Understanding the LAN representation]
The representation of SNs as LANs in Proposition~\ref{prop:SNareLAN} uses an expanded 
intermediate state space. Each Sprecher block is decomposed into two LAN layers:
\begin{itemize}
\item The first layer expands the input $\mathbf{h}^{(\ell-1)} \in \mathbb{R}^{d_{\ell-1}}$ 
to $\mathbb{R}^{d_\ell d_{\ell-1}}$ by creating $d_\ell$ shifted copies, where the 
$(q \cdot d_{\ell-1} + i)$-th component contains $\phi^{(\ell)}(h^{(\ell-1)}_i + \eta^{(\ell)} q)$.
\item The second layer applies the (tied) mixing weights $\lambda^{(\ell)}_{i}$ (shared across all $q$) to select and sum 
the appropriate components for each output $q$, adds the shift $\alpha q$, and applies $\Phi^{(\ell)}$.
\end{itemize}
This construction shows that while SNs can be expressed within the LAN framework, they 
represent a highly structured special case with specific weight patterns and an expanded 
intermediate dimension of $O(d_{\ell-1} d_\ell)$ between each pair of SN layers. The inclusion of lateral mixing would further complicate this representation, requiring additional structure to capture the intra-block communication.
\end{remark}

While this proposition shows that SNs are special cases of LANs with specific structural constraints, Sprecher's theorem guarantees that \emph{in the single-layer setting} this highly structured shift-and-sum form is still universal on $[0,1]^n$. In Sprecher's original construction, $\eta$ can be chosen as a universal constant rather than a learnable parameter. Thus, universality can already hold in the presence of strongly structured, index-dependent bias patterns; Proposition \ref{prop:SNareLAN} makes this connection explicit by exhibiting one concrete structured realization.

\begin{figure}[h!]
    \centering  
\begin{tikzcd}[ampersand replacement=\&]
    \& \textnormal{LAN} \\
    \textnormal{MLP} \&\& \textnormal{SN}
    \arrow[hook, from=2-1, to=1-2]
    \arrow[hook', from=2-3, to=1-2]
\end{tikzcd}
    \caption{Diagram illustrating the dependencies between the models, in terms of learnable parameters. MLPs are LANs with fixed activation function, while SNs are LANs with a particular parameter structure (Proposition \ref{prop:SNareLAN}).}
    \label{fig:model_dependencies}
\end{figure}
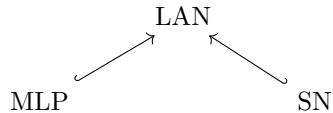

\begin{remark}[Domain considerations]
The above proposition assumes that the spline functions $\phi^{(\ell)}$ can handle inputs outside their original domain $[0,1]$, which may arise due to the shifts $\eta^{(\ell)}q$. While Sprecher's theoretical construction extends $\phi$ appropriately, practical implementations typically compute exact theoretical bounds for all spline domains based on the network structure, and use these bounds to update spline knot ranges; any remaining out-of-range queries are handled by the splines' explicit extension/extrapolation rules (constant extension for $\phi^{(\ell)}$ and linear extrapolation for $\Phi^{(\ell)}$).
\end{remark}

\section{Parameter counting and efficiency as a trade-off}
A key consequence of adhering to the vector-based weighting scheme inspired by Sprecher's formula is a dramatic reduction in parameters compared to standard architectures. This represents a strong architectural constraint that may serve as either a beneficial inductive bias or a limitation, depending on the target function class. The specific design of SNs, particularly the sharing of splines and use of weight vectors rather than matrices, leads to a distinctive parameter scaling that warrants careful analysis.

Let's assume a network of depth $L$ (meaning $L$ hidden layers, thus $L$ blocks for scalar output or $L+1$ for vector output), with an average layer width $N$. We denote the input dimension as $N_{in}$ when it differs significantly from the hidden layer widths. Let $G$ be the number of knot locations (equivalently, the number of learnable coefficients) used to parameterize each one-dimensional spline; a piecewise-linear spline then has $G-1$ intervals. For simplicity, we approximate the number of parameters per spline as $O(G)$.

The parameter counts for different architectures scale as follows. MLPs primarily consist of linear weight matrices, leading to a total parameter count dominated by these weights, scaling as $O(L N^2)$. LANs (Adaptive-MLPs) have both linear weights ($O(L N^2)$) and learnable activations; if each of the $N$ nodes per layer has a learnable spline, this adds $O(L N G)$ parameters, for a total of $O(L N^2 + L N G)$. KANs replace linear weights with learnable edge splines, with $O(N^2)$ edges between layers. If each edge has a spline with $O(G)$ parameters, the total count per layer is $O(N^2 G)$, leading to an overall scaling of $O(L N^2 G)$.

Sprecher Networks have a fundamentally different structure. While a block can be viewed as densely connecting inputs to outputs computationally (as in Figure~\ref{fig:block_flow}), SNs do \emph{not} assign learnable parameters to individual edges: all input--output interactions share the same $\phi^{(\ell)}$ (and the same $\Phi^{(\ell)}$ after aggregation), with diversity introduced via the index shift and shared vector weights. Each block $\ell$ contains mixing weights $\lambda^{(\ell)}$ with $O(d_{\ell-1})$ parameters, where $d_{\ell-1}$ is the input dimension to that block (crucially, these are vectors, not matrices), shared splines $\phi^{(\ell)}, \Phi^{(\ell)}$ with $2 \times O(G) = O(G)$ parameters per block independent of $N$, and a shift parameter $\eta^{(\ell)}$ with $O(1)$ parameter per block. When lateral mixing is enabled, each block additionally contains a lateral scale parameter $\tau^{(\ell)}$ with $O(1)$ parameter and mixing weights $\omega^{(\ell)}$ with $O(d_\ell)$ parameters (specifically $d_\ell$ for cyclic, $2d_\ell$ for bidirectional). When normalization is used, each block may have an associated normalization layer adding $O(N)$ parameters (specifically $2N$ for the affine transformation). When residual connections are used, cyclic residuals add at most $O(N)$ parameters per block (specifically $\max(d_{\ell-1}, d_\ell)$ for block $\ell$, or just 1 when dimensions match); a learned linear residual projection instead adds $O(d_{\ell-1}d_\ell)$ parameters when $d_{\ell-1}\neq d_\ell$ (and just 1 when $d_{\ell-1}=d_\ell$). Summing over $L$ (or $L+1$) blocks, the total parameter count scales approximately as $O(LN + LG + L)$, where the linear dependence on $N$ now includes mixing weights, optional normalization parameters, lateral mixing weights, and residual connections.

Beyond parameter efficiency, SNs also admit memory-efficient computation strategies. While standard vectorized implementations materialize a $(B \cdot d_{\mathrm{in}} \cdot d_{\mathrm{out}})$ tensor of shifted inputs, a sequential evaluation strategy avoids this materialization and reduces the largest temporary forward tensor to $O(B \cdot \max(d_{\mathrm{in}}, d_{\mathrm{out}}))$ (Section~\ref{sec:memory_efficient}). In wide settings where $d_{\mathrm{in}}\approx d_{\mathrm{out}}\approx N$, this yields a peak temporary forward footprint that scales linearly in $N$ rather than quadratically. (As usual, achieving this scaling during training may require checkpointing/recomputation, since the backward pass can retain additional intermediates.)

This scaling reveals the crucial trade-off: SNs achieve a reduction from $O(LN^2)$ to $O(LN)$ in parameter count (and corresponding parameter memory) by using weight vectors rather than matrices; with sequential evaluation (Section~\ref{sec:memory_efficient}) they also avoid explicitly materializing the $B\times d_{\mathrm{in}}\times d_{\mathrm{out}}$ shifted-input tensor in the forward pass. Additionally, in KANs, the spline complexity $G$ multiplies the $N^2$ term ($O(L N^2 G)$), while in SNs, due to spline sharing within blocks, it appears as an additive term ($O(L G)$). This suggests potential for significant parameter and memory savings, particularly when high spline resolution ($G$) is required for accuracy or when the layer width ($N$) is moderate to large.

This extreme parameter sharing represents a fundamental architectural bet: that the structure imposed by Sprecher's theorem, using only weight vectors with output diversity through shifts and lateral mixing, provides a beneficial inductive bias that outweighs the reduction in parameter flexibility. The empirical observation that training remains feasible (albeit sometimes requiring more iterations) with vector weights suggests this bet may be justified for certain function classes. Moreover, viewing this constraint through the lens of weight sharing in CNNs provides a new perspective: both architectures sacrifice parameter flexibility for a structured representation that has proven effective in practice, though for SNs the theoretical justification comes from Sprecher's theorem rather than domain-specific intuitions about spatial invariance. The addition of lateral mixing provides a middle ground, allowing some cross-dimensional communication while maintaining the overall parameter efficiency. Whether this constraint serves as beneficial regularization or harmful limitation likely depends on the specific problem domain and the alignment between the target function's structure and the inductive bias imposed by the SN architecture.

\section{Theoretical aspects and open questions}\label{sec:universality}

\subsection{Relation to Sprecher (1965) and universality}
As shown in Section \ref{sec:single_layer}, a single-layer ($L=1$) Sprecher Network (SN) reduces exactly to Sprecher's constructive representation \eqref{eq:sprecher_original} when lateral mixing and residuals are disabled (i.e., $\tau=0$ and no residual connection), with the additive shift implemented by $\alpha q$ (set $\alpha=1$ to match \eqref{eq:sprecher_original}). In his 1965 paper, Sprecher proved that for any continuous function $f:[0,1]^n\to\mathbb{R}$ there exist a \emph{single} nondecreasing univariate function $\phi$, a continuous univariate function $\Phi$, real constants $\lambda_1,\ldots,\lambda_n$ and $\eta$, and an integer upper index $2n$ such that the representation \eqref{eq:sprecher_original} holds \cite{sprecher1965}. This immediately implies:

\begin{theorem}[Universality of single-layer SNs]\label{thm:ua_single_layer}
For any $n\ge 1$ and any $f\in C([0,1]^n)$, there exists a single-layer SN with $\tau=0$ and no residual connection that represents $f$ exactly in the form \eqref{eq:sprecher_original} (taking $d_{\mathrm{out}}=2n+1$ and $\alpha=1$).
\end{theorem}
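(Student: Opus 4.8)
The plan is to deduce the statement directly from Sprecher's 1965 existence theorem by exhibiting the exact correspondence between the single-layer SN functional form and the right-hand side of \eqref{eq:sprecher_original}. The essential observation, already recorded in Section~\ref{sec:single_layer}, is that when $L=1$, $\tau=0$, and the residual path is disabled, the scalar-output SN with input dimension $d_0=n$ and width $d_1$ computes
\[
f_{\mathrm{SN}}(\mathbf{x}) = \sum_{q=0}^{d_1-1} \Phi^{(1)}\Bigl(\sum_{i=1}^{n} \lambda_i^{(1)}\,\phi^{(1)}(x_i + \eta^{(1)} q) + \alpha q\Bigr),
\]
which, after setting $d_1 = 2n+1$ and $\alpha = 1$, is syntactically identical to \eqref{eq:sprecher_original}, with the outer index $q$ ranging over $0,\ldots,2n$ and the inner index $i$ over $1,\ldots,n$. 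Thus the entire content of the theorem is that this special SN configuration lies in the exact function class covered by Sprecher's result.

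First I would invoke Sprecher's theorem \cite{sprecher1965}: for the given $f\in C([0,1]^n)$ it produces a nondecreasing univariate $\phi$, a continuous univariate $\Phi$, real constants $\lambda_1,\ldots,\lambda_n$, and a shift $\eta>0$ realizing \eqref{eq:sprecher_original} exactly. I would then set the SN's learnable components to these data, namely $\phi^{(1)} := \phi$, $\Phi^{(1)} := \Phi$, $\lambda_i^{(1)} := \lambda_i$, and $\eta^{(1)} := \eta$, so that $f_{\mathrm{SN}} \equiv f$ on $[0,1]^n$ by the displayed identity. Here I treat $\phi^{(1)}$ and $\Phi^{(1)}$ as the abstract univariate functions appearing in \eqref{eq:sprecher_original} rather than as finite-knot splines; the spline parameterization of Section~\ref{sec:implementation} is the practical realization and does not affect the existence claim.

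The two points requiring verification, neither of which is a genuine obstacle, are the monotonicity constraint and the domains of the univariate maps. The SN definition requires $\phi^{(1)}$ to be nondecreasing; this is automatic because Sprecher's inner function is already monotone, so the structural constraint is met without modification. Regarding domains, the shifted arguments $x_i + \eta q$ can leave $[0,1]$ when $q>0$, but Sprecher's construction already defines $\phi$ on the enlarged interval containing all such arguments for $\mathbf{x}\in[0,1]^n$ and $q\le 2n$, and $\Phi$ on the corresponding range of the inner sums; hence both functions are defined wherever the SN evaluates them.

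The main (and only mild) subtlety is therefore bookkeeping rather than mathematics: confirming that the index conventions align (the additive $+\alpha q$ term of the SN plays the role of the $+q$ term in \eqref{eq:sprecher_original} precisely when $\alpha=1$) and that no residual or lateral-mixing contribution perturbs the identity, which holds by the hypotheses $\tau=0$ and $R^{(1)}\equiv 0$. With these identifications the representation is exact on $[0,1]^n$, establishing Theorem~\ref{thm:ua_single_layer} as an immediate corollary of \cite{sprecher1965}.
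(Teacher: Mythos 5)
Your proposal is correct and takes essentially the same route as the paper: the paper likewise observes (via Section~\ref{sec:single_layer}) that the single-layer SN with $\tau=0$, no residual, $d_{\mathrm{out}}=2n+1$, and $\alpha=1$ is syntactically identical to \eqref{eq:sprecher_original}, and then cites Sprecher's 1965 existence theorem to supply $\phi$, $\Phi$, $\lambda_i$, $\eta$, so the result follows immediately. Your additional checks (monotonicity of the inner function, domains of the shifted arguments, and treating the splines as abstract continuous functions rather than finite-knot parameterizations) are consistent with the paper's reading and do not change the argument.
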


Thus, single-layer SNs inherit (indeed, contain) the family given by Sprecher's representation. In practice, however, we do not fit arbitrary $C^2$ univariate functions $\phi$ and $\Phi$ exactly; we parameterize them as splines. This motivates the following finite-parameter approximation analysis, which also clarifies how optional \emph{lateral mixing} (controlled by $\tau$ and $\omega$) and cyclic residuals affect constants but not rates.

\medskip
\noindent\textbf{Notation for a single block.}
A (possibly mixed) Sprecher block $T:\mathbb{R}^{d_{\mathrm{in}}}\to\mathbb{R}^{d_{\mathrm{out}}}$ acts componentwise as
$$[T(\mathbf{z})]_q \;=\; \Phi\!\Biggl(\,\underbrace{\sum_{i=1}^{d_{\mathrm{in}}}\lambda_i\,\phi(z_i+\eta q)+\alpha q}_{=:s_q}\;+\;\tau\sum_{j\in\mathcal{N}(q)}\omega_{qj}\,s_j\Biggr),
\quad q=0,\ldots,d_{\mathrm{out}}-1,$$
where $\lambda\in\mathbb{R}^{d_{\mathrm{in}}}$, $\eta,\alpha,\tau\in\mathbb{R}$, $\mathcal{N}(q)$ specifies the (optional) neighborhood used for lateral mixing, and $\omega=(\omega_{qj})$ are the mixing weights. When a cyclic residual connection is present, the block output is $T_{\mathrm{res}}(\mathbf{z})=T(\mathbf{z})+R(\mathbf{z})$. The approximation analysis below applies to $T$; when the residual map $R$ is shared between the target block and its spline approximation, it cancels in the blockwise difference. However, Lipschitz/composition constants for the full block include an additive contribution from $R$ (e.g.\ $L_{T_{\mathrm{res}}}\le L_T+L_R$).

\begin{lemma}[Single block approximation with piecewise-linear splines]\label{lem:single_block}
Fix $B_{\mathrm{in}}>0$ and assume $\mathbf{z}\in\mathcal{B}_{\mathrm{in}}:=[-B_{\mathrm{in}},B_{\mathrm{in}}]^{d_{\mathrm{in}}}$. Let
$$I_\phi := \bigl[-B_{\mathrm{in}}-|\eta|(d_{\mathrm{out}}-1),\; B_{\mathrm{in}}+|\eta|(d_{\mathrm{out}}-1)\bigr],
\quad
M_\phi := \|\phi\|_{L^\infty(I_\phi)}.$$
Define
$$B_\omega:=\sup_{q}\sum_{j\in\mathcal{N}(q)}|\omega_{qj}|,
\qquad
R_{\mathrm{mix}}:=1+|\tau|\,B_\omega,
\qquad
M_s := \|\lambda\|_{1}\,M_\phi + |\alpha|\,(d_{\mathrm{out}}-1),$$
and set $I_\Phi := [-R_{\mathrm{mix}}M_s,\; R_{\mathrm{mix}}M_s]$. Assume $\phi,\Phi\in C^2$ on neighborhoods of $I_\phi$ and $I_\Phi$, respectively. Let $\hat\phi$ and $\hat\Phi$ be the (shape-preserving) piecewise-linear interpolants of $\phi$ on a uniform grid of $G_\phi\ge 2$ knots over $I_\phi$ and of $\Phi$ on a uniform grid of $G_\Phi\ge 2$ knots over $I_\Phi$; write
$$h_\phi:=\frac{|I_\phi|}{G_\phi-1},\quad h_\Phi:=\frac{|I_\Phi|}{G_\Phi-1},\quad
M_{\phi''}:=\|\phi''\|_{L^\infty(I_\phi)},\quad M_{\Phi''}:=\|\Phi''\|_{L^\infty(I_\Phi)}.$$
Define $\hat T$ by replacing $(\phi,\Phi)$ with $(\hat\phi,\hat\Phi)$ in $T$ (keeping the same $\lambda,\eta,\alpha,\tau,\omega,\mathcal{N}$). Then, for all $\mathbf{z}\in\mathcal{B}_{\mathrm{in}}$,
\begin{equation}\label{eq:single_block_sup}
\|T(\mathbf{z})-\hat T(\mathbf{z})\|_\infty
\;\le\;
L_\Phi\,R_{\mathrm{mix}}\,\|\lambda\|_{1}\,\delta_\phi\;+\;\delta_\Phi,
\qquad
\delta_\phi:=\frac{M_{\phi''}}{8}h_\phi^2,\ \ \delta_\Phi:=\frac{M_{\Phi''}}{8}h_\Phi^2,
\end{equation}
where $L_\Phi:=\|\Phi'\|_{L^\infty(I_\Phi)}$. Equivalently,
$$\|T(\mathbf{z})-\hat T(\mathbf{z})\|_\infty
\;\le\;
K_T\cdot \max\{h_\phi^2,h_\Phi^2\}
\quad\text{with}\quad
K_T:=\frac{1}{8}\Bigl(L_\Phi\,R_{\mathrm{mix}}\,\|\lambda\|_{1}\,M_{\phi''}+M_{\Phi''}\Bigr).$$
The same bounds hold for a block with a (shared) cyclic residual $T_{\mathrm{res}}=T+R$, since $R$ cancels when $T$ and $\hat T$ share the same residual path.
\end{lemma}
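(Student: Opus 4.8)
The plan is to separate the two sources of error---one from replacing the inner spline $\phi$ by $\hat\phi$, the other from replacing the outer spline $\Phi$ by $\hat\Phi$---and recombine them by the triangle inequality. The single classical ingredient is the uniform error bound for piecewise-linear interpolation of a $C^2$ function $g$ on a uniform grid of spacing $h$, namely $\|g-\hat g\|_{L^\infty}\le \tfrac18\|g''\|_{L^\infty}h^2$; applied to $\phi$ on $I_\phi$ and to $\Phi$ on $I_\Phi$ this gives the pointwise estimates $|\phi(u)-\hat\phi(u)|\le\delta_\phi$ for $u\in I_\phi$ and $|\Phi(v)-\hat\Phi(v)|\le\delta_\Phi$ for $v\in I_\Phi$. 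Fixing an output index $q$, I would write $s_q,\hat s_q$ for the pre-mixing activations built from $\phi$ and $\hat\phi$, and set $a_q:=s_q+\tau\sum_{j\in\mathcal{N}(q)}\omega_{qj}s_j$ and $\hat a_q:=\hat s_q+\tau\sum_{j\in\mathcal{N}(q)}\omega_{qj}\hat s_j$ for the corresponding arguments of the outer spline (all of $\lambda,\eta,\alpha,\tau,\omega,\mathcal{N}$ being shared). Then $[T(\mathbf z)]_q=\Phi(a_q)$ and $[\hat T(\mathbf z)]_q=\hat\Phi(\hat a_q)$, and I split
\[
|\Phi(a_q)-\hat\Phi(\hat a_q)|\;\le\;\underbrace{|\Phi(a_q)-\Phi(\hat a_q)|}_{\text{(A)}}\;+\;\underbrace{|\Phi(\hat a_q)-\hat\Phi(\hat a_q)|}_{\text{(B)}}.
\]

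The crucial preliminary step is to verify the two domain containments, so that the interpolation estimates actually apply. Since $z_i\in[-B_{\mathrm{in}},B_{\mathrm{in}}]$ and $0\le q\le d_{\mathrm{out}}-1$, every argument $z_i+\eta q$ lies in $I_\phi$, so both $\phi$ and $\hat\phi$ are evaluated inside $I_\phi$. The key observation is that a piecewise-linear interpolant takes, at any point of $I_\phi$, a convex combination of two knot values of $\phi$, whence $|\hat\phi(u)|\le M_\phi$ on $I_\phi$ as well. Consequently both $|s_q|$ and $|\hat s_q|$ are bounded by $\|\lambda\|_1 M_\phi+|\alpha|(d_{\mathrm{out}}-1)=M_s$, and after lateral mixing both $|a_q|$ and $|\hat a_q|$ are bounded by $(1+|\tau|B_\omega)M_s=R_{\mathrm{mix}}M_s$; that is, $a_q,\hat a_q\in I_\Phi$. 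This legitimizes both the Lipschitz estimate for $\Phi$ in (A) and the interpolation estimate for $\Phi$ in (B).

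With containment in hand, (B) is immediate: because $\hat a_q\in I_\Phi$, the interpolation bound gives $\text{(B)}\le\delta_\Phi$. For (A) I would use that $\Phi$ is $L_\Phi$-Lipschitz on the interval $I_\Phi$ (as $L_\Phi=\|\Phi'\|_{L^\infty(I_\Phi)}$ and $I_\Phi$ is convex), so $\text{(A)}\le L_\Phi|a_q-\hat a_q|$, and then propagate the inner error through the linear mixing: the $\alpha q$ terms cancel, $s_j-\hat s_j=\sum_i\lambda_i(\phi(z_i+\eta j)-\hat\phi(z_i+\eta j))$ gives $|s_j-\hat s_j|\le\|\lambda\|_1\delta_\phi$, and hence $|a_q-\hat a_q|\le(1+|\tau|B_\omega)\|\lambda\|_1\delta_\phi=R_{\mathrm{mix}}\|\lambda\|_1\delta_\phi$. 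Adding (A) and (B) and taking the supremum over $q$ yields \eqref{eq:single_block_sup}; the $K_T$ form follows by replacing $h_\phi^2$ and $h_\Phi^2$ with $\max\{h_\phi^2,h_\Phi^2\}$ and collecting constants, and the residual claim is immediate since $T_{\mathrm{res}}-\hat T_{\mathrm{res}}=(T+R)-(\hat T+R)=T-\hat T$ when $R$ is shared.

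The one place requiring genuine care is the domain bookkeeping of the second paragraph: the interpolation bound for $\Phi$ must be applied at the \emph{perturbed} argument $\hat a_q$, so I must bound $\hat a_q$ (not merely $a_q$) inside $I_\Phi$. This hinges on the elementary but essential fact that the piecewise-linear interpolant inherits the sup bound $M_\phi$ of $\phi$ on $I_\phi$, which is exactly what forces $|\hat s_q|\le M_s$ and hence $\hat a_q\in I_\Phi$. Everything else is a routine combination of the classical interpolation estimate with the Lipschitz and triangle inequalities.
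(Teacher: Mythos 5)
Your proposal is correct and follows essentially the same route as the paper's own proof: the identical splitting $|\Phi(a_q)-\hat\Phi(\hat a_q)|\le|\Phi(a_q)-\Phi(\hat a_q)|+|\Phi(\hat a_q)-\hat\Phi(\hat a_q)|$, the same convex-hull argument giving $\|\hat\phi\|_{L^\infty(I_\phi)}\le M_\phi$ to place both mixed pre-activations in $I_\Phi$, and the same propagation of the inner error through the mixing weights. Your explicit emphasis on why the containment must hold for the \emph{perturbed} argument $\hat a_q$ is exactly the point the paper's proof also relies on, just stated more explicitly.
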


\begin{proof}
On each subinterval of the uniform grid for $I_\phi$ (resp.\ $I_\Phi$) of length $h_\phi$ (resp.\ $h_\Phi$), the classical bound for piecewise-linear interpolation of a $C^2$ function gives
$$\|\phi-\hat\phi\|_{L^\infty(I_\phi)}\le \frac{M_{\phi''}}{8}h_\phi^2=:\delta_\phi,
\qquad
\|\Phi-\hat\Phi\|_{L^\infty(I_\Phi)}\le \frac{M_{\Phi''}}{8}h_\Phi^2=:\delta_\Phi.$$
For fixed $\mathbf{z}$ and $q$, write $s_q=\sum_i\lambda_i\,\phi(z_i+\eta q)+\alpha q$ and $\hat s_q$ the analogous quantity with $\hat\phi$. Then
$$|s_q-\hat s_q|\;\le\;\sum_{i=1}^{d_{\mathrm{in}}}|\lambda_i|\,\|\phi-\hat\phi\|_{L^\infty(I_\phi)}
\;\le\;\|\lambda\|_{1}\,\delta_\phi.$$
Define the post-mixing scalars $\tilde s_q:=s_q+\tau\sum_{j\in\mathcal{N}(q)}\omega_{qj}s_j$ and $\hat{\tilde s}_q:=\hat s_q+\tau\sum_{j\in\mathcal{N}(q)}\omega_{qj}\hat s_j$. Using the triangle inequality and the definition of $B_\omega$,
$$|\tilde s_q-\hat{\tilde s}_q|
\;\le\;
|s_q-\hat s_q| + |\tau| \sum_{j\in\mathcal{N}(q)}|\omega_{qj}|\,|s_j-\hat s_j|
\;\le\; R_{\mathrm{mix}}\cdot \sup_{j}|s_j-\hat s_j|
\;\le\; R_{\mathrm{mix}}\,\|\lambda\|_{1}\,\delta_\phi.$$
Next, since a linear interpolant lies in the convex hull of the function values at the grid endpoints, we have
$$\|\hat\phi\|_{L^\infty(I_\phi)} \;\le\; \|\phi\|_{L^\infty(I_\phi)}=M_\phi.$$
Hence $|s_j|\le M_s$ and $|\hat s_j|\le M_s$, which implies $|\tilde s_q|\le R_{\mathrm{mix}}M_s$ and $|\hat{\tilde s}_q|\le R_{\mathrm{mix}}M_s$. Therefore $\tilde s_q,\hat{\tilde s}_q\in I_\Phi$, and
$$|T(\mathbf{z})_q-\hat T(\mathbf{z})_q|
=\bigl|\Phi(\tilde s_q)-\hat\Phi(\hat{\tilde s}_q)\bigr|
\le \underbrace{|\Phi(\tilde s_q)-\Phi(\hat{\tilde s}_q)|}_{\le L_\Phi |\tilde s_q-\hat{\tilde s}_q|}
+ \underbrace{|\Phi(\hat{\tilde s}_q)-\hat\Phi(\hat{\tilde s}_q)|}_{\le \delta_\Phi}.$$
Combining the bounds and taking the maximum over $q$ yields \eqref{eq:single_block_sup}. If a residual path $R$ is present and shared between $T$ and $\hat T$, then $T_{\mathrm{res}}-\hat T_{\mathrm{res}}=(T-\hat T)+(R-R)=(T-\hat T)$, so the same inequalities apply.
\end{proof}

\begin{lemma}[Lipschitz constant of a Sprecher block]\label{lem:block_lip}
Assume $\phi$ and $\Phi$ are Lipschitz on $I_\phi$ and $I_\Phi$ with Lipschitz constants $L_\phi$ and $L_\Phi$, respectively. Then $T$ is $L_T$-Lipschitz on $\mathcal{B}_{\mathrm{in}}$ in the $\ell_\infty$ norm with
$$L_T \;\le\; L_\Phi\,R_{\mathrm{mix}}\,\|\lambda\|_1\,L_\phi,$$
where $R_{\mathrm{mix}}=1+|\tau|\,B_\omega$ as in Lemma~\ref{lem:single_block}. If a residual path $R$ (shared between the exact and approximated networks) is present with Lipschitz constant $L_R$ on $\mathcal{B}_{\mathrm{in}}$, then $T_{\mathrm{res}}=T+R$ is $(L_T+L_R)$-Lipschitz.
\end{lemma}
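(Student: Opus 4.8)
The plan is to bound $\|T(\mathbf{z})-T(\mathbf{z}')\|_\infty$ for arbitrary $\mathbf{z},\mathbf{z}'\in\mathcal{B}_{\mathrm{in}}$ by propagating the input perturbation through the three stages of the block in the same order as the forward map: the inner spline combined with the $\lambda$-weighted sum, the lateral mixing, and finally the outer spline. This mirrors the structure of the proof of Lemma~\ref{lem:single_block}, with the interpolation-error estimates there replaced by Lipschitz-continuity estimates on the difference of arguments.

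First I would control the pre-mixing scalars. Since $\mathbf{z}\in\mathcal{B}_{\mathrm{in}}$ and $q\le d_{\mathrm{out}}-1$, every shifted argument $z_i+\eta q$ lies in $I_\phi$, so the Lipschitz bound for $\phi$ on $I_\phi$ applies coordinatewise (the additive $\alpha q$ cancels in the difference):
$$|s_q-s_q'| \;\le\; \sum_{i=1}^{d_{\mathrm{in}}}|\lambda_i|\,\bigl|\phi(z_i+\eta q)-\phi(z_i'+\eta q)\bigr| \;\le\; L_\phi\sum_{i=1}^{d_{\mathrm{in}}}|\lambda_i|\,|z_i-z_i'| \;\le\; L_\phi\,\|\lambda\|_1\,\|\mathbf{z}-\mathbf{z}'\|_\infty,$$
where the last step is the $\ell_1$–$\ell_\infty$ Hölder inequality. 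Taking the supremum over $q$ gives the uniform bound $\sup_q|s_q-s_q'|\le L_\phi\|\lambda\|_1\|\mathbf{z}-\mathbf{z}'\|_\infty$. Next I would pass this through the lateral mixing exactly as in Lemma~\ref{lem:single_block}: writing $\tilde s_q=s_q+\tau\sum_{j\in\mathcal{N}(q)}\omega_{qj}s_j$ and using $B_\omega=\sup_q\sum_{j\in\mathcal{N}(q)}|\omega_{qj}|$, the triangle inequality yields $|\tilde s_q-\tilde s_q'|\le R_{\mathrm{mix}}\sup_j|s_j-s_j'|$, so the perturbation picks up the single factor $R_{\mathrm{mix}}=1+|\tau|B_\omega$.

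The one point needing care is that I must apply the Lipschitz constant $L_\Phi$ on the correct interval $I_\Phi$, so I would reuse the domain-containment argument already established in Lemma~\ref{lem:single_block}: because $\|\phi\|_{L^\infty(I_\phi)}=M_\phi$ gives $|s_j|\le M_s$ and hence $|\tilde s_q|\le R_{\mathrm{mix}}M_s$, both $\tilde s_q$ and $\tilde s_q'$ lie in $I_\Phi$. Then $|T(\mathbf{z})_q-T(\mathbf{z}')_q|=|\Phi(\tilde s_q)-\Phi(\tilde s_q')|\le L_\Phi|\tilde s_q-\tilde s_q'|$, and chaining the three estimates and maximizing over $q$ yields $\|T(\mathbf{z})-T(\mathbf{z}')\|_\infty\le L_\Phi\,R_{\mathrm{mix}}\,\|\lambda\|_1\,L_\phi\,\|\mathbf{z}-\mathbf{z}'\|_\infty$, i.e.\ $L_T\le L_\Phi R_{\mathrm{mix}}\|\lambda\|_1 L_\phi$. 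Finally, for the residual case $T_{\mathrm{res}}=T+R$ I would invoke the triangle inequality: $\|T_{\mathrm{res}}(\mathbf{z})-T_{\mathrm{res}}(\mathbf{z}')\|_\infty\le\|T(\mathbf{z})-T(\mathbf{z}')\|_\infty+\|R(\mathbf{z})-R(\mathbf{z}')\|_\infty\le(L_T+L_R)\|\mathbf{z}-\mathbf{z}'\|_\infty$. There is no genuine obstacle here; all the estimates are elementary, and the only subtlety is the reuse of the $I_\Phi$-containment to legitimize applying the constant $L_\Phi$.
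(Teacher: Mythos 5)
Your proof is correct and follows essentially the same route as the paper's: the coordinatewise Lipschitz bound on the pre-activations via the $\ell_1$--$\ell_\infty$ inequality, the single factor $R_{\mathrm{mix}}$ from the lateral mixing, then $L_\Phi$ and a maximum over $q$, with the residual case handled by the triangle inequality. Your extra check that $\tilde s_q,\tilde s_q'\in I_\Phi$ (so that the constant $L_\Phi$ legitimately applies) is left implicit in the paper's proof but is exactly the containment already established in Lemma~\ref{lem:single_block}, so it is a welcome clarification rather than a different argument.
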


\begin{proof}
For $\mathbf{z},\mathbf{z}'\in\mathcal{B}_{\mathrm{in}}$ and each $q$,
$$\bigl|s_q(\mathbf{z})-s_q(\mathbf{z}')\bigr|
\le \sum_i |\lambda_i|\,\bigl|\phi(z_i+\eta q)-\phi(z_i'+\eta q)\bigr|
\le \|\lambda\|_1\,L_\phi\,\|\mathbf{z}-\mathbf{z}'\|_\infty.$$
Therefore,
$$\bigl|\tilde s_q(\mathbf{z})-\tilde s_q(\mathbf{z}')\bigr|
\le \Bigl(1+|\tau|\,\sum_{j\in\mathcal{N}(q)}|\omega_{qj}|\Bigr)\,\|\lambda\|_1\,L_\phi\,\|\mathbf{z}-\mathbf{z}'\|_\infty
\le R_{\mathrm{mix}}\|\lambda\|_1 L_\phi\,\|\mathbf{z}-\mathbf{z}'\|_\infty,$$
and hence
$$|T(\mathbf{z})_q-T(\mathbf{z}')_q|
\le L_\Phi\,|\tilde s_q(\mathbf{z})-\tilde s_q(\mathbf{z}')|
\le L_\Phi R_{\mathrm{mix}}\|\lambda\|_1 L_\phi\,\|\mathbf{z}-\mathbf{z}'\|_\infty.$$
Taking the maximum over $q$ gives the claimed bound for $T$, and adding $L_R$ handles $T_{\mathrm{res}}$ by the triangle inequality.
\end{proof}

\begin{lemma}[Error composition]\label{lem:error_composition}
Consider an $L$-block SN (with optional lateral mixing and cyclic residuals at each block). Let $T^{(\ell)}:\mathbb{R}^{d_{\ell-1}}\to\mathbb{R}^{d_{\ell}}$ denote the $\ell$-th block map on the exact network and $\hat T^{(\ell)}$ its spline-approximated counterpart (same $\lambda,\eta,\alpha,\tau,\omega$ and (when present) the residual parameters, but $(\phi,\Phi)$ replaced by $(\hat\phi,\hat\Phi)$). Assume there exist bounded sets $\mathcal{B}_{\ell}\subset\mathbb{R}^{d_{\ell}}$ such that for all inputs $\mathbf{x}\in[0,1]^n$,
$$\mathbf{h}^{(\ell)}(\mathbf{x}) := T^{(\ell)}\!\circ\cdots\circ T^{(1)}(\mathbf{x}) \in \mathcal{B}_{\ell}
\quad\text{and}\quad
\hat{\mathbf{h}}^{(\ell)}(\mathbf{x}) := \hat T^{(\ell)}\!\circ\cdots\circ \hat T^{(1)}(\mathbf{x}) \in \mathcal{B}_{\ell},$$
and suppose each $T^{(\ell)}$ is $L_{T^{(\ell)}}$-Lipschitz on $\mathcal{B}_{\ell-1}$. If $\varepsilon_\ell:=\sup_{\mathbf{z}\in\mathcal{B}_{\ell-1}}\|T^{(\ell)}(\mathbf{z})-\hat T^{(\ell)}(\mathbf{z})\|_\infty$, then for $E_\ell:=\sup_{\mathbf{x}\in[0,1]^n}\|\mathbf{h}^{(\ell)}(\mathbf{x})-\hat{\mathbf{h}}^{(\ell)}(\mathbf{x})\|_\infty$ we have
\begin{equation}\label{eq:error_composition_result}
E_\ell \;\le\; \sum_{j=1}^{\ell}\left(\prod_{m=j+1}^{\ell} L_{T^{(m)}}\right)\varepsilon_j,
\end{equation}
with the empty product equal to $1$.
\end{lemma}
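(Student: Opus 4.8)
The plan is to prove the bound by induction on $\ell$, which amounts to unrolling a single-step error recurrence. The guiding observation is that the composite errors satisfy a linear recurrence of the form $E_\ell \le L_{T^{(\ell)}}\,E_{\ell-1} + \varepsilon_\ell$; once this is established, \eqref{eq:error_composition_result} follows by telescoping, with the coefficient of each single-block error $\varepsilon_j$ accumulating the product of the Lipschitz constants of all \emph{subsequent} blocks. The base case $\ell=1$ is immediate: since $\mathbf{x}\in[0,1]^n$ lies in the input domain $\mathcal{B}_0$, we have $E_1=\sup_{\mathbf{x}}\|T^{(1)}(\mathbf{x})-\hat T^{(1)}(\mathbf{x})\|_\infty\le\varepsilon_1$, which is exactly the right-hand side of \eqref{eq:error_composition_result} at $\ell=1$ (the empty product equals $1$).

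For the inductive step, fix $\mathbf{x}$ and abbreviate $\mathbf{h}^{(\ell-1)}:=\mathbf{h}^{(\ell-1)}(\mathbf{x})$ and $\hat{\mathbf{h}}^{(\ell-1)}:=\hat{\mathbf{h}}^{(\ell-1)}(\mathbf{x})$. The key maneuver is to insert the hybrid term $T^{(\ell)}(\hat{\mathbf{h}}^{(\ell-1)})$ and split via the triangle inequality:
$$\|\mathbf{h}^{(\ell)}-\hat{\mathbf{h}}^{(\ell)}\|_\infty \le \|T^{(\ell)}(\mathbf{h}^{(\ell-1)})-T^{(\ell)}(\hat{\mathbf{h}}^{(\ell-1)})\|_\infty + \|T^{(\ell)}(\hat{\mathbf{h}}^{(\ell-1)})-\hat T^{(\ell)}(\hat{\mathbf{h}}^{(\ell-1)})\|_\infty.$$
For the first term I would invoke the Lipschitz bound of Lemma~\ref{lem:block_lip} (encoded in $L_{T^{(\ell)}}$): since both $\mathbf{h}^{(\ell-1)}$ and $\hat{\mathbf{h}}^{(\ell-1)}$ lie in $\mathcal{B}_{\ell-1}$ by hypothesis, it is bounded by $L_{T^{(\ell)}}\|\mathbf{h}^{(\ell-1)}-\hat{\mathbf{h}}^{(\ell-1)}\|_\infty\le L_{T^{(\ell)}}E_{\ell-1}$. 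For the second term, the argument $\hat{\mathbf{h}}^{(\ell-1)}$ again lies in $\mathcal{B}_{\ell-1}$, so the single-block discrepancy is at most the sup-error $\varepsilon_\ell$ by definition. Taking the supremum over $\mathbf{x}$ yields $E_\ell\le L_{T^{(\ell)}}E_{\ell-1}+\varepsilon_\ell$; substituting the inductive hypothesis for $E_{\ell-1}$ and re-indexing the sum completes the proof.

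The argument is essentially routine, and the only point that requires genuine care — where the hypotheses of the lemma are actually used — is the requirement that both trajectories $\mathbf{h}^{(\ell-1)}$ and $\hat{\mathbf{h}}^{(\ell-1)}$ lie in the common bounded set $\mathcal{B}_{\ell-1}$. This is precisely what simultaneously licenses the Lipschitz estimate (which holds only on $\mathcal{B}_{\ell-1}$) for the first term and the single-block bound $\varepsilon_\ell$ (a supremum over $\mathcal{B}_{\ell-1}$) for the second; note that the hypothesis explicitly imposes this for the approximated trajectory, not merely the exact one. I also note that residuals and lateral mixing need no separate treatment here: their effect is already absorbed into the block-level constants $L_{T^{(\ell)}}$ (Lemma~\ref{lem:block_lip}) and $\varepsilon_\ell$ (Lemma~\ref{lem:single_block}), so the composition argument proceeds identically whether or not they are present.
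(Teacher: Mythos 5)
Your proof is correct and is essentially identical to the paper's: both proceed by induction on $\ell$, inserting the hybrid term $T^{(\ell)}(\hat{\mathbf{h}}^{(\ell-1)})$, bounding the first difference by the Lipschitz constant $L_{T^{(\ell)}}$ on $\mathcal{B}_{\ell-1}$ and the second by $\varepsilon_\ell$ (using that $\hat{\mathbf{h}}^{(\ell-1)}\in\mathcal{B}_{\ell-1}$), then unrolling the recurrence $E_\ell\le L_{T^{(\ell)}}E_{\ell-1}+\varepsilon_\ell$. Your remarks on where the bounded-propagation hypothesis is needed, and on residuals/mixing being absorbed into the block-level constants, match the paper's treatment exactly.
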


\begin{proof}
The case $\ell=1$ is $E_1\le \varepsilon_1$. Assume the claim for $\ell-1$. Then, for any $\mathbf{x}$,
\begin{align*}
\|\mathbf{h}^{(\ell)}(\mathbf{x})-\hat{\mathbf{h}}^{(\ell)}(\mathbf{x})\|_\infty
&= \|T^{(\ell)}(\mathbf{h}^{(\ell-1)}(\mathbf{x})) - \hat T^{(\ell)}(\hat{\mathbf{h}}^{(\ell-1)}(\mathbf{x}))\|_\infty\\
&\le \underbrace{\|T^{(\ell)}(\mathbf{h}^{(\ell-1)}(\mathbf{x})) - T^{(\ell)}(\hat{\mathbf{h}}^{(\ell-1)}(\mathbf{x}))\|_\infty}_{\le L_{T^{(\ell)}}\,\|\mathbf{h}^{(\ell-1)}(\mathbf{x})-\hat{\mathbf{h}}^{(\ell-1)}(\mathbf{x})\|_\infty}
+ \underbrace{\|T^{(\ell)}-\hat T^{(\ell)}\|_{\infty,\mathcal{B}_{\ell-1}}}_{\le \varepsilon_\ell}\\
&\le L_{T^{(\ell)}} E_{\ell-1} + \varepsilon_\ell,
\end{align*}
and the induction hypothesis gives \eqref{eq:error_composition_result}.
\end{proof}

\begin{corollary}[Global spline approximation rate for SNs with piecewise-linear splines]\label{cor:sprecher_rate}
Let $f:[0,1]^n\to\mathbb{R}$ be realized by an ideal $L$-block SN (possibly with lateral mixing and cyclic residuals) with scalar output. For each block $\ell=1,\dots,L$, let $B_{\mathrm{in}}^{(\ell)}:=\sup\{\|\mathbf{z}\|_\infty:\mathbf{z}\in\mathcal{B}_{\ell-1}\}$. Assume:
\begin{enumerate}[label=\textup{(\roman*)}]
\item The \emph{ideal constituent} functions $\phi^{(\ell)}$ and $\Phi^{(\ell)}$ are in $C^2$ on neighborhoods of the intervals
$$I_\phi^{(\ell)} := \bigl[-B_{\mathrm{in}}^{(\ell)}-|\eta^{(\ell)}|(d_{\ell}-1),\; B_{\mathrm{in}}^{(\ell)}+|\eta^{(\ell)}|(d_{\ell}-1)\bigr],
\quad
I_\Phi^{(\ell)} := \bigl[-R_{\mathrm{mix}}^{(\ell)}M_s^{(\ell)},\; R_{\mathrm{mix}}^{(\ell)}M_s^{(\ell)}\bigr],$$
with $M_s^{(\ell)}:=\|\lambda^{(\ell)}\|_{1}\|\phi^{(\ell)}\|_{L^\infty(I_\phi^{(\ell)})}+|\alpha|(d_\ell-1)$ and $R_{\mathrm{mix}}^{(\ell)}:=1+|\tau^{(\ell)}|\,B_\omega^{(\ell)}$, $B_\omega^{(\ell)}:=\sup_q\sum_{j\in\mathcal{N}^{(\ell)}(q)}|\omega^{(\ell)}_{qj}|$;
\item the structural parameters are bounded: $\|\lambda^{(\ell)}\|_{1}\le \Lambda_1$, $|\eta^{(\ell)}|\le H$, $|\alpha|\le A$, $|\tau^{(\ell)}|\le T$, and $B_\omega^{(\ell)}\le \Omega$;
\item there exist bounded sets $\mathcal{B}_\ell$ such that the exact and approximated forward passes both remain in $\mathcal{B}_\ell$ for all inputs in $[0,1]^n$ (bounded propagation).
\end{enumerate}
Construct $\hat\phi^{(\ell)}$ and $\hat\Phi^{(\ell)}$ as the piecewise-linear interpolants on uniform grids with $G_\phi^{(\ell)}\ge2$ and $G_\Phi^{(\ell)}\ge2$ knots on $I_\phi^{(\ell)}$ and $I_\Phi^{(\ell)}$, respectively, and write $h_\phi^{(\ell)}:=|I_\phi^{(\ell)}|/(G_\phi^{(\ell)}-1)$ and $h_\Phi^{(\ell)}:=|I_\Phi^{(\ell)}|/(G_\Phi^{(\ell)}-1)$. Then, with
$$\delta_\phi^{(\ell)}:=\frac{\|\phi^{(\ell)''}\|_{L^\infty(I_\phi^{(\ell)})}}{8}\,\bigl(h_\phi^{(\ell)}\bigr)^2, \qquad
\delta_\Phi^{(\ell)}:=\frac{\|\Phi^{(\ell)''}\|_{L^\infty(I_\Phi^{(\ell)})}}{8}\,\bigl(h_\Phi^{(\ell)}\bigr)^2,$$
the blockwise error satisfies
$$\varepsilon_\ell
:=\sup_{\mathbf{z}\in\mathcal{B}_{\ell-1}}\|T^{(\ell)}(\mathbf{z})-\hat T^{(\ell)}(\mathbf{z})\|_\infty
\ \le\ L_{\Phi}^{(\ell)}\,R_{\mathrm{mix}}^{(\ell)}\,\|\lambda^{(\ell)}\|_{1}\,\delta_\phi^{(\ell)}\;+\;\delta_\Phi^{(\ell)},$$
where $L_{\Phi}^{(\ell)}:=\|\Phi^{(\ell)'}\|_{L^\infty(I_\Phi^{(\ell)})}$. Consequently, by Lemma~\ref{lem:error_composition},
$$\sup_{\mathbf{x}\in[0,1]^n}|f(\mathbf{x})-\hat f(\mathbf{x})|
\;\le\;
\sum_{j=1}^{L}\left(\prod_{m=j+1}^{L}L_{T^{(m)}}\right)\Bigl(L_{\Phi}^{(j)}\,R_{\mathrm{mix}}^{(j)}\,\|\lambda^{(j)}\|_{1}\,\delta_\phi^{(j)}+\delta_\Phi^{(j)}\Bigr),$$
with $L_{T^{(m)}}$ any Lipschitz constant of the $m$-th block on $\mathcal{B}_{m-1}$ (e.g.\ from Lemma~\ref{lem:block_lip}, plus the residual-path constant when present). In particular, if $G_\phi^{(\ell)}=G_\Phi^{(\ell)}=G$ for all $\ell$, then with $h:=\max_\ell\{|I_\phi^{(\ell)}|,|I_\Phi^{(\ell)}|\}/(G-1)$ we obtain
$$\sup_{\mathbf{x}\in[0,1]^n}|f(\mathbf{x})-\hat f(\mathbf{x})|
\;=\; \mathcal{O}\!\left(h^2\right)
\;=\; \mathcal{O}\!\left(G^{-2}\right),$$
with constants depending on $\{\|\lambda^{(\ell)}\|_{1},L_{\phi^{(\ell)}},L_{\Phi}^{(\ell)},R_{\mathrm{mix}}^{(\ell)},\|\phi^{(\ell)''}\|_\infty,\|\Phi^{(\ell)''}\|_\infty,L_{T^{(\ell)}}\}_{\ell=1}^L$ and on the bounded-propagation sets $\{\mathcal{B}_\ell\}$.
\end{corollary}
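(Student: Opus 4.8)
The plan is to assemble the three preceding lemmas into a telescoping bound, so that the proof is essentially bookkeeping once the hypotheses are in place. First I would fix a single block $\ell$ and observe that assumption (iii) places both the exact input $\mathbf{h}^{(\ell-1)}(\mathbf{x})$ and the approximated input $\hat{\mathbf{h}}^{(\ell-1)}(\mathbf{x})$ inside $\mathcal{B}_{\ell-1}$, hence inside the cube $[-B_{\mathrm{in}}^{(\ell)},B_{\mathrm{in}}^{(\ell)}]^{d_{\ell-1}}$. This is exactly the hypothesis of Lemma~\ref{lem:single_block} with $B_{\mathrm{in}}=B_{\mathrm{in}}^{(\ell)}$, and the intervals $I_\phi^{(\ell)},I_\Phi^{(\ell)}$ defined in the corollary coincide with the $I_\phi,I_\Phi$ constructed there. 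Applying that lemma to block $\ell$ then yields the stated blockwise bound $\varepsilon_\ell\le L_{\Phi}^{(\ell)}R_{\mathrm{mix}}^{(\ell)}\|\lambda^{(\ell)}\|_1\delta_\phi^{(\ell)}+\delta_\Phi^{(\ell)}$ with no further work.

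Next I would supply the Lipschitz constants needed for composition. Lemma~\ref{lem:block_lip} gives $L_{T^{(\ell)}}\le L_{\Phi}^{(\ell)}R_{\mathrm{mix}}^{(\ell)}\|\lambda^{(\ell)}\|_1 L_{\phi^{(\ell)}}$ on $\mathcal{B}_{\ell-1}$ (adding the residual constant $L_{R^{(\ell)}}$ when a cyclic residual is present), using that $\phi^{(\ell)},\Phi^{(\ell)}$ are $C^2$, hence Lipschitz, on the fixed intervals $I_\phi^{(\ell)},I_\Phi^{(\ell)}$. With $\{\varepsilon_\ell\}$ and $\{L_{T^{(\ell)}}\}$ in hand, I would invoke Lemma~\ref{lem:error_composition} verbatim to obtain $E_L\le\sum_{j=1}^L\bigl(\prod_{m=j+1}^L L_{T^{(m)}}\bigr)\varepsilon_j$; substituting the blockwise bound from the first step produces precisely the displayed global estimate for the $\ell_\infty$ error of the final hidden representation $\mathbf{h}^{(L)}$. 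For the summation head $f=\sum_{q=0}^{d_L-1}\mathbf{h}^{(L)}_q$ one has $|f-\hat f|\le d_L\,E_L$, so the benign factor $d_L$ is absorbed into the constant (equivalently, if the $d_{\mathrm{out}}=1$ single-component output head is used, the bound applies to that one component directly and the displayed inequality holds with no extra factor).

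The asymptotic rate then follows by inspection. Taking $G_\phi^{(\ell)}=G_\Phi^{(\ell)}=G$ makes each mesh width satisfy $h_\phi^{(\ell)},h_\Phi^{(\ell)}=O(h)$ with $h=\max_\ell\{|I_\phi^{(\ell)}|,|I_\Phi^{(\ell)}|\}/(G-1)$, so that $\delta_\phi^{(\ell)},\delta_\Phi^{(\ell)}=O(h^2)=O(G^{-2})$. Crucially, the intervals $I_\phi^{(\ell)},I_\Phi^{(\ell)}$ and the curvature bounds $\|\phi^{(\ell)''}\|_\infty,\|\Phi^{(\ell)''}\|_\infty$ are properties of the \emph{ideal} network and are fixed independently of $G$; together with the uniform parameter bounds in assumption (ii), this keeps every prefactor $L_{\Phi}^{(\ell)},R_{\mathrm{mix}}^{(\ell)},\|\lambda^{(\ell)}\|_1,L_{T^{(m)}}$ bounded as $G\to\infty$. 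Since the outer sum has only $L$ terms, the whole expression is $O(G^{-2})$.

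The one genuinely delicate point, and where I expect the only real obstacle, is assumption (iii): the bounded-propagation hypothesis that \emph{both} trajectories remain in the common sets $\mathcal{B}_\ell$. It is what allows each block's smoothness constants to be evaluated on intervals fixed independently of $G$ (so they do not blow up under grid refinement), and what justifies evaluating $\varepsilon_\ell$ at the approximated input $\hat{\mathbf{h}}^{(\ell-1)}\in\mathcal{B}_{\ell-1}$ inside the composition step. Here I would simply invoke it as a hypothesis, exactly as the corollary does. A fully self-contained argument would instead bootstrap it: for $G$ large enough the accumulated error $E_{\ell-1}=O(G^{-2})$ keeps $\hat{\mathbf{h}}^{(\ell-1)}$ within a fixed neighborhood of the compact exact trajectory, so one may enlarge each $\mathcal{B}_\ell$ by a constant margin and close the induction on $\ell$. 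I would flag that this is the place where the constant-versus-$G$ accounting must be done carefully, but it does not affect the stated $O(G^{-2})$ rate.
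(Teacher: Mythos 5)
Your proposal is correct and matches the paper's (implicit) proof exactly: the corollary is presented in the paper as a direct assembly of Lemma~\ref{lem:single_block} (blockwise error with $B_{\mathrm{in}}=B_{\mathrm{in}}^{(\ell)}$), Lemma~\ref{lem:block_lip} (Lipschitz constants, plus the residual-path term), and Lemma~\ref{lem:error_composition} (telescoping composition), followed by the same fixed-interval/fixed-curvature inspection for the $\mathcal{O}(G^{-2})$ rate. Your treatment of the output head --- observing that the default summed scalar output $f=\sum_{q}\mathbf{h}^{(L)}_q$ strictly incurs a benign factor $d_L$ relative to the $\ell_\infty$ bound on $\mathbf{h}^{(L)}$, which the paper's displayed inequality silently omits --- is in fact slightly more careful than the paper's own statement.
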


\begin{remark}[Impact of lateral mixing and cyclic residuals]
Lateral mixing appears only inside the argument of $\Phi$ and inflates constants via $R_{\mathrm{mix}}^{(\ell)}=1+|\tau^{(\ell)}|\,B_\omega^{(\ell)}$ (with $B_\omega^{(\ell)}\le \|\omega^{(\ell)}\|_\infty N_{\max}^{(\ell)}$); it does not change the $G^{-2}$ rate. If a cyclic residual path $R$ is present and shared between $T^{(\ell)}$ and $\hat T^{(\ell)}$, it cancels in the blockwise difference and thus does not affect $\varepsilon_\ell$, but it \emph{does} contribute additively to $L_{T^{(\ell)}}$ in \eqref{eq:error_composition_result}. For example, when $d_{\mathrm{in}}=d_{\mathrm{out}}$ and $R(\mathbf{z})=w_{\mathrm{res}}\mathbf{z}$, we may take $L_{T^{(\ell)}}\le L_{\Phi}^{(\ell)}\,R_{\mathrm{mix}}^{(\ell)}\,\|\lambda^{(\ell)}\|_{1}\,L_{\phi^{(\ell)}} + |w_{\mathrm{res}}|$, with $L_{\phi^{(\ell)}}:=\|\phi^{(\ell)'}\|_{L^\infty(I_\phi^{(\ell)})}$. Analogous bounds hold for broadcast/pooling residuals using the operator norm of the corresponding linear map.
\end{remark}

\begin{remark}[Monotone spline parameterization]
Sprecher's construction requires the inner map to be nondecreasing. The piecewise-linear interpolant of a $C^2$ nondecreasing function on a uniform grid is itself nondecreasing (no oscillatory overshoot), and the $O(h^2)$ error bound used above holds unchanged. Hence the analysis is compatible with the monotone-spline parameterizations of $\phi$ used in this work.
\end{remark}

\begin{remark}[Depth dependence]
The $\mathcal{O}(G^{-2})$ \emph{rate} is dimension-free, while constants accumulate with depth through $\prod_{m=j+1}^{L}L_{T^{(m)}}$ as in \eqref{eq:error_composition_result}. This mirrors standard error-propagation phenomena in deep models and highlights the practical value of regularizing $\|\lambda^{(\ell)}\|_{1}$, $L_{\phi^{(\ell)}}$, and $L_{\Phi^{(\ell)}}$, and of controlling lateral mixing ($\tau^{(\ell)},\omega^{(\ell)}$).
\end{remark}

\begin{remark}[Smoothness assumptions and the original Sprecher construction]
The approximation analysis above assumes that the ideal constituent functions $\phi^{(\ell)}$ and $\Phi^{(\ell)}$ are $C^2$ (and hence Lipschitz). This assumption is natural for our \emph{spline-parameterized} networks, since piecewise-linear and PCHIP splines on bounded domains are automatically Lipschitz. However, it is worth noting that Sprecher's original 1965 existence proof constructs an outer function $\Phi$ that is only guaranteed to be \emph{continuous}, not necessarily Lipschitz or smooth. (The inner function $\phi$ in Sprecher's construction is monotonic and Lipschitz by design.) Thus, while our approximation rates apply to the class of SNs parameterized by smooth splines, they do not directly characterize the regularity of the functions guaranteed to exist by Sprecher's theorem. Extending the analysis to handle less regular $\Phi$ (e.g., merely continuous or H\"older continuous) remains an open theoretical question.
\end{remark}

\subsection{Vector-valued functions and deeper extensions}
For vector-valued functions $f: [0,1]^n \to \mathbb{R}^m$ with $m>1$, our construction appends an $(L+1)$-th block without final summation. While intuitively extending the representation, the universality of this specific construction is not directly covered by Sprecher's original theorem. The composition of multiple Sprecher blocks to create deep networks represents a natural but theoretically uncharted extension of Sprecher's construction. While single-layer universality is guaranteed, the expressive power of deep SNs remains an open question with several competing hypotheses. Depth might provide benefits analogous to those in standard neural networks: enabling more efficient representation of compositional functions, creating a more favorable optimization landscape despite the constrained parameter space, or allowing the network to gradually transform inputs into representations that are progressively easier to process. The addition of lateral mixing connections may further enhance these benefits by enabling richer transformations at each layer. Alternatively, the specific constraints of the SN architecture might interact with depth in unexpected ways, either amplifying the benefits of the structured representation or creating new challenges not present in single-layer networks.

\begin{conjecture}[Vector-valued Sprecher Representation]\label{conj:vector}
Let $n, m \in \mathbb{N}$ with $m > 1$, and let $f:[0,1]^n \to \mathbb{R}^m$ be any continuous function. Then for any $\epsilon > 0$, there exists a Sprecher Network with architecture $n \to [d_1] \to m$ (using $L=1$ hidden block of width $d_1 \ge 2n+1$ and one output block), with sufficiently flexible continuous splines $\phi^{(1)}, \Phi^{(1)}, \phi^{(2)}, \Phi^{(2)}$ ($\phi^{(1)}, \phi^{(2)}$ monotonic), appropriate parameters $\lambda^{(1)}, \eta^{(1)}, \lambda^{(2)}, \eta^{(2)}$, and optionally lateral mixing parameters, such that the network output $\hat{f}(\mathbf{x})$ satisfies $\sup_{\mathbf{x} \in [0,1]^n} \|f(\mathbf{x}) - \hat{f}(\mathbf{x})\|_{\mathbb{R}^m} < \epsilon$.
\end{conjecture}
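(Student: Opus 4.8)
The plan is to reduce the vector-valued statement to the scalar theorem (Theorem~\ref{thm:ua_single_layer}) by exploiting the fact that Sprecher's inner construction is \emph{universal}: there exist a single monotone $\phi$, weights $\lambda_1,\dots,\lambda_n$, and a shift $\eta$, all independent of the target, such that every scalar $g\in C([0,1]^n)$ is represented by \eqref{eq:sprecher_original} with only the outer function depending on $g$ \cite{sprecher1965, kolmogorov}. Applying this to each coordinate $f_0,\dots,f_{m-1}$ of $f$ yields shared inner coordinates $\xi_s(\mathbf{x}):=\sum_{p}\lambda_p\phi(x_p+\eta s)+s$, $s=0,\dots,2n$, together with outer maps $\Phi_0,\dots,\Phi_{m-1}$ satisfying $f_q(\mathbf{x})=\sum_{s=0}^{2n}\Phi_q(\xi_s(\mathbf{x}))$. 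The strategy is then to have the hidden block manufacture the shared embedding $\mathbf{x}\mapsto(\xi_0,\dots,\xi_{2n})$ and to have the output block read off, in channel $q$, the value $f_q(\mathbf{x})$.

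First I would realize the embedding in Block~1: take $\Phi^{(1)}$ strictly increasing and $d_1\ge 2n+1$, so that $\mathbf{h}^{(1)}_r=\Phi^{(1)}(\xi_r(\mathbf{x}))$ inherits injectivity from the Kolmogorov--Sprecher inner map; by compactness $\mathbf{h}^{(1)}$ is then a homeomorphism onto its image $K\subset\mathbb{R}^{d_1}$, and every $f_q$ factors as $f_q=G_q\circ\mathbf{h}^{(1)}$ with $G_q\in C(K)$ (extended to $\mathbb{R}^{d_1}$ by Tietze). Second, in the output block I would use the additive shifts $+\alpha q$ to separate channels: choosing $\alpha$ large enough that the scalar pre-activation ranges $\{\zeta_q(\mathbf{x}):\mathbf{x}\in[0,1]^n\}$ lie in pairwise disjoint intervals $J_q$ allows the \emph{single} shared spline $\Phi^{(2)}$ to act as an independent continuous map $h_q:=\Phi^{(2)}|_{J_q}$ on each channel (with arbitrary continuous interpolation across the gaps). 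Finally, the passage from these ideal continuous constituents to finite-knot monotone and general splines, and hence the $\epsilon$, would follow from uniform density of splines together with the blockwise stability estimate of Corollary~\ref{cor:sprecher_rate}.

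The hard part, and the reason I expect this to be the genuine obstacle, is that the output block aggregates \emph{after} the outer nonlinearity, computing $\Phi^{(2)}\bigl(\sum_r\lambda^{(2)}_r\phi^{(2)}(\cdots)+\alpha q\bigr)$, whereas Sprecher's representation aggregates the outer nonlinearity itself, $\sum_s\Phi_q(\xi_s)$. Consequently each component $\hat{f}_q=\Phi^{(2)}(\zeta_q)$ factors through a \emph{single scalar} pre-activation $\zeta_q(\mathbf{x})\in\mathbb{R}$ of the restricted inner-sum form, and with the disjoint-interval device above the remaining requirement is exactly that each arbitrary $f_q$ factor (approximately) through this scalar. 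For $n=1$ this is achievable: one makes $\zeta_q$ strictly monotone in $x$, so the plan goes through and gives the conjecture in dimension one. For $n\ge 2$, however, no continuous injection $[0,1]^n\hookrightarrow\mathbb{R}$ exists, so no $\zeta_q$ can separate points and a generic $f_q$ cannot be recovered as $h_q\circ\zeta_q$; the width $d_1$ and even lateral mixing do not help, since mixing only replaces $\zeta_q$ by another scalar combination of the $s_j$. Overcoming this appears to require an ingredient absent from the $n\to[d_1]\to m$ architecture, e.g.\ a summed (rather than single-$\Phi$) readout per output channel, additional depth, or genuine cross-channel coupling, and I would regard either constructing such a mechanism or else proving an impossibility result for $n\ge 2$ as the crux of the conjecture.
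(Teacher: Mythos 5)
This statement is not proven in the paper at all: it is posed as an open conjecture (Conjecture~\ref{conj:vector}), with only a closing remark that a rigorous proof would require controlling the ranges of intermediate representations and the effect of the shifts. So there is no paper proof to compare against, and your closing admission that the crux remains open is consistent with the paper's own assessment. Your first two moves are sensible and essentially correct: using the universality of Sprecher's inner map to build an injective hidden representation in Block~1, and exploiting the channel spacing to make the pre-activation ranges $J_q$ pairwise disjoint so that the single shared $\Phi^{(2)}$ acts as an independent continuous map $h_q$ on each channel. Your observation that the plan closes for $n=1$ (where $\zeta_q$ can be made strictly monotone, hence invertible) is also correct.

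However, your diagnosis of the obstacle for $n\ge 2$ is wrong in a concrete way, and this matters because you elevate it to a proposed impossibility. You argue: each output factors through one scalar $\zeta_q$; no continuous injection $[0,1]^n\hookrightarrow\mathbb{R}$ exists; hence a generic $f_q$ cannot be written as $h_q\circ\zeta_q$. The inference fails because approximation does not require $\zeta_q$ to separate points of the cube --- it only requires the level sets of $\zeta_q$ to (approximately) refine those of $f_q$, which holds trivially when $\zeta_q$ is itself a reparameterization of $f_q$. The realizable $\zeta_q$ are rich enough for exactly this: only $\phi^{(1)},\phi^{(2)}$ are constrained to be monotone, while $\Phi^{(1)}$ is arbitrary, so take $\phi^{(2)}=\mathrm{id}$, $\lambda^{(2)}_r\equiv 1$, $d_1=2n+1$, Block-1 parameters realizing Sprecher's inner sums $s_r$ (with $\alpha=1$), and $\Phi^{(1)}=\Phi_\ast$, Sprecher's (non-monotone) outer function for $f_q$ in \eqref{eq:sprecher_original}. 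Then $\zeta_q=\sum_{r=0}^{2n}\Phi_\ast(s_r(\mathbf{x}))+\bigl((2n+1)\eta^{(2)}+1\bigr)q=f_q(\mathbf{x})+\bigl((2n+1)\eta^{(2)}+1\bigr)q$ exactly, and an identity-like $\Phi^{(2)}$ recovers $f_q$. So the scalar bottleneck is no obstruction for any single channel; indeed your step-1 decision to force $\Phi^{(1)}$ to be strictly increasing is what manufactures your dead end, since it is precisely the freedom to let $\Phi^{(1)}$ play the role of Sprecher's outer function that gives Block~2's inner sum its Kolmogorov-type expressive power. The genuine open difficulty is not point separation but \emph{simultaneity}: all channels share $(\phi^{(2)},\Phi^{(1)},\Phi^{(2)},\lambda^{(2)},\eta^{(2)})$ and differ only through the shifts $\eta^{(2)}q$ and $\alpha q$. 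The single-channel construction above is degenerate in exactly this respect: with $\phi^{(2)}=\mathrm{id}$ every channel computes the same latent scalar up to additive constants, and a map of the form $\mathbf{x}\mapsto(\Psi_0(g(\mathbf{x})),\dots,\Psi_{m-1}(g(\mathbf{x})))$ with a \emph{common} $g$ provably cannot approximate, e.g., the identity on $[0,1]^2$ within $\epsilon<1/4$ (a winding-number argument: a loop factoring through an interval is null-homotopic in the punctured plane). Hence any proof must exploit $\eta^{(2)}\neq 0$ to decouple the channels, and any impossibility proof must rule out all such shift-coupled families; your argument does neither. As it stands, the proposal neither proves the conjecture nor gives valid evidence against it --- the obstruction you name is not an obstruction.
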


Furthermore, stacking multiple Sprecher blocks ($L > 1$) creates deeper networks. It is natural to hypothesize that these deeper networks also possess universal approximation capabilities, potentially offering advantages in efficiency or learning dynamics for certain function classes, similar to depth advantages observed in MLPs. The role of lateral mixing in enhancing or modifying these universality properties remains unexplored.

\begin{conjecture}[Deep universality]\label{conj:deep_universal}
For any input dimension $n \ge 1$, any number of hidden blocks $L \ge 1$, and any continuous function $f: [0,1]^n \to \mathbb{R}$ (or $f: [0,1]^n \to \mathbb{R}^m$), and any $\epsilon > 0$, there exists a Sprecher Network with architecture $n \to [d_1, \dots, d_L] \to 1$ (or $\to m$), provided the hidden widths $d_1, \dots, d_L$ are sufficiently large (e.g., perhaps $d_\ell \ge 2 d_{\ell-1} + 1$ is sufficient, although likely not necessary), with sufficiently flexible continuous splines $\phi^{(\ell)}, \Phi^{(\ell)}$, appropriate parameters $\lambda^{(\ell)}, \eta^{(\ell)}$, and optionally lateral mixing parameters $\tau^{(\ell)}, \omega^{(\ell)}$, such that the network output $\hat{f}(\mathbf{x})$ satisfies $\sup_{\mathbf{x} \in [0,1]^n} |f(\mathbf{x}) - \hat{f}(\mathbf{x})| < \epsilon$ (or the vector norm equivalent).
\end{conjecture}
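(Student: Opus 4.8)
The plan is to reduce Conjecture~\ref{conj:deep_universal} to the base cases already in hand---Theorem~\ref{thm:ua_single_layer} for scalar output and Conjecture~\ref{conj:vector} for vector output---by showing that the ``extra'' blocks of a deep SN can be made harmless. Two mechanisms accomplish this, and I would present whichever matches the admissible family of blocks.

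\emph{Residual route.} A Sprecher block with $\Phi^{(\ell)}\equiv 0$ and an identity cyclic residual (weight $w_{\mathrm{res}}=1$, valid when $d_{\ell-1}=d_\ell$) computes the \emph{exact} identity, since its pre-residual output vanishes and the residual merely copies the input. For a scalar target $f:[0,1]^n\to\mathbb{R}$ I would fix widths $d_1=\dots=d_{L-1}=n$ and $d_L=2n+1$, set blocks $1,\dots,L-1$ to the identity as above, and let block $L$ be Sprecher's single-layer representation of $f$; by Theorem~\ref{thm:ua_single_layer} the summed output equals $f$ with continuous constituents, hence within $\epsilon$ after uniformly approximating $\phi^{(L)},\Phi^{(L)}$ by splines (the propagation of these discretization errors is controlled by Lemma~\ref{lem:error_composition}). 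For a vector target, setting blocks $2,\dots,L$ to the identity collapses the network to the $n\to[d_1]\to m$ architecture of Conjecture~\ref{conj:vector}. Since residual connections belong to the SN family (Section~\ref{sec:optional_enhancements}), this route is admissible; if one insists on the literal, residual-free statement, I would use the next construction.

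\emph{Residual-free route} (this also explains the width hint $d_\ell\ge 2d_{\ell-1}+1$). Here I would realize the first $L-1$ blocks as continuous \emph{injective embeddings}: using Sprecher's inner construction (monotone $\phi^{(\ell)}$, point-separating $\lambda^{(\ell)},\eta^{(\ell)}$) with width $d_\ell=2d_{\ell-1}+1$, the inner map $\mathbf{z}\mapsto\bigl(\sum_i\lambda^{(\ell)}_i\phi^{(\ell)}(z_i+\eta^{(\ell)}q)\bigr)_q$ is injective, and composing with a strictly monotone outer spline $\Phi^{(\ell)}$ keeps the block injective. Thus $\Psi:=T^{(L-1)}\circ\cdots\circ T^{(1)}$ is a homeomorphism onto a compact set $K$; then $f\circ\Psi^{-1}$ is continuous on $K$, extends by the Tietze extension theorem to a continuous $g$ on an enclosing box, and the final block (scalar: a summed Sprecher representation of $g$; vector: the output block) finishes the job, with total error again bounded via Lemma~\ref{lem:error_composition}. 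Making the embedding claim fully rigorous---that a single block with strictly monotone $\Phi^{(\ell)}$ is injective on a box---is a lemma that follows from, but is slightly stronger than, the separation property underlying Sprecher's theorem, and I would state and verify it explicitly.

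I expect the depth extension itself to be the easy part: once a base case is available, both routes are essentially bookkeeping (identity emulation, or embedding $+$ Tietze $+$ composition-error). The genuine obstacle is the vector base case, Conjecture~\ref{conj:vector}: with a \emph{single} shared monotone inner spline $\phi^{(2)}$ and a \emph{single} shared outer spline $\Phi^{(2)}$ in the output block, one must realize $m$ distinct continuous components simultaneously. I would attempt a \emph{channel-separation} construction, choosing $\eta^{(2)}$ (and $\alpha$) so large that the inner arguments $h^{(1)}_r+\eta^{(2)}q$ and the outer arguments $s_q$ occupy pairwise disjoint intervals across $q$, so that the globally monotone $\phi^{(2)}$ and the general $\Phi^{(2)}$ can behave as $m$ independent maps---one per disjoint interval, joined by monotone bridges across the gaps. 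The delicate point is the circular dependence: the shared $\phi^{(2)}$ sits inside the very sums $s_q$ whose ranges must be kept disjoint, so one cannot guarantee separation while fitting all $m$ targets by invoking the single-output theorem as a black box; it appears to require redoing Sprecher's construction in a vector-valued form yielding one inner/outer pair that serves all coordinates at once. This simultaneous multi-target construction, rather than the depth argument, is where I expect the real work to lie.
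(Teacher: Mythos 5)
First, a point of calibration: the paper does \emph{not} prove this statement --- it is posed as an open conjecture (alongside Conjecture~\ref{conj:vector}), and the surrounding text only remarks that a proof would require controlling the ranges of intermediate representations. So your proposal cannot be compared against a proof in the paper; it must stand on its own. On its own terms, the scalar-output half of your reduction is sound in outline, and writing it out carefully would actually go beyond what the paper establishes. The identity-block trick ($\Phi^{(\ell)}\equiv 0$ plus unit residual) is exactly right, and your injectivity lemma for the residual-free route is provable by the argument you gesture at: in Sprecher's theorem the inner data $(\phi,\lambda,\eta)$ are \emph{universal} (only $\Phi$ depends on $f$), so if two distinct points had identical inner vectors, no separating $f$ could be represented --- note this requires the $f$-independent form of Sprecher's result, which the paper's restatement in Theorem~\ref{thm:ua_single_layer} (quantifying $f$ first) does not make explicit. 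Two repairs are needed, though. (i) You invoke Lemma~\ref{lem:error_composition} to propagate spline-discretization errors, but that lemma assumes each block is Lipschitz, whereas Sprecher's outer function $\Phi$ is only guaranteed continuous --- a gap the paper itself flags in its remark on smoothness assumptions. In the residual-free route the error from the embedding blocks is multiplied by a Lipschitz constant of the final block, which may not exist; you must instead run a qualitative argument with moduli of uniform continuity on compact sets (which suffices for an $\epsilon$-statement, since no rate is claimed). (ii) The conjecture reads ``provided the hidden widths are sufficiently large,'' i.e.\ for \emph{all} sufficiently large widths, while your constructions pin specific widths ($d_\ell=n$, or $d_\ell=2d_{\ell-1}+1$); this is fixable by padding (wider embeddings remain injective, and the Tietze extension absorbs redundant coordinates), but it is not addressed.

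The genuine gap is the vector-output half, and you have diagnosed it correctly yourself: both of your routes collapse the deep vector case onto Conjecture~\ref{conj:vector}, which is open, so even a fully detailed write-up of your proposal proves at most the scalar half of Conjecture~\ref{conj:deep_universal} and leaves the rest conditional. Your analysis of why the base case is hard --- a single shared pair $(\phi^{(L+1)},\Phi^{(L+1)})$ must realize $m$ distinct components, and the channel-separation idea suffers a circular dependence because the shared $\phi$ sits inside the very sums whose ranges must be kept disjoint --- is accurate and consistent with the paper's own remark that these conjectures require careful control of intermediate ranges. In summary: this is a useful partial result (scalar deep universality, modulo the Lipschitz-vs-continuity and width-quantifier repairs), not a proof of the statement; the vector case, which is where the paper's conjecture has real content beyond Theorem~\ref{thm:ua_single_layer}, remains untouched.
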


Proving Conjectures \ref{conj:vector} and \ref{conj:deep_universal} rigorously would require analyzing the compositional properties and ensuring that the range of intermediate representations covers the domain needed by subsequent blocks, potentially involving careful control over the spline ranges, the effect of the shifts $\eta^{(\ell)}$, and the impact of lateral mixing on the network's expressive power.

\section{Implementation considerations}\label{sec:implementation}

\subsection{Trainable splines}

For practical implementations, we support both piecewise-linear splines and $C^1$ cubic Hermite splines with \textsc{PCHIP} (Piecewise Cubic Hermite Interpolating Polynomial) slopes for both $\phi^{(\ell)}$ and $\Phi^{(\ell)}$. We use \textsc{PCHIP} (rather than, e.g., a B-spline basis) primarily for its shape-preservation: when the knot values are monotone it yields a monotone interpolant without oscillatory overshoot, which complements the monotonicity constraint used for $\phi^{(\ell)}$ in Sprecher Networks. Piecewise-linear splines are the default when only function values are required (high efficiency, simple evaluation); the cubic option is used when higher-order derivatives must flow through the spline---e.g., the Poisson PINN benchmark, which requires second derivatives (defined a.e.\ for $C^1$ Hermite splines). In both cases, a spline with $G$ uniformly spaced knot locations (thus $G-1$ intervals) is parameterized by the values at these $G$ knot locations (thus $O(G)$ parameters per spline); for the cubic case, the \textsc{PCHIP} knot slopes are computed deterministically from these knot values. This retains a simple ``values-at-knots'' parameterization, which we will also exploit for efficient range propagation (Lemma~\ref{lem:domain_prop}). For range propagation, piecewise-linear extrema on an interval occur at the interval endpoints and/or knot points. For cubic Hermite/\textsc{PCHIP} splines, extrema over an interval may occur at the interval endpoints, at interior knot points, or at interior critical points of the cubic segments (when present); we therefore evaluate all such candidates in the theoretical domain propagation of Section~\ref{sec:theoretical_domains} to obtain sound and tight range bounds. The knot count $G$ is a hyperparameter that trades off expressivity and computational cost.

For the inner spline $\phi^{(\ell)}$, we fix the codomain to $[0,1]$ and enforce \emph{strict} monotonicity of the knot values by parameterizing positive increments and normalizing. Concretely, let $(v_k)_k$ be unconstrained parameters and set $\Delta_k=\operatorname{softplus}(v_k)>0$; then define
$$u_k \;=\; \sum_{i\le k}\Delta_i,\qquad c_k \;=\; \frac{u_k}{u_{G-1}+\varepsilon},$$
with a tiny $\varepsilon>0$ (we use $\varepsilon=10^{-8}$) to prevent division by zero; this yields strictly increasing knot values with $0<c_0<c_1<\cdots<c_{G-1}<1$ (and $c_{G-1}\to 1$ as $\varepsilon\to 0$). Interpolating these knots with either piecewise--linear segments or a shape--preserving monotone $C^1$ cubic Hermite (\textsc{PCHIP}) scheme yields a monotone non-decreasing spline on its domain. Many KAN implementations~\cite{liu2024kan} parameterize edge activations using B-spline bases; we use piecewise-linear / \textsc{PCHIP} interpolation instead because $\phi$ is constrained to be monotone: with \textsc{PCHIP}, monotonicity can be enforced simply by requiring increasing knot values (as above), and the interpolation is shape-preserving (avoiding overshoot). Outside the domain we use constant extension, so $\phi^{(\ell)}(x)=0$ for inputs below the leftmost knot and $\phi^{(\ell)}(x)=1$ above the rightmost knot. The increments are initialized nearly uniform so that $\phi^{(\ell)}$ starts close to linear.

The outer spline $\Phi^{(\ell)}$ operates on a domain determined by the block’s bounds. Optionally, we parameterize its codomain via an interval $[\,c_c^{(\ell)}-c_r^{(\ell)},\,c_c^{(\ell)}+c_r^{(\ell)}\,]$ with trainable center $c_c^{(\ell)}$ and scale (initialized positive) $c_r^{(\ell)}$; when this codomain parameterization is disabled, $\Phi^{(\ell)}$ is used directly without an explicit codomain normalization, so its output range is determined by its knot values. When enabled, we initialize $(c_c^{(\ell)},c_r^{(\ell)})$ from the computed $\Phi^{(\ell)}$ input domain (i.e., the pre-activation range) so that $\Phi^{(\ell)}$ is near identity at initialization, and the parameters then adapt the output range during training. Monotonicity is not required for $\Phi^{(\ell)}$.

\subsection{Shifts, weights, and optimization}
Each block includes the learnable scalar shift $\eta^{(\ell)}$ and the learnable mixing weight vector $\lambda^{(\ell)} \in \mathbb{R}^{d_{\ell-1}}$. A practical default is to initialize $\lambda^{(\ell)}$ with a variance-preserving scheme (e.g.\ i.i.d.\ $\mathcal{N}(0,2/d_{\ell-1})$) so that the pre-activations $s_q^{(\ell)}$ start at $O(1)$ scale, and to initialize $\eta^{(\ell)}$ to a small positive value on the order of $1/d_\ell$, so that the initial shifts $\eta^{(\ell)}q$ span an $O(1)$ range across $q=0,\dots,d_\ell-1$. While Sprecher's original construction requires $\eta>0$, practical implementations can relax this constraint: the theoretical domain computation in Lemma~\ref{lem:domain_prop} supports both positive and negative values of $\eta^{(\ell)}$, so we treat $\eta^{(\ell)}$ as an unconstrained parameter during training.

When lateral mixing is enabled, each block additionally includes a lateral scale parameter $\tau^{(\ell)}$ (typically initialized to a small value like 0.1) and lateral mixing weights $\omega^{(\ell)}$. For cyclic mixing, this involves $d_\ell$ weights (one per output), while bidirectional mixing requires $2d_\ell$ weights (forward and backward for each output). These weights are typically initialized to small values to ensure training begins with minimal lateral communication, allowing the network to gradually learn the optimal mixing patterns.

All learnable parameters (spline coefficients, $\eta^{(\ell)}$, $\lambda^{(\ell)}$, lateral mixing parameters, and potentially range parameters for $\Phi^{(\ell)}$) are trained jointly using gradient-based optimization methods like Adam \cite{kingma2014adam} or LBFGS. The loss function is typically Mean Squared Error (MSE) for regression tasks. Due to the constrained parameter space and shared spline structure, SNs may require more training iterations than equivalent MLPs or KANs to converge; per-iteration wall-clock cost is workload- and implementation-dependent (fewer parameters reduce optimizer overhead, but spline evaluation and memory-saving recomputation can increase runtime).

\subsection{Memory-efficient forward computation}
\label{sec:memory_efficient}

A key advantage of SNs' vector-based weight structure extends beyond parameter count to memory footprint, in particular to peak forward-intermediate memory. This is particularly relevant when memory (not compute) is the primary constraint, e.g.\ on-device/edge inference, embedded accelerators, or training regimes where activation memory dominates. While dense MLP layers require $O(N^2)$ parameters (and thus memory) for their weight matrices alone, SN blocks have only $O(N)$ vector parameters (plus $O(G)$ spline parameters) and can be evaluated without materializing the full $d_{\mathrm{in}}\times d_{\mathrm{out}}$ outer-product intermediate.

The standard forward pass for a Sprecher block naively computes:
$$\text{shifted}_{b,i,q} = x_{b,i} + \eta q \quad \forall b \in [B], i \in [d_{\mathrm{in}}], q \in [d_{\mathrm{out}}]$$
storing the full tensor before applying $\phi$, requiring $O(B \cdot d_{\mathrm{in}} \cdot d_{\mathrm{out}})$ memory. However, we can compute the \emph{unmixed} pre-activations
$$s_{b,q} = \sum_{i=1}^{d_{\mathrm{in}}} \lambda_i \phi(x_{b,i} + \eta q) + \alpha q$$
sequentially for $q = 0, \ldots, d_{\mathrm{out}}-1$ without materializing $\text{shifted}_{b,i,q}$. When lateral mixing is enabled, we then apply
$$\tilde{s}_{b,q} = s_{b,q} + \tau \sum_{j\in \mathcal{N}(q)} \omega_{q,j}\, s_{b,j}$$
to the assembled vector $(s_{b,q})_{q=0}^{d_{\mathrm{out}}-1}$ before evaluating $\Phi(\tilde{s}_{b,q})$ (and adding the residual path).

This reformulation reduces peak \emph{forward-intermediate} memory from $O(B \cdot N^2)$ to $O\!\bigl(B \cdot \max(d_{\mathrm{in}}, d_{\mathrm{out}})\bigr)$ during computation, while producing \emph{mathematically identical} results. Combined with SNs' $O(N+G)$ parameter memory per block, the total (parameters + peak forward intermediates) scales as $O\!\bigl(N+G + B \cdot \max(d_{\mathrm{in}}, d_{\mathrm{out}})\bigr)$; for fixed $B$ and $G$, this is $O(N)$, compared to dense MLPs' $O(N^2)$ parameter memory per block. (In standard reverse-mode autodiff frameworks, training may require additional saved state for backpropagation; the bound above refers to forward intermediates/inference, or to training with recomputation/checkpointing.)

In practice, this enables training wider architectures under fixed memory budgets by avoiding materialization of the full $(B\times d_{\mathrm{in}}\times d_{\mathrm{out}})$ tensor of shifted inputs. The runtime impact is workload-dependent: sequential evaluation reduces intra-layer parallelism, but can also improve cache behavior and reduce allocator pressure for very large layers.

\begin{remark}[Preservation of mathematical structure]
The sequential computation is a pure implementation optimization that is mathematically exact: it produces identical outputs to the naive parallel formulation for any choice of parameters and inputs. It only reduces peak memory by avoiding materialization of the full $(B\times d_{\mathrm{in}}\times d_{\mathrm{out}})$ tensor of shifted inputs; it does not change the function being computed.
\end{remark}

\begin{remark}[Parallelism vs.\ memory trade-off]
Sequential (per-$q$) evaluation reduces peak activation memory from $O(B\,d_{\mathrm{in}} d_{\mathrm{out}})$ to $O\bigl(B\,\max\{d_{\mathrm{in}}, d_{\mathrm{out}}\}\bigr)$, at the cost of less intra-layer parallelism. This is often advantageous on accelerators with ample compute but constrained memory. 
\end{remark}

\subsection{Embedded device implementation}
\label{sec:embedded}

As a practical demonstration of SNs' memory efficiency, we implement an SN for digit classification on a severely resource-constrained embedded device: a handheld gaming console with a 67\,MHz ARM9 processor, no floating-point unit (FPU), and only 4\,MB of RAM. Inference is performed via an emulator using Q16.16 fixed-point arithmetic.

We train an SN with architecture $784\to[100,100]\to10$ for MNIST digit classification, requiring only 2{,}613 parameters. An analogous MLP with comparable layer structure would require approximately 89{,}610 parameters---over 34$\times$ more. Figure~\ref{fig:embedded-demo} shows the network successfully classifying hand-drawn digits in real time, with inference completing in approximately 2 seconds per digit. The top screen displays the architecture information and softmax output probabilities, while the bottom touchscreen serves as the input canvas.

We also train a deeper SN with architecture $784\to[1280,1280,1280]\to10$, amounting to 16{,}594 parameters. An equivalent-depth MLP at width 1280 would require approximately 4.3 million parameters, exceeding the device's available memory entirely. This demonstrates that the linear parameter scaling of SNs enables deployment of meaningfully expressive models on hardware where traditional dense architectures are simply infeasible.

\begin{figure}[ht]
\centering
\includegraphics[width=0.28\textwidth]{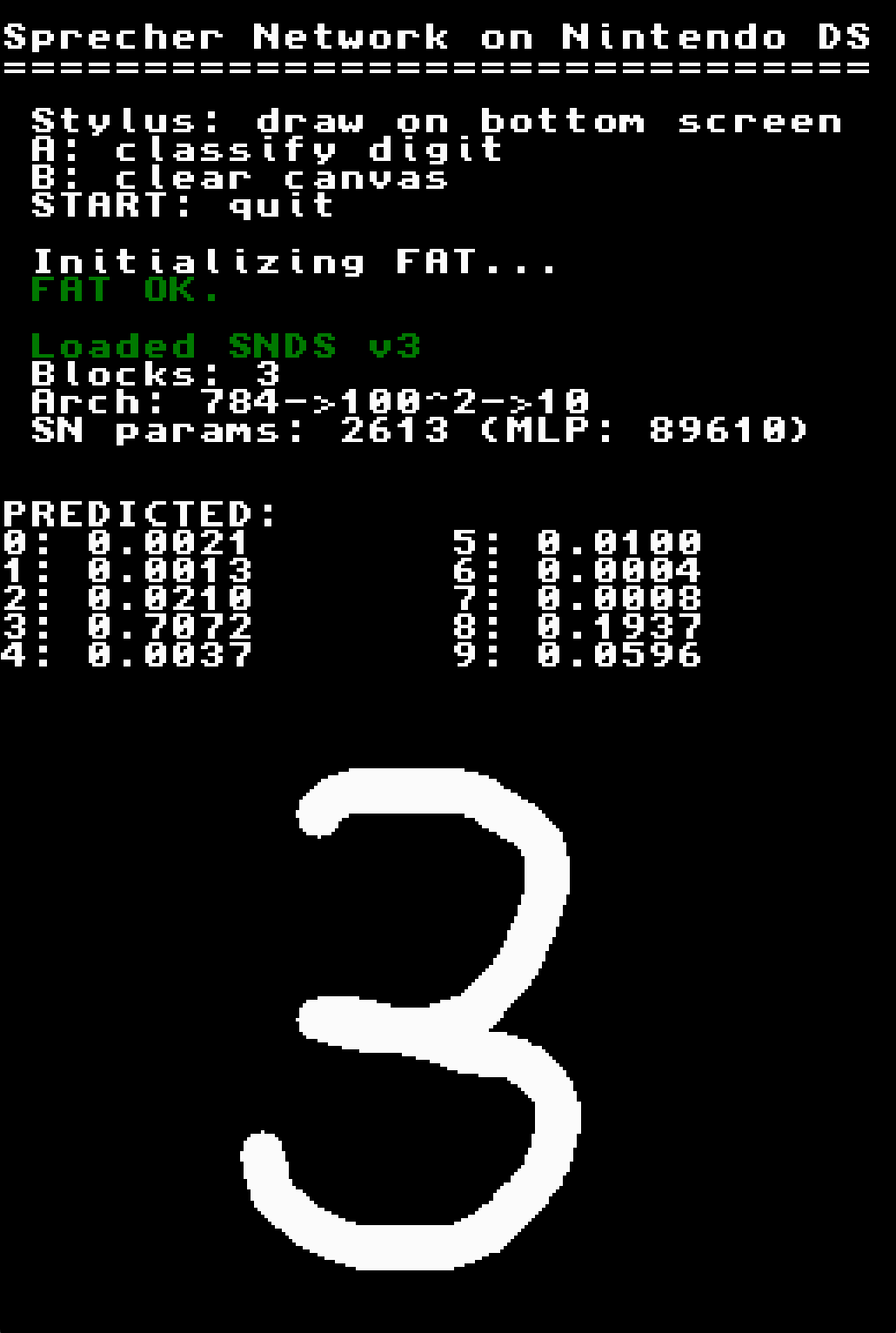}
\hfill
\includegraphics[width=0.28\textwidth]{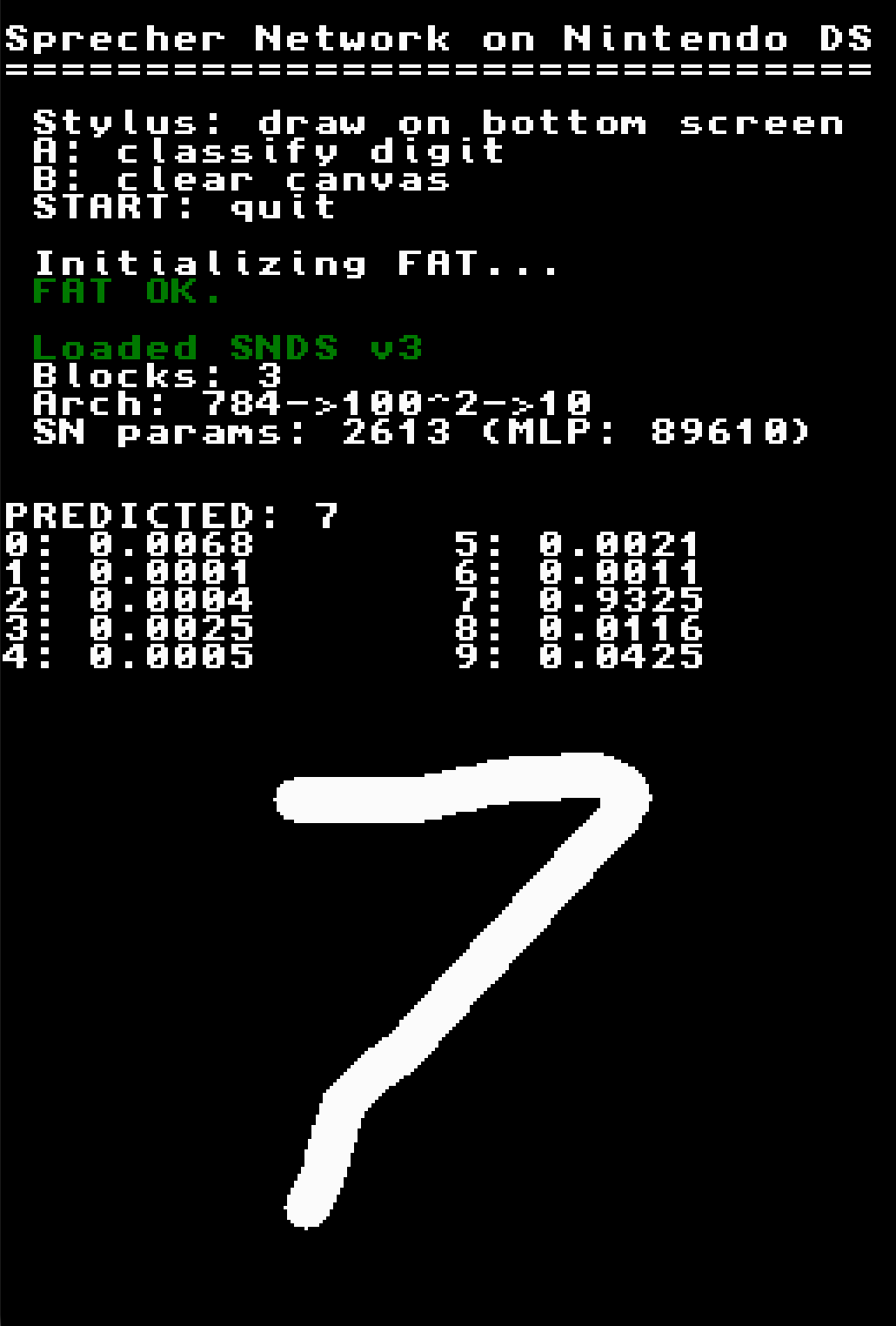}
\hfill
\includegraphics[width=0.28\textwidth]{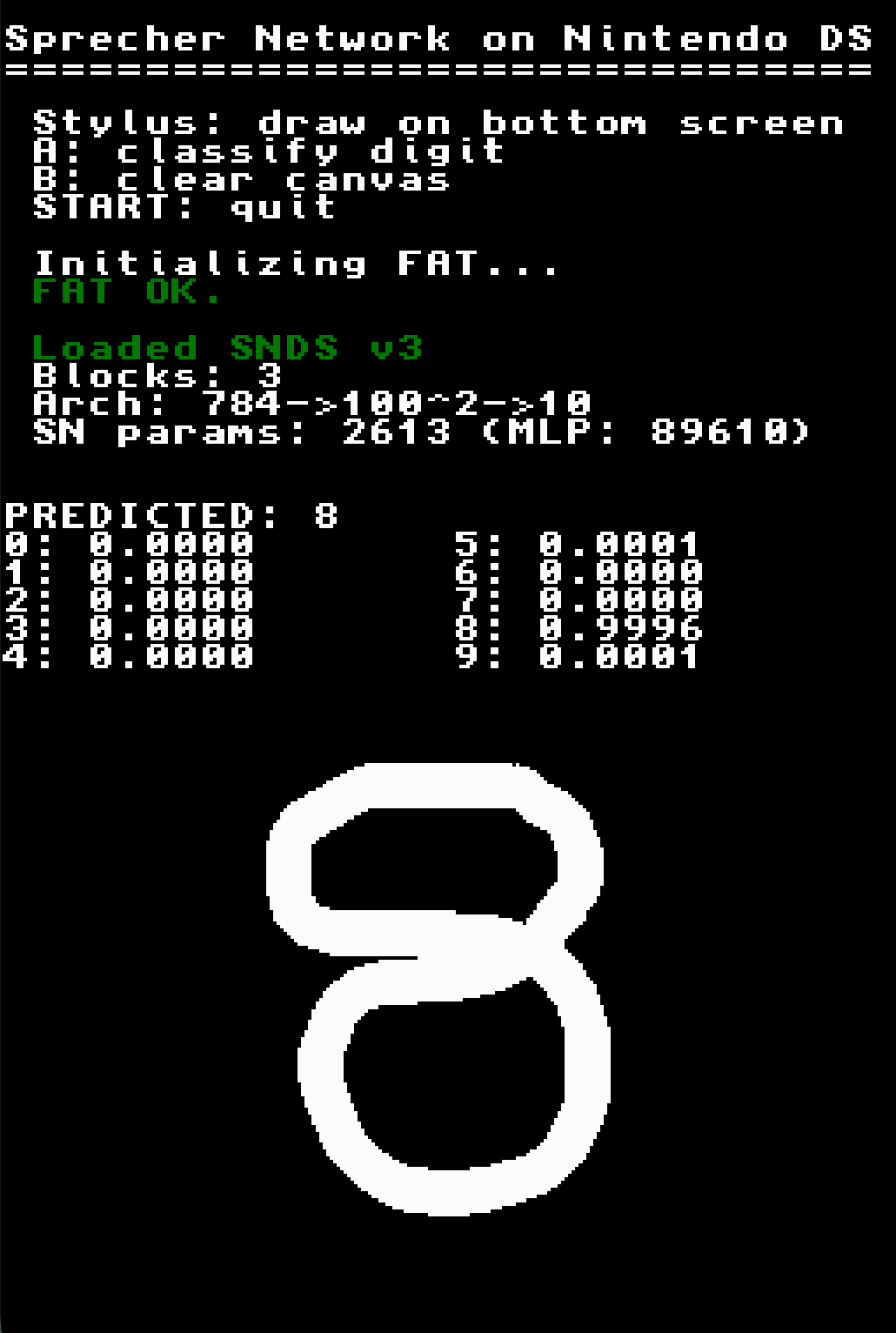}
\caption{Real-time inference on a resource-constrained embedded device (67\,MHz ARM9 processor, no FPU, 4\,MB RAM). An SN with 2{,}613 parameters classifies hand-drawn digits using Q16.16 fixed-point arithmetic. The top screen displays architecture information and softmax probabilities; the bottom touchscreen shows the user-drawn input digit.}
\label{fig:embedded-demo}
\end{figure}

\subsection{Theoretical domain computation}
\label{sec:theoretical_domains}
One advantage of the structured Sprecher architecture is the ability to compute \emph{sound} (typically conservative) interval bounds for spline inputs and outputs, which we use to adapt spline domains during training. When normalization layers are inserted between blocks, we propagate post-normalization bounds as described in Section~\ref{sec:normalization} (for BatchNorm in batch-statistics mode these are heuristic, whereas in running-statistics mode the bounds are exact since the transform is affine). For a fixed input interval to a spline, the exact range of a piecewise-linear spline on that interval is attained at the endpoints and any interior knots; for cubic Hermite/\textsc{PCHIP} splines, we additionally evaluate any interior critical points of the cubic segments (when present) to obtain sound range bounds. When these per-spline range computations are propagated through multiple blocks using interval arithmetic, the resulting network-level bounds are generally conservative (since correlations between coordinates are ignored), but are typically much tighter than naive global bounds. Throughout this analysis, we assume inputs lie in $[0, 1]^n$, though the methodology extends naturally to any bounded domain. This assumption enables principled initialization and can aid in debugging and analysis.

\begin{lemma}[Domain propagation through Sprecher blocks]\label{lem:domain_prop}
Consider a Sprecher block where inputs lie in a box $\prod_{i=1}^{d_{\mathrm{in}}}[a_i,b_i]$ (and define $a:=\min_i a_i$ and $b:=\max_i b_i$), with shift parameter $\eta$, weights $\lambda_i$, lateral mixing parameters $\tau, \omega$, and output dimension $d_{\mathrm{out}}$. Let us distinguish between a spline's \emph{domain} (the interval of valid inputs) and its \emph{codomain} (the interval of possible outputs). Then:
\begin{enumerate}
\item The domain required for $\phi$ to handle all possible inputs is:
$$\mathcal{D}_\phi = \begin{cases}
[a, b + \eta(d_{\mathrm{out}}-1)] & \text{if } \eta \geq 0 \\
[a + \eta(d_{\mathrm{out}}-1), b] & \text{if } \eta < 0
\end{cases}$$

\item Without lateral mixing, monotonicity of $\phi$ implies $\phi(x_i+\eta q)\in[\phi(a+\eta q),\,\phi(b+\eta q)]$ for all $i$ and $q$. Let
$\lambda^{+}:=\sum_{i:\lambda_i\ge 0}\lambda_i$ and $\lambda^{-}:=\sum_{i:\lambda_i<0}\lambda_i$.
Then for each head index $q$,
$$
s_{q}^{\min,\mathrm{unmixed}}=\lambda^{+}\,\phi(a+\eta q)+\lambda^{-}\,\phi(b+\eta q)+\alpha q,\qquad
s_{q}^{\max,\mathrm{unmixed}}=\lambda^{+}\,\phi(b+\eta q)+\lambda^{-}\,\phi(a+\eta q)+\alpha q.
$$
A valid domain for $\Phi$ is therefore $\mathcal{D}_{\Phi}=\bigl[\min_{q}s_{q}^{\min,\mathrm{unmixed}},\;\max_{q}s_{q}^{\max,\mathrm{unmixed}}\bigr]$.
(As a coarse bound, if $\phi$ is parameterized to satisfy $\phi(\cdot)\in[0,1]$, one may replace the unknown values $\phi(a+\eta q)$ and $\phi(b+\eta q)$ by $0$ and $1$, respectively.)

\item With lateral mixing, for cyclic mixing with scale $\tau$ and weights $\omega$, the domain for $\Phi$ must account for sign-aware neighbor contributions. Using the \emph{unmixed} bounds from the previous item, for each output $q$ the mixed pre-activation bounds are:
$$\begin{aligned}
s^{\min}_q &= s^{\min,\text{unmixed}}_q + (\tau\omega_q)^+ s^{\min,\text{unmixed}}_{(q+1) \bmod d_{\mathrm{out}}} + (\tau\omega_q)^- s^{\max,\text{unmixed}}_{(q+1) \bmod d_{\mathrm{out}}}\\
s^{\max}_q &= s^{\max,\text{unmixed}}_q + (\tau\omega_q)^+ s^{\max,\text{unmixed}}_{(q+1) \bmod d_{\mathrm{out}}} + (\tau\omega_q)^- s^{\min,\text{unmixed}}_{(q+1) \bmod d_{\mathrm{out}}}
\end{aligned}$$
where for any scalar $c$ we write $c^+ = \max(c,0)$ and $c^- = \min(c,0)$. For bidirectional mixing, apply the analogous sign-split bound to each neighbor contribution $(\tau\omega_{q,\text{fwd}})\,s_{(q+1)\bmod d_{\mathrm{out}}}$ and $(\tau\omega_{q,\text{bwd}})\,s_{(q-1)\bmod d_{\mathrm{out}}}$, and sum the contributions.

\item Tight output bounds require checking all candidate extrema of $\Phi$ over the interval. For piecewise linear $\Phi$, extrema occur at endpoints and knot points, so it suffices to evaluate $\Phi$ at interval endpoints and all knots within the interval. For cubic $\Phi$ (PCHIP), the interpolant is shape-preserving (it does not overshoot the knot values). However, since a cubic segment can attain its extrema at interior critical points, tight range bounds require evaluating $\Phi$ at the interval endpoints, all knots within the interval, and any interior critical points that lie within the interval. (If an affine codomain reparameterization $(c_c,c_r)$ is used, it is an affine map of the underlying spline values; it does not change the set of candidate points for extrema, only rescales and shifts the resulting values.) This yields correct range propagation even for oscillatory $\Phi$.

\item When residual connections are present, the output range must be adjusted by the residual contribution. For a scalar residual weight $w$ (the $d_{\mathrm{in}}=d_{\mathrm{out}}$ case), the residual term in coordinate
$q$ is $r_q=w x_{q+1}$ and lies in $[\min\{w a_{q+1},w b_{q+1}\},\max\{w a_{q+1},w b_{q+1}\}]$. For a linear residual projection matrix $W\in\mathbb{R}^{d_{\mathrm{in}}\times d_{\mathrm{out}}}$ (the $d_{\mathrm{in}}\neq d_{\mathrm{out}}$ case), the residual term in coordinate $q$ is $r_q=\sum_{i=1}^{d_{\mathrm{in}}}W_{i,q}x_i$ and lies in:
$$r_q^{\min}=\sum_{i=1}^{d_{\mathrm{in}}}\bigl(W^{+}_{i,q}a_i+W^{-}_{i,q}b_i\bigr),\qquad
r_q^{\max}=\sum_{i=1}^{d_{\mathrm{in}}}\bigl(W^{+}_{i,q}b_i+W^{-}_{i,q}a_i\bigr),$$
where $W^{+}=\max(W,0)$ and $W^{-}=\min(W,0)$ elementwise. These adjustments can be applied elementwise. For Cyclic residuals:

   \begin{itemize}

\item 
\textbf{Broadcast}: Each output $q$ receives $r_q = w_q^{\text{bcast}} \cdot x_{1+(q \bmod d_{\mathrm{in}})}$, hence
$$r_q \in \bigl[\min\{w_q^{\text{bcast}} a_{1+(q \bmod d_{\mathrm{in}})}, w_q^{\text{bcast}} b_{1+(q \bmod d_{\mathrm{in}})}\}, \max\{w_q^{\text{bcast}} a_{1+(q \bmod d_{\mathrm{in}})}, w_q^{\text{bcast}} b_{1+(q \bmod d_{\mathrm{in}})}\}\bigr].$$
   \item 
\textbf{Pooling}: With assignment $i \mapsto q(i)$ and weights $w_i$, the contribution to output $q$ is:
   $$
r^{\min}_q = \sum_{i: q(i)=q} \min\{w_i a_i, w_i b_i\}, \quad r^{\max}_q = \sum_{i: q(i)=q} \max\{w_i a_i, w_i b_i\}$$
   \end{itemize}

\item \textbf{Out-of-domain extension:} When inputs fall outside the spline's domain:
   \begin{itemize}
   \item $\phi$ (monotonic): Uses \emph{constant} extension (zero slope) outside its domain: for $x<\min\mathcal{D}_\phi$, set $\phi(x)=0$; for $x>\max\mathcal{D}_\phi$, set $\phi(x)=1$. This preserves the codomain $[0,1]$ and aligns with the monotone-increments parameterization.
   \item $\Phi$ (general): Uses \emph{linear} extrapolation based on the boundary \emph{derivative} of the spline interpolation (piecewise-linear: slope of the first/last segment; cubic \textsc{PCHIP}: endpoint derivative from the \textsc{PCHIP} slope rule). If $\Phi$'s codomain is parameterized by $(c_c,c_r)$ with $\operatorname{cod}(\Phi)=[c_c-c_r,\,c_c+c_r]$, we first evaluate the \emph{unnormalized} spline $\tilde{\Phi}$ (including its linear extrapolation) in coefficient space, then map its value $t=\tilde{\Phi}(x)$ by
   $$t \;\mapsto\; c_c - c_r \;+\; 2c_r \cdot \frac{t - t_{\min}}{t_{\max}-t_{\min}+\epsilon}\,,$$
   where $t_{\min}=\min_k \tilde{\Phi}(x_k)$ and $t_{\max}=\max_k \tilde{\Phi}(x_k)$ are the minimum/maximum unnormalized knot values (with $\{x_k\}$ the knot locations) and $\epsilon>0$ is a small constant. The boundary derivatives are scaled by the same factor $2c_r/(t_{\max}-t_{\min}+\epsilon)$.
   \end{itemize}
\end{enumerate}
\end{lemma}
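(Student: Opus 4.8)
The plan is to treat the lemma as a sequence of interval-enclosure statements and to verify each part with the same elementary device: \emph{sign-aware interval arithmetic} for affine maps, combined with the monotonicity of $\phi$ and the piecewise-polynomial structure of the splines. The single fact I would isolate first is the exact range of an affine form: if $c_i\in\mathbb{R}$ and $x_i\in[l_i,u_i]$, then $\sum_i c_i x_i$ ranges over $\bigl[\sum_i(c_i^+ l_i + c_i^- u_i),\ \sum_i(c_i^+ u_i + c_i^- l_i)\bigr]$, with $c^+=\max(c,0)$ and $c^-=\min(c,0)$. This holds because an affine function attains its extrema at the vertices of $\prod_i[l_i,u_i]$, and coordinatewise the optimizing vertex is selected by $\operatorname{sign}(c_i)$. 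Every sign-split formula appearing in parts (2), (3), and (5) is an instance of this identity.

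For part (1) I would note that the inputs fed to $\phi$ form the set $\{x_i+\eta q\}$ with $x_i\in[a_i,b_i]\subseteq[a,b]$ and $q\in\{0,\dots,d_{\mathrm{out}}-1\}$; since $\eta q$ sweeps $[0,\eta(d_{\mathrm{out}}-1)]$ for $\eta\ge 0$ and $[\eta(d_{\mathrm{out}}-1),0]$ for $\eta<0$, the Minkowski sum gives the stated $\mathcal{D}_\phi$. For part (2), monotonicity of $\phi$ places every $\phi(x_i+\eta q)$ in the common interval $[\phi(a+\eta q),\phi(b+\eta q)]$ (using the coarse box $a,b$); applying the affine-form identity to $s_q=\sum_i\lambda_i\phi(x_i+\eta q)+\alpha q$ then collapses the nonnegative weights into $\lambda^+$ and the negative ones into $\lambda^-$, reproducing $s_q^{\min,\mathrm{unmixed}},s_q^{\max,\mathrm{unmixed}}$ exactly; enclosing their union over $q$ yields $\mathcal{D}_\Phi$. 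Part (3) is the same identity once more, now treating $s_q$ and its neighbor $s_{(q+1)\bmod d_{\mathrm{out}}}$ as independent intervals and splitting the coefficient $\tau\omega_q$ by sign, with the bidirectional case adding a second such neighbor term.

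Parts (4)--(6) carry the content beyond linear interval arithmetic. For part (4) I would invoke the extreme value theorem on the closed interval and then use that a piecewise-linear $\Phi$ has piecewise-constant derivative (extrema only at endpoints or knots), while on each cubic \textsc{PCHIP} segment $\Phi'$ is a quadratic with at most two real roots; evaluating $\Phi$ at endpoints, interior knots, and any interior roots lying in the interval therefore exhausts all candidate extrema and gives the exact range. The affine codomain reparameterization $t\mapsto c_c-c_r+2c_r(t-t_{\min})/(t_{\max}-t_{\min}+\epsilon)$ has constant positive slope (for $c_r>0$), so it preserves the \emph{locations} of extrema and only rescales values; this also discharges the corresponding scaling claim for boundary derivatives in part (6). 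Part (5) is the affine-form identity specialized to each residual pattern (scalar gate, dense $W$, broadcast, and pooling), combined additively with $\Phi(\tilde s_q)$; and part (6) is largely definitional, checking that constant extension keeps $\phi\in[0,1]$ and that the linear extrapolation of $\Phi$ composed with the affine normalization scales slopes by exactly $2c_r/(t_{\max}-t_{\min}+\epsilon)$.

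The main obstacle is not any single computation but a clean accounting of \emph{soundness versus tightness}. The affine-form identity makes parts (1)--(2) exact relative to the coarse box $[a,b]$, but the mixing step (3) and the $\Phi(\tilde s_q)+r_q$ combination in (5) treat correlated quantities (the several $s_j$, and $\tilde s_q$ versus $r_q$, all functions of the \emph{same} input $\mathbf{x}$) as independent intervals, so they are sound over-approximations rather than exact ranges; I would state this explicitly to justify the ``sound (typically conservative)'' wording. The subtler point is the cubic extremization in (4): I must confirm that including the segment-interior roots of $\Phi'$ is what guarantees correctness for general cubic-Hermite parameterizations, since omitting them could under-enclose the range --- whereas for shape-preserving \textsc{PCHIP} on monotone sub-data such interior roots typically do not fall strictly inside a segment, so their inclusion is harmless and keeps the bound both sound and tight.
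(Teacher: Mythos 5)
Your proposal is correct and follows exactly the reasoning the paper relies on: the paper states this lemma without a formal proof, treating each part as an instance of sign-aware interval arithmetic for affine forms, monotonicity of $\phi$, and candidate-point enumeration (endpoints, knots, and interior critical points of cubic segments) for the spline extrema, which is precisely what you make explicit. Your added accounting of soundness versus tightness --- that parts (3) and (5) treat correlated quantities as independent intervals and are therefore conservative over-approximations --- matches the paper's own ``sound (typically conservative)'' caveat, so nothing is missing.
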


\begin{corollary}[Per-dimension input intervals]
When per-dimension intervals $x_i \in [a_i, b_i]$ are available, tighter bounds can be computed:
$$\begin{aligned}
s_q^{\min} &= \sum_i \bigl(\lambda_i^+ \phi(a_i + \eta q) + \lambda_i^- \phi(b_i + \eta q)\bigr) + \alpha q\\
s_q^{\max} &= \sum_i \bigl(\lambda_i^+ \phi(b_i + \eta q) + \lambda_i^- \phi(a_i + \eta q)\bigr) + \alpha q
\end{aligned}$$
where $\lambda_i^+ = \max(\lambda_i, 0)$ and $\lambda_i^- = \min(\lambda_i, 0)$. These per-$q$ bounds then undergo sign-aware lateral mixing as in Lemma \ref{lem:domain_prop}.
\end{corollary}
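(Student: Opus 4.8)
The plan is to reduce the claim to a coordinatewise separable optimization, exploiting two structural facts: the monotonicity of the inner spline $\phi$ and the additive (hence separable) form of the unmixed pre-activation $s_q=\sum_{i}\lambda_i\,\phi(x_i+\eta q)+\alpha q$. First I would fix the head index $q$ and observe that, for this fixed $q$, each summand $\lambda_i\,\phi(x_i+\eta q)$ depends on the single coordinate $x_i$ only. Consequently the extrema of $s_q$ over the box $\prod_i[a_i,b_i]$ decouple into independent scalar problems: $\min s_q=\alpha q+\sum_i \min_{x_i\in[a_i,b_i]}\lambda_i\,\phi(x_i+\eta q)$, and symmetrically for the maximum.

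Next I would solve each scalar subproblem using monotonicity. Since $\phi$ is non-decreasing, the map $x_i\mapsto\phi(x_i+\eta q)$ is non-decreasing on $[a_i,b_i]$, so it attains its least value at $x_i=a_i$ and its greatest at $x_i=b_i$. Multiplying by $\lambda_i$ and splitting on its sign, the term is minimized at $a_i$ and maximized at $b_i$ when $\lambda_i\ge 0$, with the orientation reversed when $\lambda_i<0$. Writing $\lambda_i=\lambda_i^++\lambda_i^-$ with $\lambda_i^+=\max(\lambda_i,0)$ and $\lambda_i^-=\min(\lambda_i,0)$ collapses both cases into the single expression
$$\min_{x_i\in[a_i,b_i]}\lambda_i\,\phi(x_i+\eta q)=\lambda_i^+\,\phi(a_i+\eta q)+\lambda_i^-\,\phi(b_i+\eta q),$$
with the maximizer obtained by exchanging $a_i\leftrightarrow b_i$. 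Summing over $i$ and reinstating $\alpha q$ yields exactly the stated $s_q^{\min}$ and $s_q^{\max}$; moreover, because each endpoint is genuinely attainable, the interval $[s_q^{\min},s_q^{\max}]$ is the \emph{exact} range of the unmixed $s_q$ over the box.

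For the lateral-mixing extension I would invoke item (3) of Lemma~\ref{lem:domain_prop} unchanged, now feeding it these sharper per-coordinate unmixed bounds in place of the coarser bounds built from the shared endpoints $a=\min_i a_i$ and $b=\max_i b_i$. The sign-aware neighbor combination there is monotone in its unmixed inputs (each contribution $\tau\omega_q\,s_{(q+1)\bmod d_{\mathrm{out}}}$ is split by the sign of $\tau\omega_q$), so substituting tighter unmixed bounds automatically produces tighter---yet still sound---mixed bounds with no modification to the argument.

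I do not anticipate a genuine obstacle; the only point worth flagging is the distinction between soundness and tightness. The separability argument shows that the per-$q$ unmixed intervals are exact, but once they are combined across different $q$ through lateral mixing via interval arithmetic, the resulting mixed bounds are sound yet possibly loose, since correlations among the shared coordinates $x_i$ are discarded---consistent with the conservative-but-tighter-than-naive discussion preceding Lemma~\ref{lem:domain_prop}.
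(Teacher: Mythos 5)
Your proposal is correct and follows essentially the same route the paper intends: the corollary is the per-coordinate refinement of Lemma~\ref{lem:domain_prop} part (2), obtained exactly as you argue---coordinatewise separability of $s_q$, monotonicity of $\phi$ to place each scalar extremum at $a_i$ or $b_i$, sign-splitting $\lambda_i=\lambda_i^+ +\lambda_i^-$ to collapse the two cases, and then feeding the resulting (in fact exact, attained-at-a-corner) per-$q$ intervals into the sign-aware mixing of part (3). Your closing remark on soundness versus tightness after mixing matches the paper's own caveat that interval propagation discards correlations across coordinates.
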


This lemma enables a forward propagation algorithm for computing all spline domains throughout the network. The algorithm can optionally apply a small relative domain safety margin (e.g., 10\%) to reduce edge hits during training; setting the margin to 0 yields the tightest domains. Crucially, the algorithm's efficiency relies on the fact that spline extrema over compact intervals can be bounded by evaluating a small, explicit candidate set: for piecewise-linear splines, the interval endpoints and any knots inside the interval; for piecewise-cubic Hermite/\textsc{PCHIP} splines, the interval endpoints, knots inside the interval, and any interior critical points of the cubic pieces that lie inside the interval.

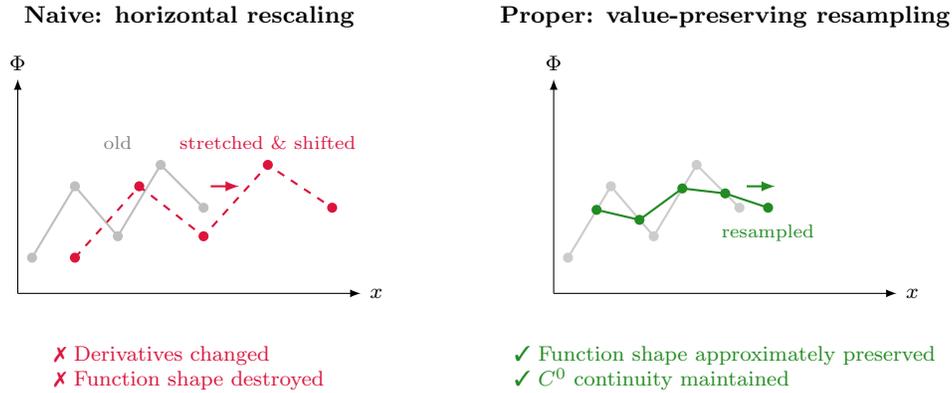
\begin{figure}[ht]
    \centering
    \definecolor{sprecherblue}{RGB}{70,130,180}
    \definecolor{sprecherred}{RGB}{220,20,60}
    \definecolor{sprechergreen}{RGB}{34,139,34}
    
    \begin{tikzpicture}[scale=0.95, >=latex]
        \begin{scope}[xshift=0cm]
            \node[font=\small\bfseries, anchor=south] at (2.4, 3.6) {Naive: horizontal rescaling};
            
            \draw[->] (0,0) -- (4.8,0) node[right, font=\footnotesize] {$x$};
            \draw[->] (0,0) -- (0,3.0) node[above, font=\footnotesize] {$\Phi$};
            
            \draw[thick, gray!50] (0.2,0.5) -- (0.8,1.5) -- (1.4,0.8) -- (2.0,1.8) -- (2.6,1.2);
            \foreach \p in {(0.2,0.5), (0.8,1.5), (1.4,0.8), (2.0,1.8), (2.6,1.2)}
                \fill[gray!50] \p circle (2pt);
            \node[font=\scriptsize, gray, anchor=south] at (1.4, 1.9) {old};
            
            \draw[->, thick, sprecherred] (2.7, 1.5) -- (3.1, 1.5);
            
            \draw[thick, sprecherred, dashed] (0.8,0.5) -- (1.7,1.5) -- (2.6,0.8) -- (3.5,1.8) -- (4.4,1.2);
            \foreach \p in {(0.8,0.5), (1.7,1.5), (2.6,0.8), (3.5,1.8), (4.4,1.2)}
                \fill[sprecherred] \p circle (2pt);
            
            \node[font=\scriptsize, sprecherred, anchor=south] at (3.5, 1.9) {stretched \& shifted};
            
            \node[anchor=north, align=left, font=\footnotesize, text=sprecherred] at (2.4, -0.6) {
                \ding{55} Derivatives changed\\
                \ding{55} Function shape destroyed
            };
        \end{scope}
        
        \begin{scope}[xshift=7.5cm]
            \node[font=\small\bfseries, anchor=south] at (2.4, 3.6) {Proper: value-preserving resampling};
            
            \draw[->] (0,0) -- (4.8,0) node[right, font=\footnotesize] {$x$};
            \draw[->] (0,0) -- (0,3.0) node[above, font=\footnotesize] {$\Phi$};
            
            \draw[thick, gray!40] (0.2,0.5) -- (0.8,1.5) -- (1.4,0.8) -- (2.0,1.8) -- (2.6,1.2);
            \foreach \p in {(0.2,0.5), (0.8,1.5), (1.4,0.8), (2.0,1.8), (2.6,1.2)}
                \fill[gray!40] \p circle (2pt);
            
            \draw[->, thick, sprechergreen] (2.7, 1.5) -- (3.1, 1.5);
            
            \draw[thick, sprechergreen] (0.6,1.17) -- (1.2,1.03) -- (1.8,1.47) -- (2.4,1.40) -- (3.0,1.20);
            
            \foreach \p in {(0.6,1.17), (1.2,1.03), (1.8,1.47), (2.4,1.40), (3.0,1.20)}
                \fill[sprechergreen] \p circle (2pt);
                
            \node[font=\scriptsize, sprechergreen, anchor=north] at (3.0, 1.1) {resampled};
            
            \node[anchor=north, align=left, font=\footnotesize, text=sprechergreen] at (2.4, -0.6) {
                \ding{51} Function shape approximately preserved\\
                \ding{51} $C^0$ continuity maintained
            };
        \end{scope}
    \end{tikzpicture}
    \caption{Spline domain resampling during training (applied to the outer spline $\Phi$). As parameters $\lambda$ and $\eta$ evolve, induced spline input ranges shift dynamically. \textbf{Left:} Naive horizontal rescaling distorts the learned function by stretching and shifting the shape to fit the new domain. \textbf{Right:} Our resampling evaluates the previous spline at the new knot positions (for piecewise-linear $\Phi$, query points are clamped to the previous domain so any new knot outside the old domain uses the nearest boundary value; for cubic/\textsc{PCHIP} $\Phi$, out-of-domain query points are evaluated via the spline's linear extrapolation determined by the endpoint derivative), approximately preserving the function values at the new knots that lie within the previous domain; the updated $\Phi$ still uses linear extrapolation outside its \emph{new} domain.}
    \label{fig:resampling}
\end{figure}

\begin{algorithm}
\caption{Forward domain propagation with lateral mixing and per-dimension tracking}
\label{alg:domain_prop}
\begin{algorithmic}[1]
    \State \textbf{Input:} Network parameters $\{\lambda^{(\ell)}, \eta^{(\ell)}, \phi^{(\ell)}, \Phi^{(\ell)}, \tau^{(\ell)}, \omega^{(\ell)}, R^{(\ell)}\}_{\ell=1}^{L_{\text{blocks}}}$, input domain $[0,1]^n$
    \State \textbf{(Here $L_{\text{blocks}}$ is the number of Sprecher blocks: $L_{\text{blocks}}=L$ for the summed scalar-output form, and $L_{\text{blocks}}=L{+}1$ when an explicit non-summed output block is used (always for vector outputs, and optionally also for scalar outputs).)}
    \State \textbf{Output:} Domains and ranges for all splines
    \State Initialize current range $\mathcal{R}_0 \leftarrow [0,1]^n$ (per-dimension when available)
    \For{each block $\ell = 1, \ldots, L_{\text{blocks}}$}
        \If{normalization is applied \emph{before} block $\ell$}
            \State Apply normalization bounds to $\mathcal{R}_{\ell-1}$ (batch-statistics: conservative shared bound derived from standardized $[-4,4]$; running-statistics: use running stats and affine)
        \EndIf
        \State $\mathcal{D}_\phi^{(\ell)} \leftarrow$ apply Lemma \ref{lem:domain_prop} part (1) to the current input range
        \If{per-dimension intervals available}
            \State Compute per-$q$ bounds using the corollary (sign-splitting via $\lambda_i^\pm$ and $\phi(a_i+\eta q),\phi(b_i+\eta q)$)
        \Else
            \State Compute coarse per-$q$ bounds $s^{\min,\mathrm{unmixed}}_q,s^{\max,\mathrm{unmixed}}_q$ via Lemma~\ref{lem:domain_prop} part (2)
        \EndIf
        \State Apply sign-aware lateral mixing (part 3) to get final $s^{\min}_q, s^{\max}_q$
        \State $\mathcal{D}_\Phi^{(\ell)} \leftarrow [\min_q s^{\min}_q, \max_q s^{\max}_q]$
        \State Evaluate $\Phi^{(\ell)}$ on each interval $[s^{\min}_q, s^{\max}_q]$ at (i) the endpoints, (ii) all $\Phi$-knots lying inside the interval, and (iii) (for cubic $\Phi$) any interior critical points of the relevant cubic pieces lying inside the interval
        \State Let $[y_q^{\min},y_q^{\max}]$ denote the range of $\Phi^{(\ell)}$ on $[s_q^{\min},s_q^{\max}]$
        \If{block $\ell$ has residual connections}
            \State Adjust each $[y_q^{\min},y_q^{\max}]$ according to Lemma \ref{lem:domain_prop} part (5)
        \EndIf
        \If{$\ell=L_{\text{blocks}}$ and the network output is scalar (sum over $q$)}
            \State $\mathcal{R}_\ell \leftarrow \Bigl[\sum_{q=0}^{d_\ell-1} y_q^{\min},\,\sum_{q=0}^{d_\ell-1} y_q^{\max}\Bigr]$
        \Else
            \State $\mathcal{R}_\ell \leftarrow \prod_{q=0}^{d_\ell-1} [y_q^{\min},y_q^{\max}]$
        \EndIf
        \If{normalization is applied \emph{after} block $\ell$}
            \State Apply normalization bounds (batch-statistics: heuristic shared bound based on standardized $[-4,4]$; running-statistics: use running stats and affine)
        \EndIf
    \EndFor
\end{algorithmic}
\end{algorithm}

\begin{remark}[Dynamic spline updates and value-preserving resampling]
A critical challenge in training Sprecher Networks is that the domains of the splines $\phi^{(\ell)}$ and $\Phi^{(\ell)}$ evolve as the parameters $\eta^{(\ell)}$, $\lambda^{(\ell)}$, and lateral mixing parameters are updated. To maintain theoretical fidelity to Sprecher's formula while adapting to evolving domains, we employ a selective value-preserving resampling strategy:

\paragraph{General Splines ($\Phi^{(\ell)}$):} When the domain of a $\Phi^{(\ell)}$ spline changes, new knot locations are established (typically uniformly spaced across the new computed domain). The original spline is treated as a continuous function and evaluated at these new knot locations to yield updated knot coefficients (for piecewise-linear $\Phi^{(\ell)}$, query points are clamped to the previous domain so any new knot outside the old domain uses the nearest boundary value; for cubic/\textsc{PCHIP} $\Phi^{(\ell)}$, out-of-domain query points are evaluated via the spline's linear extrapolation determined by the endpoint derivative). This effectively ``resamples'' the learned shape onto the new domain: it preserves the \emph{values} of the old spline at the new knot locations that lie within the previous domain (and therefore preserves the function exactly on the overlap whenever the old spline is linear on each new knot interval, e.g.\ when the new knot set contains the old knots). For piecewise-linear $\Phi^{(\ell)}$, this procedure is exact whenever the new knot set contains the old knots; for cubic (PCHIP) $\Phi^{(\ell)}$, it matches the old spline at the resampled knots but can (slightly) change derivatives between knots, while still avoiding abrupt resets. Optionally, one may additionally apply a cheap post-update refresh that adjusts knot \emph{locations} without resampling coefficients; this is not value-preserving (it amounts to a mild horizontal rescaling), so it is best reserved for small domain changes where the primary goal is to keep bound propagation synchronized after parameter updates.

\paragraph{Monotonic Splines ($\phi^{(\ell)}$):} For the monotonic splines, whose purpose is to provide an increasing map to $[0,1]$, complex resampling is not required. Their learnable parameters define relative increments between knots, not an arbitrary shape. Therefore, a straightforward update of the knot positions to the new theoretical domain is sufficient and computationally efficient.

This targeted approach avoids the most direct sources of information loss and instability from domain changes. In engineering ablations on Toy-2D (Table~\ref{tab:component_ablations}), disabling $\Phi$-resampling typically led to oscillatory or unstable training as domains shifted, while disabling dynamic domain updates caused spline inputs to drift far outside their learned ranges (forcing $\phi$ into saturation and $\Phi$ into extrapolation). We therefore treat dynamic domains and $\Phi$-resampling as default components unless explicitly stated otherwise (e.g., the PINN setting in Sec.~\ref{sec:pinn_poisson}). The fundamental challenge of optimizing splines within dynamically shifting domains remains, but this mitigation strategy has proven effective in practice.
\end{remark}

One particularly useful application of domain computation is the initialization of $\Phi^{(\ell)}$ splines. When $\Phi$ codomain scaling $(c_c^{(\ell)},c_r^{(\ell)})$ is enabled, we compute each $\Phi^{(\ell)}$'s theoretical domain before training and (i) set its knot domain to this interval, (ii) initialize its knot coefficients to a linear ramp over that same interval so that the underlying spline represents the identity map on the interval (for both linear and cubic/\textsc{PCHIP} interpolation), and (iii) initialize $(c_c^{(\ell)},c_r^{(\ell)})$ to the domain center and radius so that the affine codomain reparameterization preserves this identity map. This provides a principled initialization strategy that ensures the initial network performs a series of near-identity transformations.

\begin{remark}[Practical benefits]
The ability to compute \emph{sound} (typically conservative) domain and range bounds through interval arithmetic (provably sound in the absence of normalization; with BatchNorm in training mode we use the heuristic bounds from Section~\ref{sec:normalization}) provides several practical benefits: (i) it enables theoretically grounded initialization without arbitrary hyperparameters, (ii) it can help diagnose training issues by detecting when values fall outside expected ranges, (iii) it promotes numerical stability by encouraging spline evaluations to remain within their well-resolved knot regions (with explicit extension/extrapolation behavior outside the knot domain), and (iv) it allows for adaptive domain adjustments that account for lateral mixing dynamics. These benefits distinguish Sprecher Networks from architectures with less structured internal representations.
\end{remark}

\begin{table}[t]
\centering
\small
\begin{tabular}{l p{0.68\linewidth}}
\toprule
Component ablated & Typical observed effect \\
\midrule
Disable domain resampling & Training can become unstable when pre-activations drift beyond initial spline domains, increasing extrapolation error. \\
Disable domain updates & Intermediate pre-activations may drift outside predicted bounds, reducing effective capacity via saturation/clipping effects. \\
Remove normalization & Pre-activations become poorly scaled, making spline-domain management harder and slowing optimization. \\
Remove $\Phi$ codomain scaling $(c_c,c_r)$ & Forces $\Phi$ to learn both scale and shape, often slowing convergence and degrading final accuracy. \\
Remove cyclic residuals & Deep networks become harder to train, with degraded accuracy and occasional divergence. \\
Remove lateral mixing & Wide shallow networks can plateau due to shared-weight symmetries (cf.\ Remark above). \\
\bottomrule
\end{tabular}
\caption{Qualitative summary of component ablations observed across our experiments. These effects align with each component's motivation: domain management for numerical stability, normalization and $\Phi$ codomain scaling for conditioning, residual connections for depth, and lateral mixing for breaking shared-weight symmetries.}
\label{tab:component_ablations}
\end{table}

\section{Empirical demonstrations and case studies}\label{sec:experiments}
We report a set of empirical studies designed to probe both performance and scaling behavior of SNs, with controlled comparisons to MLP and KAN baselines.

\paragraph{Benchmarks used in this work.}

We evaluate Sprecher Networks on synthetic supervised regression tasks, on a physics-informed Poisson problem, and on the Fashion-MNIST image classification benchmark.

\paragraph{Fashion-MNIST classification benchmark.}
To assess SNs on a standard real-world benchmark, we train on Fashion-MNIST \cite{xiao2017fashionmnist}, a 10-class image classification dataset of $28\times28$ grayscale images with 60,000 training and 10,000 test examples (784 input dimensions after flattening; we use the standard split). We use cross-entropy loss and report test accuracy (mean $\pm$ std over 10 seeds). Inputs are scaled to $[0,1]$ (raw pixel intensities rescaled by $1/255$) before being fed to both SN and MLP models. Unless stated otherwise, we train with AdamW (learning rate $10^{-3}$, batch size 64, weight decay $10^{-6}$) and clip gradients to max-norm 1. For the MLP baselines we use ReLU activations and no BatchNorm layers. For SNs we use cubic (\textsc{PCHIP}) splines with 60 knots for the inner maps $\phi$; for the outer maps $\Phi$ we use 60 knots for the shallow models and 180 knots for the 25-layer model (Table~\ref{tab:fashion-mnist}). We apply BatchNorm after each block (except the first) and use residual connections in all SN Fashion-MNIST runs (linear-projection residuals for the $[12,11,12]$ models; cyclic residuals for the constant-width 25-layer model). We enable cyclic lateral mixing in all SN Fashion-MNIST runs, and we update (theoretical) spline domains before each minibatch during both training and evaluation (resampling the knot values of $\Phi$ accordingly). When reporting test accuracy, we evaluate BatchNorm using per-batch statistics while freezing its running-statistic buffers. Table~\ref{tab:fashion-mnist} compares SNs against an MLP baseline under matched parameter budgets.

\begin{table}[ht]
\centering
\caption{\textbf{Fashion-MNIST classification results.} Test accuracy (mean $\pm$ std over 10 seeds). The shallow SN uses linear residual projections; the 25-layer SN uses cyclic residuals and demonstrates stable training at 25 hidden layers when equipped with residual connections and BatchNorm.}
\label{tab:fashion-mnist}
\small
\begin{tabular}{llrrrr}
\toprule
Model & Architecture & Params & Epochs & Best Test Accuracy (\%) \\
\midrule
SN & $784\to[12,11,12]\to10$ & 11{,}244 & 20 & $85.09 \pm 0.18$ \\
SN & $784\to[12,11,12]\to10$ & 11{,}244 & 200 & $86.10 \pm 0.28$ \\
SN (25 layers) & $784\to[40]^{25}\to10$ & 12{,}008 & 20 & $85.90 \pm 0.24$ \\
\midrule
MLP & $784\to[15,20]\to10$ & 12{,}305 & 20 & $86.27 \pm 0.09$ \\
MLP & $784\to[15,20]\to10$ & 12{,}305 & 200 & $86.76 \pm 0.15$ \\
\bottomrule
\end{tabular}
\end{table}

At this parameter budget, the shallow SN is competitive with the MLP baseline. More importantly for trainability, the constant-width 25-layer SN optimizes stably across seeds once equipped with cyclic residual connections and normalization, reaching $85.90\pm 0.24\%$ test accuracy. This suggests that deep stacks of Sprecher blocks can be trained reliably, although closing the remaining gap to dense MLPs (and understanding when depth helps) remains an open direction.

\paragraph{Poisson PINN benchmark (manufactured solution).}\label{sec:pinn_poisson}
We solve the Dirichlet problem
$$\Delta u(\mathbf{x}) = g(\mathbf{x}) \quad \text{for } \mathbf{x}\in \Omega,\qquad
u(\mathbf{x})=0 \quad \text{for } \mathbf{x}\in \partial\Omega,$$
with $\Omega=[-1,1]^2$, using the manufactured solution
$$u(x,y) = \sin(\pi x)\,\sin(\pi y^2),$$
so that $g=\Delta u$ is known analytically.
We train a network $u_\theta$ with collocation sets $S_{\mathrm{int}}\subset\Omega$ and $S_{\partial}\subset\partial\Omega$ by minimizing

$$\mathcal{L} \;=\;
\frac{1}{|S_{\mathrm{int}}|}\sum_{\mathbf{x}\in S_{\mathrm{int}}}
\frac{\bigl(\Delta u_\theta(\mathbf{x})-g(\mathbf{x})\bigr)^2}{\mathbb{E}_{\mathbf{x}\in S_{\mathrm{int}}}[g(\mathbf{x})^2]}
\;+\;
\frac{1}{|S_{\partial}|}\sum_{\mathbf{x}\in S_{\partial}} u_\theta(\mathbf{x})^2.$$
Unless stated otherwise, we draw a fixed set of $|S_{\mathrm{int}}|=2048$ interior points i.i.d.\ uniformly at random in $\Omega$ (sampled once at the start of training), and
$|S_{\partial}|=4\cdot 257$ boundary points are taken on a uniform grid along each edge (including the corners on each edge, hence corners are duplicated).

\emph{Model variants for the PINN benchmark.}
To isolate the spline parameterization itself, we compare a \textbf{barebones Sprecher Network} (no lateral mixing, no residuals, no normalization, and \textbf{fixed} spline domains; i.e.\ no domain updates) against a \textbf{barebones KAN} baseline with spline-only edge activations (no SiLU/base activation and no grid updates). Because the Poisson residual involves second derivatives, both models use shape-preserving cubic (PCHIP) splines in this benchmark (piecewise-linear splines yield $\Delta u_\theta=0$ almost everywhere).
For the Sprecher Network, inputs are linearly rescaled from $[-1,1]^2$ to $[0,1]^2$ before the first spline evaluation via the affine map $\rho:\Omega\to[0,1]^2$ defined component-wise by $\rho(\mathbf{x})=(\mathbf{x}+\mathbf{1})/2$, matching the $[0,1]$ spline-domain convention used for SNs in this work; the KAN baseline operates directly on $[-1,1]^2$. In both cases, derivatives are taken with respect to the original coordinates $\mathbf{x}$ via automatic differentiation (for SN this includes the Jacobian of $\rho$ by the chain rule).

\emph{Optimization and evaluation.}
We train in double precision (float64) using Adam with $10^{-3}$ learning rate, cosine learning rate decay to $10^{-5}$, and gradient-norm clipping 1.0. We report two parameter budgets: an $\approx$1200-parameter model trained for 5{,}000 epochs (mean $\pm$ std over 5 seeds), and an $\approx$3000-parameter model trained for 20{,}000 epochs (1 seed). We evaluate both models on a uniform $101\times101$ grid over $\Omega=[-1,1]^2$ and report: (i) PDE residual MSE (interior grid points), (ii) boundary MSE (boundary grid points), and (iii) $L^2$ MSE against the manufactured solution (full grid).

\begin{table}[t]
  \centering
  \caption{\textbf{Poisson PINN results (manufactured solution).} Metrics are evaluated on a uniform $101\times101$ grid over $\Omega=[-1,1]^2$: PDE residual MSE is computed on the interior grid points, boundary MSE on the boundary grid points, and $L^2$ MSE against the manufactured solution on the full grid (all lower is better). For the $\approx$1200-parameter setting we report mean $\pm$ std over 5 seeds; the $\approx$3000-parameter setting is reported for 1 seed due to cost.}
  \label{tab:poisson-pinn}
  \begin{tabular}{lrrr}
    \toprule
    Model & PDE MSE & Bndry MSE & $L^2$ MSE \\
    \midrule
    \multicolumn{4}{l}{\textbf{$\approx$1200 params, 5{,}000 epochs (5 seeds)}} \\
    SN  & $\mathbf{1.36\times 10^{2} \pm 3.69}$ & $7.85\times 10^{-2} \pm 7.22\times 10^{-3}$ & $\mathbf{6.55\times 10^{-2} \pm 1.75\times 10^{-3}}$ \\
    KAN & $2.41\times 10^{3} \pm 1.78\times 10^3$ & $\mathbf{2.06\times 10^{-4} \pm 4.53\times 10^{-5}}$ & $1.70\times 10^{-1} \pm 4.33\times 10^{-3}$ \\
    \midrule
    \multicolumn{4}{l}{\textbf{$\approx$3000 params, 20{,}000 epochs (1 seed)}} \\
    SN  & $\mathbf{1.38\times 10^2}$ & $4.76\times 10^{-2}$ & $\mathbf{1.05\times 10^{-1}}$ \\
    KAN & $1.09\times 10^4$ & $\mathbf{8.42\times 10^{-4}}$ & $1.32\times 10^{-1}$ \\
    \bottomrule
  \end{tabular}
\end{table}

\subsection{Basic function approximation}
We train SNs on datasets sampled from known target functions $f$. The network learns the parameters ($\eta^{(\ell)}$, $\lambda^{(\ell)}$), spline coefficients ($\phi^{(\ell)}$, $\Phi^{(\ell)}$), and when enabled, lateral mixing parameters ($\tau^{(\ell)}$, $\omega^{(\ell)}$) via gradient descent (Adam optimizer) using an MSE objective.

For 1D functions $f(x)$ on $[0,1]$, an SN like $1 \to [W] \to 1$ (one block) learns $\phi^{(1)}$ and $\Phi^{(1)}$ that accurately approximate $f$, effectively acting as a learnable spline interpolant structured according to Sprecher's formula. While the network achieves a very accurate approximation of the overall function $f(x)$, the learned components (splines $\hat{\phi}, \hat{\Phi}$, weights $\hat{\lambda}$, shift $\hat{\eta}$) only partially resemble the ground truth functions and parameters used to generate the data. This is expected: the internal decomposition is generally not identifiable, and in particular the learned outer spline $\hat{\Phi}$ can become strongly oscillatory/jagged even when the ground-truth outer function $\Phi$ is smooth.

Moving to multivariate functions, consider the 2D scalar case $f(x,y) = \exp(\sin(11x)) + 3y + 4\sin(8y)$. A network like $2\to[5,8,5]\to1$ (3 blocks) can achieve high accuracy. Figure
\ref{fig:twovarsprecher_revised} shows the interpretable layerwise spline plots and the final fit quality. When lateral mixing is enabled for this architecture, we often observe a different allocation of complexity across channels and layers; however, the learned $\Phi^{(\ell)}$ splines can still be highly oscillatory, so we treat the spline plots as qualitative rather than expecting smoothness.

For 2D vector-valued functions $f(x,y)=(f_1(x,y), f_2(x,y))$, we use an SN of the form $2\to[20,20]\to2$ (2 hidden layers plus a final block for the vector-valued output; 3 blocks total). Figure \ref{fig:twovarsprechervector_revised} illustrates the learned splines and the approximation of both output surfaces. Lateral mixing can be beneficial here by breaking output symmetry across channels, so we enable it in this example.

\begin{figure}[!ht]
\centering
\includegraphics[width=0.8\textwidth]{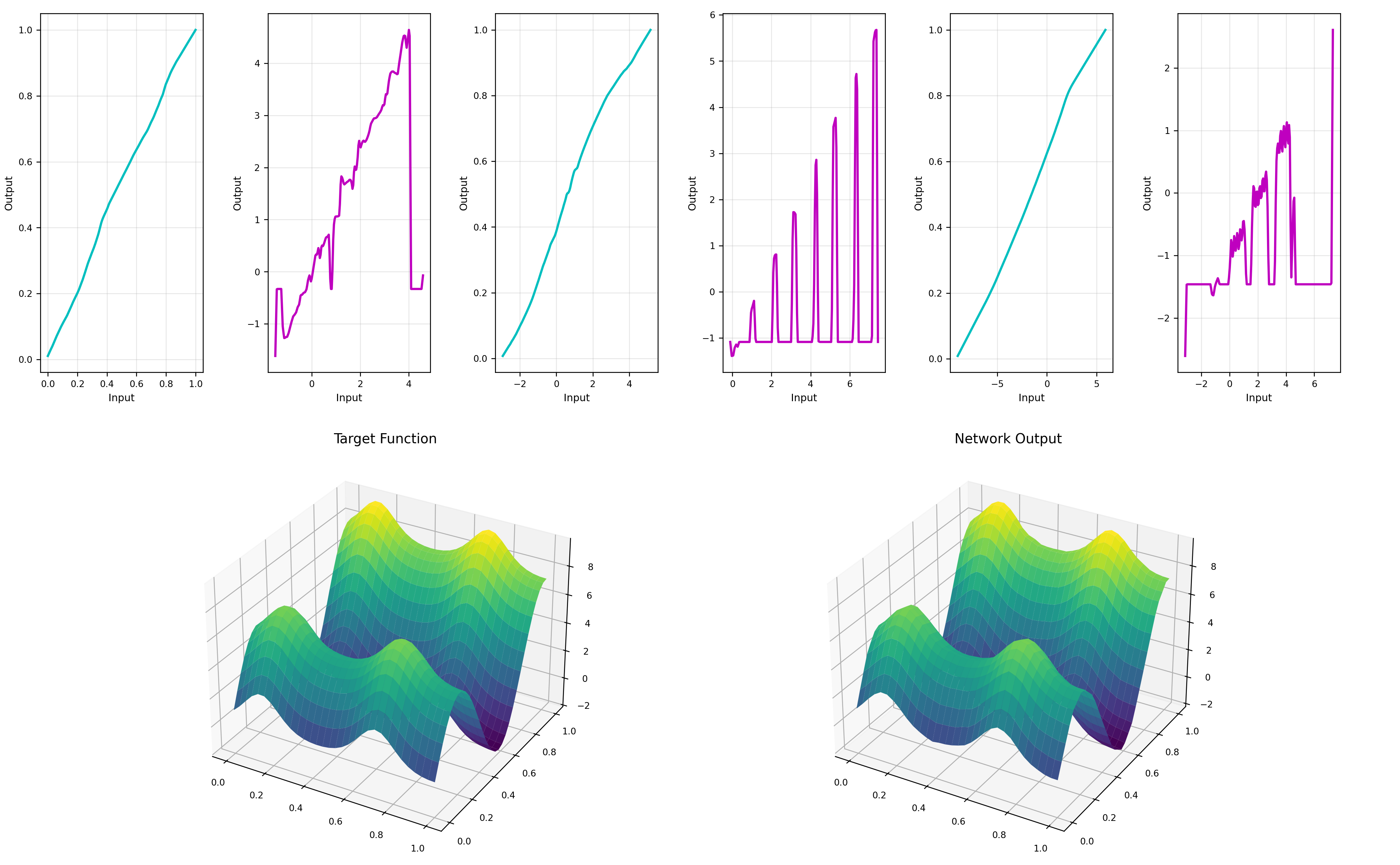}
\caption{Example of SN approximating the scalar 2D target function $z = f(x,y) = \exp(\sin(11x)) + 3y + 4\sin(8y)$ on $(x,y)\in[0,1]^2$ using architecture \texorpdfstring{$2\to[5,8,5]\to1$}{2 -> [5,8,5] -> 1} (3 blocks). Top row: Learned spline functions for each block --- monotonic splines $\phi^{(\ell)}$ (cyan) and general splines $\Phi^{(\ell)}$ (magenta). Bottom row: Comparison between the target function surface (left) and the network approximation (right). This run uses cyclic lateral mixing; the learned outer splines $\Phi^{(\ell)}$ can be highly oscillatory/jagged even when the overall fit is accurate, so the per-layer spline plots should be interpreted qualitatively.}
\label{fig:twovarsprecher_revised}
\end{figure}

\begin{figure}[!ht]
\centering
\includegraphics[width=0.8\textwidth]{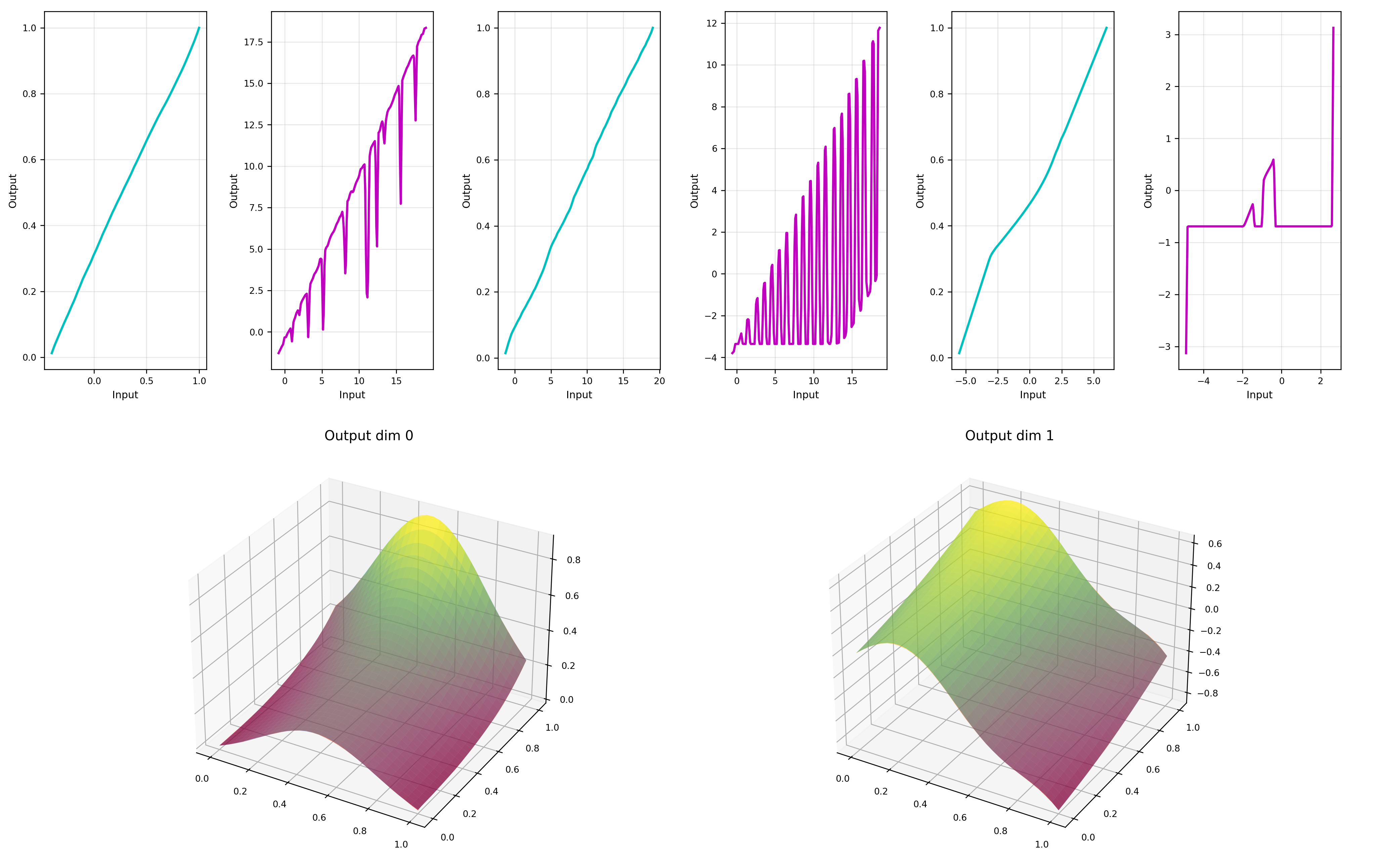}
\caption{Example of SN approximating a 2D vector-valued function $f:[0,1]^2\to\mathbb{R}^2$ with components $f_1(x,y) = \frac{\exp(\sin(\pi x) + y^2)-1}{7}$ and $f_2(x,y) = \frac{1}{4}y + \frac{1}{5}y^2 - x^3 + \frac{1}{5}\sin(7x)$. The network architecture is \texorpdfstring{$2\to[20,20]\to2$}{2 -> [20,20] -> 2} (2 hidden layers plus an additional output block; 3 blocks total). The first row shows the learned monotonic splines $\phi^{(\ell)}$ (cyan) and general splines $\Phi^{(\ell)}$ (magenta) for each block. The second row shows, for each output component (left: $f_1$, right: $f_2$), the target surface overlaid with the corresponding network output surface. Lateral mixing was enabled (cyclic in this run); as in other examples, the learned $\Phi^{(\ell)}$ splines can be highly oscillatory/jagged even when the overall fit is accurate.}
\label{fig:twovarsprechervector_revised}
\end{figure}

These examples demonstrate the feasibility of training SNs and the potential interpretability offered by visualizing the learned shared splines $\phi^{(\ell)}$ and $\Phi^{(\ell)}$ for each block, as well as the impact of lateral mixing on spline smoothness and approximation quality.

\subsection{Impact of lateral mixing}
To evaluate the contribution of lateral mixing, we conducted ablation studies comparing networks with and without this mechanism across various tasks:

\paragraph{Scalar outputs:} For many scalar-output regression tasks with moderate widths, lateral mixing yields only modest gains. However, as discussed in Remark~3.6, scalar-output SNs can benefit substantially in regimes where the shared-weight constraint induces strong symmetry across channels (e.g., very wide shallow networks), where lateral mixing provides lightweight cross-channel interaction that breaks optimization plateaus. Note that the final summation is a fixed linear readout \emph{after} applying $\Phi$ channelwise; it does not provide pre-$\Phi$ cross-channel communication, so lateral mixing remains a distinct (and sometimes crucial) mechanism even when the network output is scalar.

\paragraph{Vector outputs:} On vector-valued regression tasks, networks with lateral mixing (cyclic variant) often achieved lower RMSE with only a marginal increase in parameters (one additional per-channel weight plus a shared scale). The improvement was most pronounced for functions where output dimensions exhibit strong correlations, suggesting that lateral mixing can help the network capture cross-output dependencies despite the constraint of shared splines.

\paragraph{Convergence speed:} Networks with lateral mixing often converged faster during training in our ablations, reaching the same loss threshold in fewer iterations. This suggests that lateral connections can provide beneficial gradient pathways that accelerate optimization, although the effect is task-dependent.

\paragraph{Spline behavior:} Visual inspection of learned splines suggests that lateral mixing can change how complexity is distributed across output channels. In particular, the learned outer splines $\Phi^{(\ell)}$ can remain highly oscillatory even when the overall fit is accurate, which is consistent with Sprecher's original construction. KANs are often motivated by interpretability because each edge carries an explicit learned univariate function \cite{liu2024kan}; however, as Figure~\ref{fig:sn_vs_kan_splines} illustrates, these learned edge functions can also be jagged/noisy under parameter parity, so visual interpretability does not automatically imply smoothness.

\subsection{Baseline apples-to-apples comparisons: SNs vs.\ KANs}\label{sec:sn-kan-baselines}

\begin{figure}[!ht]
\centering
\includegraphics[width=0.95\textwidth]{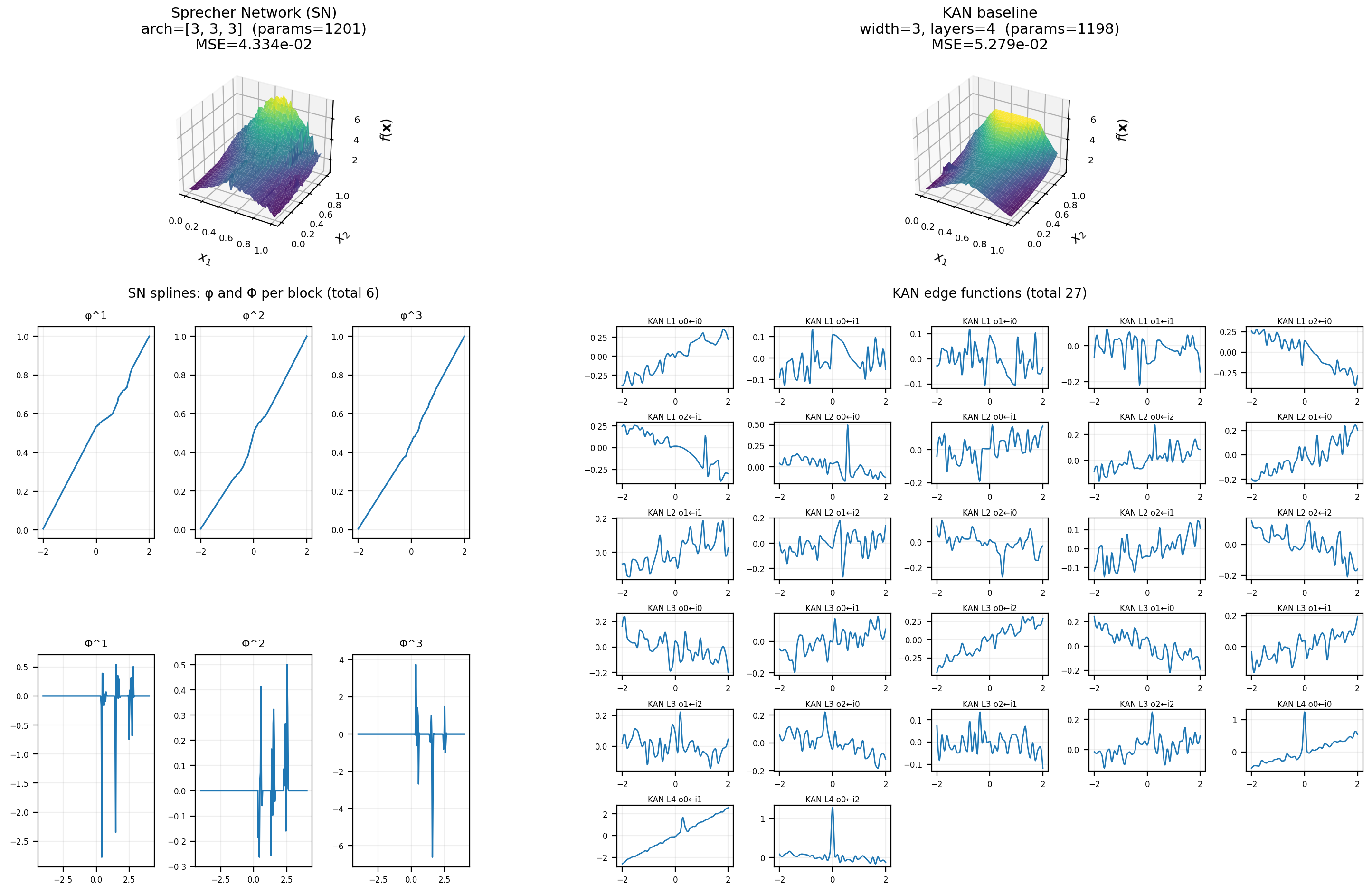}
\caption{Side-by-side comparison of learned functions and internal univariate splines for an SN and a KAN on the Toy-2D benchmark ($f(x,y)=\exp(\sin(\pi x)+y^2)$) under parameter parity ($\approx 1200$ parameters). For this figure we use a spline-isolation setting (no residual connections, no normalization, no lateral mixing, fixed spline domains) to make the learned univariate functions directly comparable. In the representative run shown, the SN (left; $\text{arch}=[3,3,3]$, $1201$ params) attains MSE $4.334\times 10^{-2}$ and the KAN (right; width=3, layers=4, $1198$ params) attains MSE $5.279\times 10^{-2}$. KANs are often motivated by interpretability because each edge carries an explicit learned univariate function \cite{liu2024kan}, yet the learned edge functions (27 total here, one per edge) can still be jagged/noisy in practice. In contrast, the SN achieves a comparable fit using one shared pair $(\phi^{(\ell)},\Phi^{(\ell)})$ per block ($6$ shared splines total across three blocks) with similarly oscillatory outer splines $\Phi^{(\ell)}$, consistent with the role of $\Phi$ in Sprecher-style superpositions.}

\label{fig:sn_vs_kan_splines}
\end{figure}

\paragraph{Rationale.}
The goal of this subsection is to provide simple, controlled, apples-to-apples tests where Sprecher Networks (SNs) and Kolmogorov–Arnold Networks (KANs) are trained under the same budget and with matched modeling capacity, so that we can isolate architectural inductive biases rather than hyperparameter tuning. We do \emph{not} claim that SNs dominate KANs on every task; rather, we highlight representative settings where SNs perform as well or better, sometimes substantially so, while keeping the comparison fair.

\paragraph{Protocol (fairness constraints).}
Unless otherwise stated (e.g., Figure~\ref{fig:sn_vs_kan_splines}), each head-to-head uses exactly $4000$ training epochs for both architectures. All benchmark results in this subsection use the following comparison setting:

\begin{itemize}[leftmargin=*,itemsep=2pt]
\item \textbf{Barebones comparisons} (\S\ref{sssec:barebones}): Both SN and KAN use \emph{no} residual connections, \emph{no} normalization layers, and (for SN) \emph{no} lateral mixing. SNs use piecewise-linear (PWL) splines with dynamic spline-domain updates during the first 10\% of training (400/4000 epochs), then frozen. KANs use cubic Hermite (PCHIP) splines with fixed uniform knot grids---no grid updates or adaptive knot relocation. This isolates the core architectural differences.
\end{itemize}

\noindent We match parameter counts by choosing the KAN spline basis size $K$ (the number of learnable knot values per edge spline) to be as close as possible to the SN parameter count (allowing small over/under-shoot when exact parity is not attainable). Seeds, datasets, and train/test splits are identical across models. Training uses either full-batch or mini-batch updates depending on the benchmark; when mini-batching is used, SN and KAN are trained on identical mini-batches in the same order. Primary metric is test-set RMSE (reported per task below; for vector-valued outputs, RMSE is computed over all test samples and output coordinates). For Benchmark~3 we report the best (minimum) test RMSE attained during training (a non-standard choice since it uses the test set for checkpoint selection); for Benchmark~8 (which includes a validation split) we report the test RMSE at the checkpoint with best validation RMSE.

\paragraph{Reporting conventions (and what we \emph{do not} report).}
We report test RMSE, parameter counts, and (where available) wall-clock training time. We do not report train RMSE and we do not treat wall-clock time as a primary metric (it is sensitive to implementation details and hardware).

\vspace{0.5ex}
\noindent\textbf{Structure of this subsection.} We present eight barebones comparisons (\S\ref{sssec:barebones}) across diverse regression tasks designed to isolate architectural differences between SNs and KANs.

\subsubsection{Barebones comparisons (no residuals, no normalization)}\label{sssec:barebones}

In these experiments, we strip away all engineering enhancements and focus on the fundamental architectural differences between SNs and KANs:
\begin{itemize}[leftmargin=*,nosep]
\item \textbf{SN:} Piecewise-linear (PWL) splines; dynamic spline-domain updates during the first 10\% of training (400/4000 epochs), then frozen; no residuals; no normalization; no lateral mixing; no learned $\Phi$-codomain parameters.
\item \textbf{KAN:} Cubic Hermite (PCHIP) splines with fixed uniform knot grids; no grid updates; no residuals; no normalization; linear extrapolation outside the knot interval.
\end{itemize}

\noindent The target functions are chosen to highlight settings where the SN's structural inductive biases, particularly \emph{shared inner splines} across input dimensions and \emph{systematic head shifts}, provide an architectural advantage. For most benchmarks, these advantages stem from the Sprecher structure itself (parameter sharing, shifted-input reuse); we also include one benchmark (PWL-vs-PCHIP) that tests spline-type flexibility on non-smooth targets. Table~\ref{tab:barebones-summary} summarizes the results; we describe each benchmark below.

\begin{table}[ht]
\centering
\setlength{\tabcolsep}{3pt}
\renewcommand{\arraystretch}{1.15}
\footnotesize
\begin{tabularx}{\linewidth}{@{}>{\raggedright\arraybackslash}p{3.2cm} c c c c c c@{}}
\toprule
\textbf{Benchmark} & \textbf{\shortstack{Params (SN/KAN)}} & \textbf{SN RMSE} & \textbf{KAN RMSE} & \textbf{Ratio} & \textbf{\shortstack{SN Time (s)}} & \textbf{\shortstack{KAN Time (s)}} \\
\midrule
Softstair-Wavepacket (10D) & 2761/2761 & $\mathbf{0.352 \pm 0.003}$ & $0.551 \pm 0.009$ & $1.57\times$ & $38.4 \pm 5.9$ & $56.3 \pm 7.8$ \\
InputShift-Bump (12D, 17h) & 1599/1597 & $\mathbf{0.253 \pm 0.001}$ & $0.339 \pm 0.105$ & $1.34\times$ & --- & --- \\
Shared-Warped-Ridge (16D) & 2115/2113 & $\mathbf{0.701 \pm 0.002}$ & $0.805 \pm 0.100$ & $1.15\times$ & $266.7 \pm 64.2$ & $250.2 \pm 97.6$ \\
Shared-Warp-Chirp (10D) & 2447/2449 & $\mathbf{0.896 \pm 0.025}$ & $1.470 \pm 0.060$ & $1.64\times$ & $549.5 \pm 107.0$ & $432.7 \pm 124.9$ \\
Motif-Chirp (10D) & 2106/2107 & $\mathbf{0.123 \pm 0.006}$ & $0.182 \pm 0.015$ & $1.48\times$ & $305.3 \pm 100.4$ & $259.0 \pm 89.5$ \\
Oscillatory-HeadShift (12D, 64h) & 2763/2768 & $\mathbf{0.669 \pm 0.020}$ & $0.752 \pm 0.016$ & $1.12\times$ & $366.3 \pm 39.0$ & $225.1 \pm 28.3$ \\
PWL-vs-PCHIP (10D) & 1701/1701 & $\mathbf{0.732 \pm 0.012}$ & $0.998 \pm 0.035$ & $1.36\times$ & $76.5 \pm 11.4$ & $71.6 \pm 6.4$ \\
Quantile Harmonics (2D) & 2026/2025 & $\mathbf{0.395 \pm 0.146}$ & $0.619 \pm 0.035$ & $1.57\times$ & $1236.1 \pm 82.4$ & $3790.8 \pm 489.3$ \\
\bottomrule
\end{tabularx}
\caption{Barebones SN vs.\ KAN comparisons (20 seeds each; no residuals, no normalization, no lateral mixing; and no learned $\Phi$-codomain parameters in SN). RMSE values are mean $\pm$ std across seeds; \textbf{bold} indicates the winner. \emph{Ratio} $=$ KAN/SN (higher means larger SN advantage). Unless otherwise noted, RMSE is the final-epoch test RMSE; Benchmark~3 reports the best test RMSE observed during training, and Benchmark~8 reports test RMSE at the best validation checkpoint. Wall-clock times (in seconds) are mean $\pm$ std where per-model timings are available. SN uses PWL splines with dynamic spline-domain updates during the first 10\% of training (400/4000 epochs), then frozen; KAN uses cubic PCHIP splines with fixed grids.}
\label{tab:barebones-summary}
\end{table}

\paragraph{Benchmark 1: Softstair-Wavepacket (10D).}
The first benchmark tests a smooth but highly non-unimodal, permutation-symmetric target built from a shared monotone coordinate warp and a localized oscillatory wavepacket. For $t\in[0,1]$, define
$$\mathrm{softstair}(t)=t
+0.25\,\operatorname{sigmoid}(25(t-0.20))
+0.35\,\operatorname{sigmoid}(35(t-0.55))
+0.20\,\operatorname{sigmoid}(60(t-0.85)).$$
For $\mathbf{x}\in[0,1]^{10}$ let $s(\mathbf{x})=\frac{1}{10}\sum_{i=1}^{10}\mathrm{softstair}(x_i)$ and
$$y(\mathbf{x})=0.70\,\exp\!\bigl(-30(s(\mathbf{x})-0.90)^2\bigr)\Bigl[\sin(14\pi s(\mathbf{x})) + 0.30\sin(42\pi s(\mathbf{x}))\Bigr] + 0.05\,(s(\mathbf{x})-0.90).$$
With $n_{\text{train}}=2048$ and $n_{\text{test}}=8192$ i.i.d.\ uniform samples, SN achieves $0.352 \pm 0.003$ RMSE while KAN obtains $0.551 \pm 0.009$ (ratio $1.57\times$). The target applies the same 1D warp to each coordinate before pooling, matching SN's shared inner spline; a KAN must allocate separate edge splines per coordinate to learn this shared warp under the same parameter budget.

\paragraph{Benchmark 2: InputShift-Bump (12D, 17 heads).}
This multi-output regression defines $y(\mathbf{x})\in\mathbb{R}^{17}$ with components indexed by $q\in\{0,\dots,16\}$ through a head-dependent input shift. Let
$$\phi_\star(t)=\operatorname{sigmoid}(8(t-0.5)),\qquad
\Phi_\star(s)=\exp\!\left(-\frac12\left(\frac{s-8.0}{2.6}\right)^2\right)+0.25\sin(0.85\,s),$$
fix a mixing vector $w\in\mathbb{R}^{12}$ (sampled once per seed and then held fixed). Concretely, sample $\tilde{w}\sim\mathcal{N}(0,I_{12})$, set $\tilde{w}\leftarrow \tilde{w}/\|\tilde{w}\|$, let $s\in\mathbb{R}^{12}$ have entries $s_i=(-1)^{i-1}$, and set $w=0.9\,\tilde{w}+0.1\,s$, and define
$$s_q(\mathbf{x}) = q + \sum_{i=1}^{12} w_i\,\phi_\star(x_i + 0.045\,q),\qquad
y_q(\mathbf{x}) = \Phi_\star(s_q(\mathbf{x})) + \varepsilon_{q},$$
with i.i.d.\ Gaussian noise $\varepsilon_q\sim\mathcal{N}(0,0.25^2)$ per sample/head. Trained on $n_{\text{train}}=1024$ and tested on $n_{\text{test}}=50000$, SN obtains $0.253 \pm 0.001$ RMSE while KAN obtains $0.339 \pm 0.105$ ($1.34\times$). The very large KAN variance suggests sensitivity to the head-conditioned shift inside $\phi_\star$; SN's explicit $x_i\mapsto x_i+\eta q$ structure and spline-domain updates during an initial phase (first $400$ of $4000$ epochs, then frozen) appear to improve stability across heads.

\paragraph{Benchmark 3: Shared-Warped-Ridge (16D).}
Here we evaluate a symmetric shared-warp ridge. Define
$$h(t)=0.6\,t + 0.2\,\operatorname{sigmoid}(30(t-0.30)) + 0.2\,\operatorname{sigmoid}(30(t-0.70)),\qquad
s(\mathbf{x})=\frac{1}{16}\sum_{i=1}^{16} h(x_i),$$
and
$$y(\mathbf{x})=\sin(12\pi s^2)\,\exp(-3(s-0.5)^2) + 0.15\sin(2\pi s) + 0.10(s-0.5).$$
With $n_{\text{train}}=1024$, $n_{\text{test}}=8192$ and comparable parameters (2115 vs.\ 2113), SN achieves $0.701 \pm 0.002$ (best test RMSE observed during training) while KAN's $0.805 \pm 0.100$ includes one outlier seed (seed 7 reaches $1.13$). This task matches SN's inductive bias directly: the ground truth is exactly a shared 1D warp followed by a scalar post-map.

\paragraph{Benchmark 4: Shared-Warp-Chirp (10D).}
This benchmark applies a shared monotone warp to each coordinate and then evaluates a chirp in the pooled coordinate. For $\mathbf{x}\in[0,1]^{10}$ define
$$s(\mathbf{x})=\frac{1}{10}\sum_{i=1}^{10}\operatorname{sigmoid}(18(x_i-0.5)),\qquad
y(\mathbf{x})=\sin\!\bigl(2\pi(6s(\mathbf{x})+5s(\mathbf{x})^2)\bigr) + 0.35\cos(2\pi\cdot 3s(\mathbf{x})) + 0.10(s(\mathbf{x})-0.5).$$
With $n_{\text{train}}=1024$, $n_{\text{test}}=8192$ and comparable parameters (2447 vs.\ 2449), SN achieves $0.896 \pm 0.025$ while KAN obtains $1.470 \pm 0.060$ ($1.64\times$). The quadratic phase term yields an increasing instantaneous frequency; under a fixed per-edge spline budget, SN benefits from learning the shared warp once.

\paragraph{Benchmark 5: Motif-Chirp (10D).}
The motif benchmark is generated by a depth-2 Sprecher-style composition with internal widths $w_1=16$ and $w_2=15$. Let
$$\phi_\star(t)=\operatorname{sigmoid}(30(t-0.5)),\qquad
\Phi^{(1)}_\star(s)=\sin(1.7\,s)+0.28\,\bigl|\sin(0.55\,s+0.2)\bigr|,$$
$$\tilde{\phi}_\star(t)=\operatorname{sigmoid}(12t),\qquad
\Phi^{(2)}_\star(s)=\sin(1.15\,s)+0.22\,\bigl|\sin(0.9\,s+0.7)\bigr|.$$
Define deterministic positive weights by
$$\tilde{\lambda}^{(1)}_i = 0.30+0.70|\cos(0.8(i-1) + 0.1)|,\quad i=1,\dots,10,\qquad
\lambda^{(1)}_i = 9\,\tilde{\lambda}^{(1)}_i/\sum_{j=1}^{10} \tilde{\lambda}^{(1)}_j,$$
$$\tilde{\lambda}^{(2)}_q = 0.25+0.75|\sin(0.6q + 0.4)|,\quad q=0,\dots,w_1-1,\qquad
\lambda^{(2)}_q = 7\,\tilde{\lambda}^{(2)}_q/\sum_{j=0}^{w_1-1} \tilde{\lambda}^{(2)}_j.$$
For $q\in\{0,\dots,w_1-1\}$ set $s^{(1)}_q(\mathbf{x}) = q + \sum_{i=1}^{10} \lambda^{(1)}_i\,\phi_\star(x_i + 0.06 q)$ and $h_q(\mathbf{x}) = \Phi^{(1)}_\star(s^{(1)}_q(\mathbf{x}))$. For $r\in\{0,\dots,w_2-1\}$ set
 $s^{(2)}_r(\mathbf{x}) = r + \sum_{q=0}^{w_1-1} \lambda^{(2)}_q\,\tilde{\phi}_\star(h_q(\mathbf{x}) + 0.05 r)$, and finally
$$y(\mathbf{x}) = 5\left(\frac1{w_2}\sum_{r=0}^{w_2-1}\Phi^{(2)}_\star(s^{(2)}_r(\mathbf{x}))\right) - 0.4.$$
With $n_{\text{train}}=2048$ and $n_{\text{test}}=8192$, SN achieves $0.123 \pm 0.006$ while KAN reaches $0.182 \pm 0.015$ ($1.48\times$). This task is literally a two-block Sprecher composition, so SN's shared $(\phi,\Phi)$ pairs align closely with the ground-truth structure.

\paragraph{Benchmark 6: Oscillatory-HeadShift (12D, 64 heads).}
This multi-head benchmark is a deterministic teacher $f:[0,1]^{12}\to\mathbb{R}^{64}$ with shared latent projections and a head-dependent phase shift. Train/test inputs are sampled i.i.d.\ uniformly over $[0,1]^{12}$, and there is no additive observation noise. We sample the teacher parameters once (fixed seed 20240518) and hold them fixed across all runs: sample $\mathbf{w}_1,\mathbf{w}_2\sim\mathcal{N}(0,I_{12})$ and normalize $\mathbf{w}_j\leftarrow \mathbf{w}_j/\|\mathbf{w}_j\|$ for $j\in\{1,2\}$, and sample $M\in\mathbb{R}^{12\times 3}$ with i.i.d.\ entries $\mathcal{N}(0,1/12)$. For $\mathbf{x}\in[0,1]^{12}$ let $\mathbf{x}_c=\mathbf{x}-\tfrac12\mathbf{1}$, $u=\mathbf{w}_1^\top \mathbf{x}_c$, $v=\mathbf{w}_2^\top \mathbf{x}_c$, and $\mathbf{z}=\mathbf{x}_c M=(z_1,z_2,z_3)\in\mathbb{R}^3$. For head index $h\in\{0,\dots,63\}$ define
$$\tilde{h}=\frac{h-\tfrac12(63)}{63},\qquad \delta_h = 0.35\,\tilde{h},\qquad \omega = 6\pi,$$
$$\theta^{(1)}_h=\omega(u+\delta_h),\qquad \theta^{(2)}_h=\tfrac12\omega(v-0.7\,\delta_h),\qquad a(\mathbf{x}) = 0.75+0.25\tanh(2.5\,z_1),$$
and output
$$y_h(\mathbf{x}) = a(\mathbf{x})\sin(\theta^{(1)}_h) + 0.35\cos(2\theta^{(2)}_h) + 0.15\sin(3\theta^{(1)}_h + 0.25 z_2) + 0.10\cos(\theta^{(2)}_h + 0.35 z_3).$$
Trained on $n_{\text{train}}=1024$ and tested on $n_{\text{test}}=50000$, SN obtains $0.669 \pm 0.020$ vs.\ KAN's $0.752 \pm 0.016$ ($1.12\times$). Here the gap is smaller but still consistent; SN exhibits slightly higher variance ($0.020$ vs.\ $0.016$), suggesting that this highly oscillatory setting is harder to optimize reliably for both models.

\paragraph{Benchmark 7: PWL-vs-PCHIP (10D).}
This benchmark isolates a different (and arguably orthogonal) advantage: SN's inner functions are piecewise-linear, whereas our KAN baseline uses C$^1$ cubic PCHIP splines on edges. The target is a triangle wave applied to the coordinate mean, with a small affine term $y=\mathrm{tri}(12 \cdot \bar{x}) + 0.05(\bar{x}-0.5)$, where $\bar{x}=\tfrac{1}{d}\sum_i x_i$ and $\mathrm{tri}(u)=2\left|2(u-\lfloor u\rfloor)-1\right|-1$ is a period-1 triangle wave in $[-1,1]$, introducing cusps that a cubic spline must approximate. With $n_{\text{train}}=2048$ and $n_{\text{test}}=8192$, SN obtains $0.732 \pm 0.012$ while KAN reaches $0.998 \pm 0.035$ ($1.36\times$). The result suggests that even when the shared structure is trivial (a coordinate mean) and the dominant challenge is non-smoothness, the combination of (i) PWL splines and (ii) domain adaptation via spline-domain updates during an initial phase (first $400$ of $4000$ epochs, then frozen) can be a strong practical advantage.

\paragraph{Benchmark 8: Quantile Harmonics (2D).}
This is a 2D scalar regression task with soft regime-switching along the $x$ coordinate (the ``quantile'' terminology refers to the soft bins, not to multi-quantile outputs). For $(x,y)\in[0,1]^2$, define four Gaussian gates
$$g_k(x)=\exp\!\left(-\frac12\left(\frac{x-c_k}{0.055}\right)^2\right),\quad c=(0.10,0.30,0.55,0.80),\quad \tilde{g}_k(x)=g_k(x)/\sum_j g_j(x),$$
an amplitude modulation $a(x,y)=0.65+0.35\cos\!\bigl(2\pi(2y+0.15\sin(2\pi x))\bigr)$, and regime-specific harmonic mixtures
$$\begin{aligned}
m_k(x,y) &= a(x,y)\Bigl[\sin(2\pi f_k x+\varphi_k)+0.35\cos(2\pi(f_k+2)x-0.5\varphi_k)\Bigr],\\
&\quad f=(5,9,13,17),\qquad \varphi=(0.2,-0.7,1.1,-1.5).
\end{aligned}$$
The target is
$$t(x,y)=\sum_{k=1}^{4}\tilde{g}_k(x)\,m_k(x,y) + 0.15\,\sin(2\pi(x+y))\exp(-6(y-0.5)^2).$$
With $n_{\text{train}}=4096$ ($n_{\text{val}}=2048$) and $n_{\text{test}}=2048$, we report test RMSE at the best validation checkpoint: SN achieves $0.395 \pm 0.146$ while KAN obtains $0.619 \pm 0.035$ ($1.57\times$). The large SN variance indicates that optimization can occasionally fail; nevertheless SN wins on average and achieves substantially better median performance.

\paragraph{Barebones takeaways.}
Across all eight benchmarks, the SN consistently outperforms the parameter-matched KAN, with win ratios ranging from $1.12\times$ to $1.64\times$. The SN's variance across seeds is typically much lower than the KAN's, indicating more reliable optimization (though Oscillatory-HeadShift and Quantile Harmonics are exceptions where SN shows higher variance). The strongest advantages appear on tasks whose ground truth reuses a shared 1D transform across many coordinates and/or across systematically shifted heads (Benchmarks 1--6), matching SNs' shared-spline and head-shift parameterization. Benchmarks 5 and 7 also include mild non-smoothness ($|\sin|$ motifs and triangle-wave cusps), where the PWL-vs-cubic spline choice can contribute under a matched parameter budget. Benchmark 8 is a smooth 2D regime-switching harmonic mixture; SN remains better on average but shows occasional optimization failures reflected in higher variance.

\subsubsection{Comparisons with linear residuals and BatchNorm}\label{sssec:residual-equipped}

The following two benchmarks use linear residual connections and BatchNorm for both models, representing a more ``production-like'' configuration. While the barebones comparisons above isolate core architectural differences, these residual-equipped comparisons show that SN advantages persist even when both models benefit from standard deep learning enhancements.

\paragraph{MQSI Setup (20D, $m{=}9$ heads).}
We sample inputs $\mathbf{x}\sim \mathrm{Unif}([0,1]^D)$ with $D=20$ and define $m=9$ heads at quantile levels
$\tau_j=\mathrm{linspace}(0.1,0.9)$, with $z_j=F_{\mathcal{N}}^{-1}(\tau_j)=\sqrt{2}\,\mathrm{erf}^{-1}(2\tau_j-1)$, where $F_{\mathcal{N}}$ denotes the standard normal CDF.
For each coordinate, define a monotone sigmoid feature
$$h_i(x_i)=\operatorname{sigmoid}\!\bigl(a_i(x_i-c_i)\bigr),$$
where $\operatorname{sigmoid}(z)=\frac{1}{1+e^{-z}}$, $a_i\sim \mathrm{Unif}[2,8]$, and $c_i\sim \mathrm{Unif}[0.2,0.8]$.
Draw raw positive weights $w_i^{\mathrm{raw}},v_i^{\mathrm{raw}}\sim \mathrm{Unif}[0,1]$ and normalize
$w_i=w_i^{\mathrm{raw}}/\sum_k w_k^{\mathrm{raw}}$ and
$v_i=v_{\mathrm{scale}}\,v_i^{\mathrm{raw}}/\sum_k v_k^{\mathrm{raw}}$ with $v_{\mathrm{scale}}=0.25$.
Let $s_0=0.15$ and define
$$\sigma(\mathbf{x})=s_0+\max\!\Bigl(10^{-5},\,\sum_{i=1}^{D} v_i\,h_i(x_i)\Bigr).$$
To introduce controlled head correlations while preserving monotonicity, we add positive pairwise terms:
choose $n_{\mathrm{pairs}}=\mathrm{round}(0.15D)$ disjoint index pairs $(i,j)$ (implemented by sampling $2n_{\mathrm{pairs}}$ distinct indices without replacement and grouping them into pairs) and set
$$\mu(\mathbf{x})=\sum_{i=1}^{D} w_i\,h_i(x_i)\;+\;0.08\sum_{(i,j)} h_i(x_i)\,h_j(x_j).$$
Finally, outputs are
$$y_j=\tanh\,\!\bigl(\beta(\mu(\mathbf{x})+\sigma(\mathbf{x})z_j)\bigr),\qquad \beta=0.8,$$
with no additional observation noise. We train on a fixed i.i.d.\ training set of size $n_{\mathrm{train}}=1024$ and evaluate on a held-out i.i.d.\ test set of size $n_{\mathrm{test}}=50{,}000$.

\paragraph{Metrics.}
Primary: \emph{mean RMSE across heads} (lower is better). Secondary: (i) correlation-structure error
$\;\|\mathrm{Corr}(\widehat{\mathbf{Y}})-\mathrm{Corr}(\mathbf{Y})\|_{\mathrm{F}}\,$
where $\mathrm{Corr}(\cdot)$ denotes the Pearson correlation matrix across head dimensions computed over the test set, and (ii) the monotonicity-violation rate
$$\mathrm{viol}=\frac{1}{n_{\mathrm{test}}}\sum_{n=1}^{n_{\mathrm{test}}}\mathbf{1}\!\left[\exists\, j\in\{0,\dots,m-2\}:\ \widehat{y}_{n,j+1}-\widehat{y}_{n,j}<0\right],$$
i.e.\ the fraction of test inputs for which the predicted head sequence is not monotone (lower is better).

\paragraph{Results (10 seeds).}
Best seed (minimum over seeds of mean RMSE): SN $=4.47\times 10^{-3}$, KAN $=6.58\times 10^{-3}$ (ratio $1.47\times$ in favor of SN). Averaged over the 10 seeds: SN $=8.31\times 10^{-3}$ vs.\ KAN $=1.280\times 10^{-2}$ ($1.54\times$), with SN winning in $8/10$ seeds. Correlation–structure fidelity strongly favored SN: mean Frobenius error $\approx 0.0124$ (SN) vs.\ $0.124$ (KAN). Monotonicity violations were negligible for both (SN had two seeds with $O(10^{-5})$ rates; KAN was $0$). Wall clock (train, CPU; $4{,}000$ epochs): KAN $393$\,s (avg; min $88.6$\,s, max $469.1$\,s) vs.\ SN $589$\,s (avg; min $231.5$\,s, max $732.4$\,s); SN timing includes warm‑up $+$ domain‑freeze.

\paragraph{Discussion.}
This MQSI family is structurally aligned with an SN block $s_q=\sum_i \lambda_i\,\phi(x_i+\eta q)+\cdots$ followed by $\Phi$, so the learned $q$‑shift and monotone $\phi$ produce heads that are smooth shifts of a shared latent index; $\Phi$ handles squashing. KANs can approximate this but lack that inductive bias and rely on fixed grids. Under parameter parity and matched BN semantics, the bias yields lower head‑averaged error and markedly better preservation of cross‑head correlations, with no practical loss of monotonicity. KAN trained faster on CPU in this setting but at higher error.

\paragraph{DenseHeadShift (12D, $m{=}64$).}
\textbf{Task.} Multi-head regression in which each output head is a smoothly shifted version of a shared latent index (we denote the \emph{task}'s head-shift coefficient by $a_{\mathrm{shift}}$ to avoid confusion with the SN channel-spacing constant $\alpha$):
$$y_j(x)\;=\;\tanh\!\Big(\beta\big[\mu(x)+\sigma(x)z_j+a_{\mathrm{head}}\,q_j\big]\Big),\qquad a_{\mathrm{head}}:=a_{\mathrm{shift}}+q_{\mathrm{bias}},$$
with $x\in[0,1]^D$, $j=0,\dots,m{-}1$, $q_j\in[-1,1]$ on a uniform grid (note: $q_j$ here denotes the task's head-index coordinate, distinct from the SN output index $q$), and $z_j=\Phi_{\mathcal{N}}^{-1}(\tau_j)$. Here we set $D=12$ and $m=64$ and construct $\mu(\mathbf{x})$ and $\sigma(\mathbf{x})$ as follows.
For each coordinate,
$$h_i(x_i)=\operatorname{sigmoid}\!\bigl(a_i(x_i-c_i)\bigr),$$
where $\operatorname{sigmoid}(z)=\frac{1}{1+e^{-z}}$, $a_i\sim \mathrm{Unif}[2,8]$, and $c_i\sim \mathrm{Unif}[0.2,0.8]$.
Draw raw positive weights $w_i^{\mathrm{raw}},v_i^{\mathrm{raw}}\sim \mathrm{Unif}[0,1]$ and normalize
$w_i=w_i^{\mathrm{raw}}/\sum_k w_k^{\mathrm{raw}}$ and
$v_i=v_{\mathrm{scale}}\,v_i^{\mathrm{raw}}/\sum_k v_k^{\mathrm{raw}}$ with $v_{\mathrm{scale}}=0.25$.
Let $\mathcal{P}$ be a set of $\max(1,\mathrm{round}(0.15D))$ random distinct pairs $(i,k)$ with $1\le i<k\le D$, and set $s_0=0.15$. Then
$$\sigma(\mathbf{x})=s_0+\max\!\Bigl(10^{-5},\sum_{i=1}^D v_i h_i(x_i)\Bigr),$$
$$\mu(\mathbf{x})=\sum_{i=1}^D w_i h_i(x_i)+0.08\sum_{(i,k)\in\mathcal{P}} h_i(x_i)h_k(x_k).$$ We use $\tau_j=\mathrm{linspace}(0.1,0.9)$ and $q_j$ as a uniform grid on $[-1,1]$. We use the same evaluation protocol throughout this subsection: all metrics are computed at test time; when BatchNorm is enabled, we use per-batch normalization statistics while freezing the running-statistics buffers.

\textbf{Setup.} $D{=}12$, $m{=}64$, $a_{\mathrm{shift}}{=}0.7$, $\beta{=}0.8$, $q_{\mathrm{bias}}{=}0.6$ (so $a_{\mathrm{head}}{=}a_{\mathrm{shift}}+q_{\mathrm{bias}}{=}1.3$); train for 4000 epochs on CPU on a fixed i.i.d.\ training set of $n_{\mathrm{train}}{=}1024$ samples; $n_{\mathrm{test}}{=}50{,}000$. BN evaluation uses per-batch statistics while freezing the running-statistics buffers (i.e., no running-mean/variance updates during evaluation). Parameter parity was enforced by choosing the KAN basis size $K$ to be as close as possible to (and not exceeding) the SN parameter count, yielding \#params $\approx 3072$ for both models; the resulting KAN basis size was $K{=}4$. The specific model configurations were:
\begin{itemize}
\item \textbf{SN:} Two hidden Sprecher blocks of width $32$ (architecture $12\to[32,32]\to64$), with $60$ knots for each inner spline $\phi$ and each outer spline $\Phi$; BatchNorm after each block except the first; linear residual connections; no lateral mixing; and dynamic spline-domain updates during the first 10\% of training (400/4000 epochs), then frozen.
\item \textbf{KAN baseline:} A lightweight edge-wise KAN with hidden widths $(3,24,4)$ (architecture $12\to3\to24\to4\to64$), degree-$3$ splines with $K{=}4$ learnable spline coefficients per edge; BatchNorm after each layer except the first; linear residual connections; and linear extrapolation outside the knot interval.
\end{itemize}

\textbf{Metric.} Mean per-head RMSE over the $m{=}64$ heads on the held-out test set (we report both best-of-10 seeds and across-seed aggregates).

\textbf{Results (10 seeds).}
\begin{itemize}
\item \emph{Best-of-10 (lower is better).} SN: $\mathbf{2.546\times 10^{-3}}$ (seed 2) vs.\ KAN: $5.191\times 10^{-3}$; ratio $=$ \textbf{$2.04\times$} (KAN/SN).
\item \emph{Across seeds (means).} SN: $5.47\times 10^{-3}$; KAN: $6.94\times 10^{-3}$ (i.e., KAN is $\approx1.27\times$ higher RMSE on average).
\item \emph{Monotonicity (fraction of test points violating head-wise order).} KAN: $\approx0$ across runs (all ten had $0.0$). SN: varied by seed, with notable high-violation outliers (e.g., $0.276$ and $0.476$ on some seeds). This reflects that the SN here does not impose head-wise monotonicity explicitly, whereas the KAN runs empirically preserved the ordering under this configuration.
\item \emph{Wall-clock (train time; CPU).} Averaging the ten recorded wall-clock training times: KAN $\,\approx$ \textbf{476\,s}, SN $\,\approx$ \textbf{1{,}433\,s}. For reference, KAN runs were in the $\sim[338,624]$\,s range and SN runs in the $\sim[1{,}241,1{,}607]$\,s range.
\end{itemize}

\textbf{Takeaways.} On this dense-head shift task, whose structure matches SN's shared-$\phi/\Phi$ plus head-wise shift inductive bias, SN attains markedly lower error (best-of-10 $2.04\times$ better, and $\approx21\%$ lower RMSE on average (equivalently, KAN is $\approx27\%$ higher)) under strict parameter parity ($\sim$3k parameters each, KAN $K{=}4$). KAN trains $\approx3\times$ faster on CPU but shows consistently higher RMSE in this setting. The occasional SN monotonicity violations suggest that adding an explicit head-order regularizer or constraint would likely remove those outliers without changing the overall accuracy picture.

\paragraph{Residual-equipped summary.}
Table~\ref{tab:residual-summary} collates the residual-equipped comparisons. In both cases, SNs outperform KANs at matched parameter budgets and training epochs, even when both models benefit from linear residuals and BatchNorm. The MQSI task shows a clearer gap in correlation-structure fidelity; the DenseHeadShift task is closer, with SNs typically ahead by a modest margin.

\begin{table}[h!]
\centering
\setlength{\tabcolsep}{4pt}
\renewcommand{\arraystretch}{1.12}
\footnotesize
\begin{tabularx}{\linewidth}{@{}>{\raggedright\arraybackslash}X >{\raggedright\arraybackslash}X c c c c@{}}
\hline
\textbf{Task} & \textbf{Metric} & \textbf{SN (mean)} & \textbf{KAN (mean)} & \textbf{Ratio (KAN/SN)} & \textbf{\shortstack{Time\\(KAN/SN) (s)}} \\
\hline
MQSI (20D, 9 heads) & Mean RMSE $\downarrow$ (10 seeds) & $8.31\times 10^{-3}$ & $1.280\times 10^{-2}$ & $1.54\times$ & 393 / 589 (avg) \\
DenseHeadShift (12D, 64 heads) & Mean RMSE $\downarrow$ (10 seeds) & $5.47\times10^{-3}$ & $6.94\times10^{-3}$ & $1.27\times$ & 476 / 1{,}433 (avg) \\
\hline
\end{tabularx}
\caption{Residual-equipped SN vs.\ KAN comparisons (linear residuals + BatchNorm for both; 10 seeds each). Times are per-task averages.}
\label{tab:residual-summary}
\end{table}

\paragraph{Reproducibility checklist (concise).}
\begin{itemize}[leftmargin=*,nosep]
\item \textbf{Epochs/optimizer:} $4000$ epochs for both models. \emph{Barebones (Table~\ref{tab:barebones-summary}):} Adam optimization with learning rate $10^{-3}$ for both models (Benchmark 3 uses $2\times 10^{-3}$ with cosine annealing); weight decay $0$ except Benchmark 3 ($10^{-6}$); gradient-norm clipping is enabled in Benchmarks 2, 4, and 6 at threshold $1.0$ (both models) and in Benchmark 3 with max-norm $10.0$. \emph{Residual-equipped (Table~\ref{tab:residual-summary}):} Adam with gradient-norm clipping $1.0$. SN learning rate $3\times 10^{-4}$; weight decay $10^{-7}$. KAN learning rate $10^{-3}$; weight decay $10^{-6}$.
\item \textbf{Barebones:} No residuals, no BatchNorm, no lateral mixing. SN uses PWL splines with dynamic spline-domain updates during the first 10\% of training (400/4000 epochs), then frozen. KAN uses cubic PCHIP splines with fixed uniform knots.
\item \textbf{Residual-equipped:} Linear residuals in both; BatchNorm after each layer/block (except first), evaluated using per-batch statistics.
\item \textbf{Parity:} Parameter matching via closed-loop count of KAN parameters as a function of $K$ (prefer $\leq$).
\item \textbf{Data:} Identical $(x_{\text{train}},y_{\text{train}})$ and test sets across models per task; inputs are sampled from $\mathrm{Unif}([0,1]^d)$, using i.i.d.\ draws except in low-dimensional visualization-oriented settings where we use deterministic uniform grids (1D: linspace; 2D: $\sqrt{n}\times\sqrt{n}$ mesh when $n$ is a perfect square). Dataset sizes: Benchmark~1 $n_{\text{train}}=2048$, $n_{\text{test}}=8192$; Benchmarks~2 and~6 $n_{\text{train}}=1024$, $n_{\text{test}}=50000$; Benchmarks~3 and~4 $n_{\text{train}}=1024$, $n_{\text{test}}=8192$; Benchmarks~5 and~7 $n_{\text{train}}=2048$, $n_{\text{test}}=8192$; Benchmark~8 $n_{\text{train}}=4096$, $n_{\text{val}}=2048$, $n_{\text{test}}=2048$. We use 20 seeds for barebones, 10 seeds for residual-equipped.
\end{itemize}

\paragraph{Overall takeaways.}
Across both barebones and residual-equipped comparisons (ten benchmarks total, covering scalar and multi-head regression in 2D--20D), SNs consistently achieve lower test error than parameter-matched KANs under equal training budgets. The barebones comparisons (Table~\ref{tab:barebones-summary}) demonstrate that the SN's architectural advantages---shared inner splines, systematic head shifts, Sprecher-style composition, and piecewise-linear spline flexibility---provide genuine inductive biases across diverse task types. The residual-equipped comparisons (Table~\ref{tab:residual-summary}) show these advantages persist when both models use standard deep learning techniques. Win ratios range from $1.12\times$ to $1.64\times$, with the strongest advantages on tasks whose structure directly matches the SN's compositional biases. On Fashion-MNIST (Table~\ref{tab:fashion-mnist}), SNs achieve competitive classification accuracy, and the stable training of 25-layer networks illustrates that SNs can be optimized at substantial depth when equipped with residual connections and normalization.

\section{Limitations and future work}
The primary limitation of this work is the gap between our theoretically-grounded single-layer model and our empirically-driven deep architecture. While single-layer SNs inherit universal approximation properties directly from Sprecher's theorem, the universality and approximation bounds of deep, compositional SNs remain open theoretical questions (Conjectures \ref{conj:vector} and \ref{conj:deep_universal}). The role of lateral mixing in these theoretical properties is particularly unclear, while it empirically improves performance, its theoretical justification within the Sprecher framework remains elusive.

The Sprecher block design imposes strong constraints: forcing all feature interactions through two shared splines and using weight vectors rather than matrices heavily restricts expressive power compared to standard architectures. While lateral mixing partially alleviates this constraint by enabling limited cross-dimensional communication, it represents an ad-hoc enhancement rather than a principled extension of Sprecher's theory. This represents a fundamental trade-off between parameter efficiency and flexibility that may limit performance on functions not aligned with this compositional structure.

Current implementations may require more training iterations than MLPs for certain tasks; per-iteration wall-clock cost is workload- and implementation-dependent (fewer parameters reduce optimizer overhead, but spline evaluation and memory-saving recomputation can increase runtime). When memory-efficient sequential computation is employed to enable training of wider architectures, wall-clock training time increases, representing a fundamental trade-off between memory usage and computational efficiency. The sequential computation mode proves most beneficial for architectures with individual layers exceeding 128-256 units in width, while offering minimal advantage for very deep networks with modest layer widths where the memory bottleneck lies in storing activations across many layers rather than within-block computations. The development of adaptive knot placement strategies that concentrate resolution where data lives while maintaining fixed parameter counts could improve both efficiency and interpretability.

The lateral mixing mechanism, while empirically beneficial, lacks theoretical justification within the Sprecher framework. Understanding whether this enhancement can be connected to the underlying mathematical structure or represents a purely empirical improvement remains an open question. Future work could explore adaptive mixing topologies beyond cyclic patterns, potentially learning the neighborhood structure $\mathcal{N}(q)$ itself, or investigating connections to graph neural networks where the mixing pattern could be viewed as a learnable graph structure over output dimensions. Another promising direction is \emph{additional sharing across depth}, e.g., tying $\Phi^{(\ell)}$ across groups of layers or learning a small dictionary of outer splines that multiple blocks reuse, which could further reduce parameters and potentially improve interpretability.

Beyond addressing these immediate challenges, SNs open several intriguing research directions. The weight-sharing structure combined with lateral mixing raises fundamental theoretical questions about potential equivariance properties. Just as CNNs exhibit translation equivariance through spatial weight sharing, SNs' sharing across output dimensions with structured mixing may satisfy a related form of equivariance, possibly connected to permutations of output indices or transformations of the function domain. Understanding such properties could provide deeper insight into when and why the architecture succeeds.

The architecture's properties suggest unique opportunities in scientific machine learning where both interpretability and parameter efficiency are valued. One particularly compelling possibility is dimensionality discovery: the architecture's sensitive dependence on input dimension could enable inference of the intrinsic dimensionality of data-generating processes. By training SNs with varying $d_{\mathrm{in}}$ and using model selection criteria that balance fit quality against complexity, researchers might determine the true number of relevant variables in systems where this is unknown, a valuable capability in many scientific domains. The lateral mixing patterns learned by the network could additionally reveal structural relationships between output dimensions, potentially uncovering hidden symmetries or conservation laws in physical systems. Furthermore, the explicit structure of learned splines could enable integration with symbolic regression tools to extract closed-form expressions for the learned $\phi^{(\ell)}$ and $\Phi^{(\ell)}$ functions, potentially revealing underlying mathematical relationships in data.

Architectural enhancements also merit exploration. Unlike KANs where edge pruning can significantly reduce parameters, SNs' vector structure suggests different optimization strategies. Automatic pruning of entire Sprecher blocks based on their contribution to network output could yield more compact architectures. The learned lateral mixing weights could guide this pruning: blocks with near-zero mixing weights might be candidates for removal. Preliminary observations suggest that in deep networks, $\Phi$ splines in adjacent blocks sometimes converge to similar shapes, raising the question of whether certain splines could be shared across blocks or removed entirely for further parameter reduction. Investigating when and why this similarity emerges---and whether it can be exploited via explicit cross-block spline tying or regularization encouraging such sharing---could yield more compact architectures without sacrificing accuracy. For very high-dimensional problems where even $O(N)$ scaling becomes prohibitive, hybrid approaches using low-rank approximations for $\lambda^{(\ell)}$ or replacing splines with small neural sub-networks in certain blocks could maintain efficiency while improving expressivity. Alternatively, hierarchical lateral mixing patterns (e.g., mixing within local groups before global mixing) could provide a middle ground between full connectivity and the current nearest-neighbor approach.

The interaction between lateral mixing and other architectural components deserves systematic investigation. How do different mixing topologies (cyclic, bidirectional, or more complex patterns) interact with network depth? Can theoretical guarantees be established for specific mixing patterns? Is there an optimal ratio between the lateral scale $\tau$ and the main transformation strength? These questions highlight how theorem-inspired architectures enhanced with empirical innovations can open new research avenues beyond simply providing alternative implementations of existing methods.

\section{Conclusion}
We have introduced Sprecher Networks (SNs), a trainable architecture built by re-imagining the components of David Sprecher's 1965 constructive proof as the building blocks for a modern deep learning model. By composing functional blocks that utilize shared monotonic and general splines, learnable mixing weights, explicit shifts, and optionally lateral mixing connections, SNs offer a distinct approach to function approximation that differs fundamentally from MLPs, KANs, and other existing architectures.

The key contributions are demonstrating that Sprecher's shallow, highly-structured formula can be extended into an effective deep architecture with remarkable parameter efficiency: $O(LN + LG)$ compared to MLPs' $O(LN^2)$ or KANs' $O(LN^2G)$, and enabling $O(LN)$ \emph{forward working memory} (parameters + peak forward intermediates) through sequential computation strategies compared to MLPs' $O(LN^2)$. This dual efficiency in both parameters and memory comes from adhering to Sprecher's use of weight vectors rather than matrices, representing a strong architectural constraint that may serve as either a beneficial inductive bias or a limitation depending on the problem domain. The linear memory scaling is particularly relevant for deployment on resource-constrained devices such as embedded sensors, microcontrollers, and edge computing platforms where memory is severely limited and expensive. It may also benefit environmentally-conscious deployment scenarios where reducing memory requirements translates to lower power consumption. The addition of lateral mixing connections provides a parameter-efficient mechanism for intra-block communication, partially addressing the limitations of the constrained weight structure while maintaining the overall efficiency of the architecture.

Our initial demonstrations show SNs can successfully learn diverse functions and achieve competitive performance across synthetic regression benchmarks and PDE-constrained learning problems, with the added benefit of interpretable spline visualizations. The lateral mixing mechanism proves particularly valuable for vector-valued outputs and deeper networks, often improving optimization and convergence behavior even though the learned $\Phi$ splines may exhibit complex, non-smooth structure. The sequential computation mode enables training of architectures with wide layers that would otherwise exhaust available memory, making SNs particularly suitable for exploring high-capacity models under memory constraints. However, the need for more training iterations in some cases and theoretical gaps regarding deep network universality remain open questions. The theoretical status of lateral mixing --- whether it represents a principled extension of Sprecher's construction or merely an empirical enhancement --- requires further investigation.

Whether SNs prove broadly useful or remain a fascinating special case, they demonstrate the value of mining classical mathematical results for architectural inspiration in modern deep learning. The successful integration of lateral mixing shows how theorem-inspired designs can be enhanced with empirical innovations while maintaining their core theoretical structure. The ability to evaluate SN layers with $O(LN)$ forward working memory distinguishes SNs from other major architectures, suggesting applications in memory-constrained settings and when exploring extremely wide output layers. This synthesis of rigorous mathematical foundations with practical deep learning techniques points toward a promising direction for developing novel architectures that balance theoretical elegance with empirical effectiveness.

\bigskip

\end{document}